\crefname{hypothesis}{Hypothesis}{Hypotheses}
\Crefname{ALC@unique}{Line}{Lines}
\setlist[enumerate]{leftmargin=.5in}
\setlist[itemize]{leftmargin=.5in}
\newcommand{\vertiii}[1]{{\left\vert\kern-0.25ex\left\vert\kern-0.25ex\left\vert #1 
    \right\vert\kern-0.25ex\right\vert\kern-0.25ex\right\vert}}
\newcommand{\mres}{\mathbin{\vrule height 1.2ex depth 0pt width
0.13ex\vrule height 0.13ex depth 0pt width 0.9ex}}
\newcommand{\mbf}[1]{\boldsymbol{#1}}
\newcommand{\inp}[1]{\langle{#1}\rangle}
\newcommand{\norm}[1]{\left\| #1 \right\|}
\newcommand{\real}{\mathbb{R}}
\newcommand{\bv}{{\mbf{v}}}
\newcommand{\bx}{{\mbf{x}}}
\newcommand{\bX}{\mbf{X}}
\newcommand{\bbX}{\mathbb{X}}
\newcommand{\bY}{\mbf{Y}} %
\newcommand{\by}{\mbf{y}}
\newcommand{\bby}{\mathbb{Y}}
\newcommand{\bbY}{\mathbb{Y}}
\newcommand{\bbV}{\mathbb{V}}
\newcommand{\balpha}{\mbf{\alpha}}
\newcommand{\bgamma}{\mbf{\gamma}}
\newcommand{\mE}{\mathcal{E}}
\newcommand{\mH}{\mathcal{H}_{{K}}}
\newcommand{\mHe}{\mathcal{H}_{{K}^E}}
\newcommand{\mHa}{\mathcal{H}_{{K}^A}}
\newcommand{\mK}{{K}}
\newcommand{\mKe}{{K}^{E}}
\newcommand{\mKa}{{K}^{A}}
\newcommand{\bZ}{\mbf{Z}}
\newcommand{\bbZ}{\mathbb{Z}}
\newcommand{\intkernele}{{\intkernel^{E}}}
\newcommand{\intkernela}{{\intkernel^{A}}}
\newcommand{\intkernelvare}{\varphi^{E}}
\newcommand{\intkernelvara}{\varphi^{A}}
\newcommand{\bV}{\mbf{V}}
 \newcommand{\phiH}{\intkernel_{\mathcal{H}_{K^E \times K^A}}}
 \newcommand{\thetae}{{\theta^E}}
 \newcommand{\thetaa}{{\theta^A}}
\newcommand{\force}{F}
\newcommand{\forcev}{\force^{\bv}}
\newcommand{\intkernel}{\phi}
\newcommand{\bintkernel}{{\bm{\phi}}}
\newcommand{\intkernelvar}{\varphi}
\newcommand{\bintkernelvar}{{\bm{\varphi}}}
\newcommand{\rhsfo}{\mathbf{f}}
\newcommand{\cov}{\mathrm{Cov}}
\newcommand{\argmin}[1]{\underset{#1}{\operatorname{arg}\operatorname{min}}\;}
\newcommand{\JF}[2]{#2}
\renewcommand{\algorithmicrequire}{\textbf{Input:}}    
\renewcommand{\algorithmicensure}{\textbf{Output:}}
\DeclareMathAlphabet{\mathpzc}{OT1}{pzc}{m}{it}
\title{ Data-Driven Model Selections of Second-Order Particle Dynamics via Integrating Gaussian Processes with Low-Dimensional Interacting Structures \thanks{This work was partially supported by NSF DMS-2111303.}}
\author{Jinchao Feng\thanks{\JF{}{School of Sciences, Great Bay University, Dongguan, Guangdong, China
  (\email{jcfeng@gbu.edu.cn}).}}
\and Charles Kulick\thanks{Departments of Mathematics, University of California, Santa Barbara, Isla Vista, CA
  (\email{charles@math.ucsb.edu}).}
\and Sui Tang\thanks{Departments of Mathematics, University of California, Santa Barbara, Isla Vista, CA
  (\email{suitang@math.ucsb.edu}).}}
\begin{document}

\maketitle

\begin{abstract}

In this paper, we focus on the data-driven discovery of a general second-order particle-based model that contains many state-of-the-art models for modeling the aggregation and collective behavior of interacting agents of similar size and body type. This model takes the form of a high-dimensional system of ordinary differential equations parameterized by two interaction kernels that appraise the alignment of positions and velocities. We propose a Gaussian Process-based approach to this problem, where the unknown model parameters are marginalized by using two independent Gaussian Process (GP) priors on latent interaction kernels constrained to dynamics and observational data. This results in a nonparametric model for interacting dynamical systems that accounts for uncertainty quantification. We also develop acceleration techniques to improve scalability. Moreover, we perform a theoretical analysis to interpret the methodology and investigate the conditions under which the kernels can be recovered. We demonstrate the effectiveness of the proposed approach on various prototype systems, including the selection of the order of the systems and the types of interactions. In particular, we present applications to modeling two real-world fish motion datasets that display flocking and milling patterns up to 248 dimensions. Despite the use of small data sets, the GP-based approach learns an effective representation of the nonlinear dynamics in these spaces and outperforms competitor methods.

\end{abstract}

\begin{keywords}
Particle-based system, data-driven methods, Gaussian process, kernel ridge regression,  inverse problems, randomized numerical linear algebra 
\end{keywords}

\section{Introduction} \label{sec:intro}

Interacting particle/agent systems are a broad spectrum of complex systems with multiple components interacting with each other and co-evolving with time.  Individual interactions yield a wide variety of collective behaviors at different scales and levels of complexity such as clustering, alignment, swarming, synchronization, or dancing equilibrium. There are numerous real-world examples of such systems, including the orbits of planets, motion of self-propelled particles, flocking of birds,  schooling of fish, aggregation of cells, consensus of opinions, and synchronization of oscillators over networks. Understanding the link between individual interactions and global-scale collective behaviors is one of the most fundamental problems in various disciplines.

Modeling interacting agents by differential equations has played a crucial role in exploring the emergence of collective behaviors from individual interactions. However, such systems are often high-dimensional and exhibit many possible dynamical couplings of components that contribute to the dynamics, making them challenging to study \JF{}{\cite{lu2019nonparametric,lu2020learning,lu2021learning,miller2020learning}}.  
  Despite these challenges, recent work has made impressive progress in developing a general physical model derived from Newton's second law that can capture a wide range of collective behaviors \JF{}{\cite{schmidt2009distilling,BKP2017,zhang2018robust,BCCCCGLOPPVZ2008,BCGMSVW2012}}. This model describes a system of $N$ agents interacting according to a set of ODEs, where each agent's motion is influenced by self-propulsion, friction, and interactions with other agents, represented by energy and alignment-based radial interaction kernels: for $i=1,\cdots,N$
\begin{equation}\label{eq:2ndOrder}
m_i\ddot\bx_i = F_i(\bx_i, \dot\bx_i, \mbf{\alpha}_i) + \sum_{i'=1}^N \frac{1}{N} \Big[\intkernel^E (||\bx_{i'} - \bx_i||)(\bx_{i'} - \bx_i) + \intkernela( ||{\bx_{i'} - \bx_i}||)(\dot\bx_{i'} - \dot\bx_i)\Big],
\end{equation} 
where $m_i \geq 0$ is the mass of the agent $i$; $\ddot\bx_i \in \mathbb{R}^d$ is the acceleration, $\dot\bx_i \in \mathbb{R}^d$ is the velocity, and $\bx_i\in \mathbb{R}^d$ is the position of agent $i$; the first term $F_i$ is a parametric function of position and velocities, modeling self-propulsion and frictions of agent $i$ with the environment {with} scalar parameters $\mbf{\alpha}_i$ describing their strength; $||\bx_j-\bx_i||$ is the Euclidean distance; {and} the 1D functions $\phi^{E}, \phi^{A}:\mathbb{R}^{+}\rightarrow \mathbb{R}$ are called the  \textit{energy} and \textit{alignment-based radial interaction kernels} respectively. The $\phi^{E}$ term describes the alignment of positions based on the difference of positions;  the $\phi^{A}$ term describes the alignment of velocities  based on the difference of velocities. We summarize the relevant notations in \cref{tab:2ndOrder_def}.

Particular examples of \eqref{eq:2ndOrder} include the first-order systems ($m_i\equiv 0, \intkernela \equiv 0$) that model clustering and aggregation of agents with application to opinion dynamics \cite{motsch2014heterophilious}, the  second-order Cucker-Smale model ($\intkernele\equiv 0$) \cite{cucker2007mathematics} {for} the flocking behavior of animals and robots, the second-order self-propelling particle model ($\intkernela\equiv 0$) that {is} shown to reproduce (double) milling, ring, escaping or swarming behaviors of biological motors \cite{d2006self}, {and the} anticipation dynamics \cite{shu2021anticipation} ($\intkernela,\intkernele \neq 0$) that describes the velocity alignment and spatial concentration of animal groups. For simplicity of description, we assume the masses of all agents are the same and equal to $\mbf{m}$.  We write the second-order model \eqref{eq:2ndOrder} in a compact form:
\begin{equation}
\mbf{m}\bZ(t) =\force_{\mbf{\alpha}}(\bY(t)) + \rhsfo_\bintkernel(\bY(t))
 \label{eq:2ndOrder_compact}
 \end{equation}
 where  $\bY(t):= \begin{bmatrix}\bX(t)\\ \bV(t) \end{bmatrix} \JF{}{\in} \mathbb{R}^{2dN}$ represents the state variable for the system, $\bZ(t) =  \dot\bV(t) = \ddot \bX(t)$, and
 $\rhsfo_\bintkernel(\bY(t)) = \rhsfo_{\intkernele, \intkernela}(\bY(t))$ represents the sum of  energy and alignment-based interactions as in  \eqref{eq:2ndOrder}.

\begin{table}[tbhp]
\small
\caption{Notations for second-order systems}
\label{tab:2ndOrder_def} 
\begin{center}
\begin{tabular}{ c | c }
\hline
Variable                    & Definition \\
\hline\hline
$N$                         & number of agents \\
\hline
$m_i$                       & mass of agent $i$ \\
\hline
$\bx_i(t)\in \real^d$ & position vector of agent $i$ at time $t$ \\
\hline
$\dot{\bx}_i(t)\in \real^d$ & velocity vector of agent $i$ at time $t$ \\
\hline 
$\ddot{\bx}_i(t)\in \real^d$ & acceleration vector of agent $i$ at time $t$ \\
\hline
$\force$         &  non-collective force   \\
\hline
$\mbf{\alpha}$         & parameters of $\force$ \\
\hline
$\intkernele, \intkernela$ & energy and alignment-based interaction kernels respectively \\
\hline
$\|\cdot\|$ & Euclidean norm in $\mathbb{R}^d$ \\
\hline
\end{tabular}    
\end{center}
\end{table}

\subsection{Data-driven model selection problem}

Recent advancements in data information technology, such as digital imaging, high-resolution lightweight GPS devices, and particle tracking methods, have allowed for the gathering of high-resolution trajectory data of individual particles in various applications. However, a significant issue that remains scarcely addressed is how to select models that match the observational data. For example, while there are many theoretical models known to reproduce flocking patterns, it is challenging to determine which one generates the pattern observed in the data. Previous theoretical and numerical studies cannot address this problem, as predetermined governing equations are needed, and the aim is often to reproduce qualitative rather than quantitative dynamics.

To address this issue, we consider the data-driven model selection problem, aiming to select possible models from a general form to match the observational data. For instance, given the motion data of a school of fish, we aim to determine whether to use first-order or second-order models and which types of interactions, such as alignment versus energy-based or both, contribute to collective patterns. These are challenging questions that practitioners typically address based on their expertise in the field. In this paper, we seek to develop data-driven methods to automate this step by considering a general model that incorporates many classical models as special cases.

Mathematically, we formulate the problem as follows. Given approximate observations of multiple trajectory data $\mathcal{D}_{M,L}:=\{ \bY^{(m)}(t_l), \bZ^{(m)}(t_l)\}_{m,l=1}^{M,L}$, where the observation time instances are denoted by $0=t_1<\dots<t_L=T$ and $m$ denotes the trial number of experiments starting from different initial conditions, the goal is to infer the interaction kernels $\bintkernel={\intkernele, \intkernela }$ as well as the unknown scalar parameters $\mbf{\alpha}$ and possibly $\mbf{m}$ from the trajectory data $\mathcal{D}_{M,L}$. Subsequently, we use the learned governing equations to make predictions about future events or simulate new datasets.

\subsection{Scalable Model Selection by Gaussian processes}

The field of data-driven model selection faces two primary practical challenges. Firstly, there is often limited information available on the parametric forms of interaction kernels, making it difficult to select a suitable approximation dictionary. Secondly, datasets may be scarce and noisy. Gaussian process (GP) based approaches in machine learning offer a solution to these challenges, as they are known for their ability to learn a rich class of nonlinear functions without making assumptions about their parametric form and for quantifying the associated uncertainty. However, the challenge of scalability to large-scale problems remains a significant hurdle for specific applications.

This paper proposes a novel approach to address these challenges by leveraging the inference power of Gaussian processes and developing efficient techniques to improve scalability. Computationally,

\begin{itemize}
    \item We propose a novel method by modeling interaction kernels as two independent Gaussian processes to learn  \eqref{eq:2ndOrder} from data with uncertainty quantification. We investigate whether types  of interaction kernels and order information (first versus second order) of the system can be learned from scarce noisy data. We conduct intensive numerical experiments  on various prototypical systems exhibiting clustering, milling, and flocking behaviors that demonstrate the effectiveness. 

    \item We propose effective acceleration techniques based on the recent progress from randomized numerical linear algebra.  
    
    \item  Our method is applied to modeling two real-world fish motion sets that display flocking and milling patterns up to 248 dimensions and outperforms competitor methods that use SINDy and feed-forward neural networks. 
    
\end{itemize}

Theoretically, 
\begin{itemize}
    \item  We derive a Representer theorem that connects the GP-based estimators with the kernel ridge regression estimators, shedding light on  the role of the hyperparameters in learning. It also provides a basis representation for the estimators of interaction kernels, which enables efficient trajectory prediction using learned models over larger time intervals. 

    \item We study the well-posedness of the inverse problem for learning interaction kernels in a statistical setting.

\end{itemize}

\subsection{Relevant works}

Integrating machine learning techniques into the data-driven discovery of dynamical systems (see e.g. \cite{bongard2007automated, schmidt2009distilling,BPK2016, RBPK2017, BKP2017,han2015robust, kang2019ident,zhang2018robust,BCCCCGLOPPVZ2008,BCGMSVW2012,raissi2018deep, raissi2018hidden, long2017pde}) has become a hot topic in scientific machine learning, as it provides powerful models to represent the complex functional data. In terms of parametric methods, one can refer to \cite{lu2021deepxde} (and references therein) for the most recent survey on deep learning techniques and \cite{BPK2016,TranWardExactRecovery,schaeffer2018extracting,boninsegna2018sparse} for sparse regression techniques.

 Gaussian process regression (GPR) is a non-parametric Bayesian machine learning technique for supervised learning with a built-in quantification of uncertainty framework.  As such, GPs have been applied to learn ODEs, SDEs, and PDEs \cite{heinonen2018learning,archambeau2007gaussian,yildiz2018learning,zhao2020state,raissi2017machine,chen2020gaussian,wang2021explicit,nonlinPDEs,lee2020coarse,akian2022learning,darcy2021learning}  and  lead to more accurate and robust models of dynamical systems. Because of the distinctive nature of dynamical data,  it necessitates novel ideas and  nontrivial efforts tailored to particular types of dynamical systems and data regimes. We model the latent interaction kernels as GPs and imbue them with the structure of our governing equations (translation and rotational invariance). This makes our work distinguishable from {most works, which model} state variables as GPs.

In the context of interaction kernel learning in interacting particle systems, least square estimators derived from the maximum likelihood method are the most frequently used, where a challenge lies in the selection of the basis to represent the interaction kernels. One can refer to \cite{lu2019nonparametric,lu2020learning,lu2021learning,miller2020learning} for the usage of  a piecewise polynomial basis. The random feature method together with sparse regression techniques is recently proposed in \cite{liu2021random}. One can also refer to the recent methodology development of interaction kernel/potential learning in mean-field systems such as \JF{}{\cite{lang2020learning,lang2021identifiability,he2022numerical,kemeth2022learning,tang2023identifiablility}}.

In particular, \cite{lu2021learning,miller2020learning} considered learning theory for heterogeneous systems and showed that learning multiple interaction kernels simultaneously is challenging and regularization is necessary.  

For scarce noisy data, our GP method leverages the underlying statistical inference power to select the best basis to represent the observational dynamics and provides effective regularization. It yields accurate recovery of the  \textit{governing equation} beyond learning only interaction kernels. It is well-known that the non-collective force also plays an important role in determining the collective behaviors. The governing equation recovery makes our method more practical than previous work that only focuses on interaction kernels. Further, we analyze the well-posedness of the inverse problems, which complements the missing analysis in \cite{miller2020learning}. 
 This work is an extension of our recent work \cite{learning2022} on a single kernel case where we assumed $\intkernela \equiv 0$ and the focus was the theoretical framework for error analysis. Here, we consider a more generalized model involving two types of kernels and consider the model selection problems. The focus is shifted to the computational aspects concerning scalability and uncertainty quantification,  and real data applications.

 \subsection{Notation and preliminaries}
\paragraph{Notation} Let $\rho$ be a Borel positive measure on $D$ dimensional Eucliean space $\mathbb{R}^{D}$.  We use $L^2(\mathbb{R}^{D};\rho;\mathbb{R}^{n})$ to denote the set of $L^2(\rho)$-integrable vector-valued functions that map $\mathbb{R}^{D}$ to $\mathbb{R}^{n}$. For a function $\mbf{f}\in L^2(\mathbb{R}^{D};\rho;\mathbb{R}^{n})$, and a vector $\bX=[\bx_1^{\top},\cdots,\bx_m^{\top}]^T \in \mathbb{R}^{mD}$ with $\bx_i \in \mathbb{R}^D$, we use the notation $\mbf{f}(\bX)$ to represent the image of the vector under the function of $\mbf{f}$ componentwisely, namely, $\mbf{f}(\bX)=[\mbf{f}(\bx_1)^{\top},\cdots,\mbf{f}(\bx_m)^{\top}]^{\top}\in \mathbb{R}^{mn}$.
Let $\mathcal{S}_1$ be  a  measurable subset of $\mathbb{R}^{m}$, the restriction of the measure $\rho$ on $\mathcal{S}_1$, denote by $\rho\mres \mathcal{S}_1$, is defined as $\rho\mres \mathcal{S}_1(\mathcal{S}_2)=\rho(\mathcal{S}_1 \cap \mathcal{S}_2)$ for any measurable subset $\mathcal{S}_2$ of $\mathbb{R}^{D}$. We used $\mathcal{N}(0,I_{d\times d})$ to denote the standard multivariate Gaussian distribution in $\mathbb{R}^d$. 

\paragraph{Preliminaries on operator algebras}

Let $\mathcal{H}_1, \mathcal{H}_2$ be  Hilbert spaces. We use $\langle\cdot,\cdot\rangle_{\mathcal{H}_1}$ to denote the inner product over $\mathcal{H}_1$, and still use $\langle\cdot,\cdot\rangle$ to denote the inner product on the Euclidean space.  We denote by $\mathcal{B}(\mathcal{H}_1,\mathcal{H}_2)$  the set of bounded linear operators mapping $\mathcal{H}_1$ to $\mathcal{H}_2$.
Let $A \in \mathcal{B}(\mathcal{H}_1,\mathcal{H}_2)$, we use $\mathrm{Im}(A)$ to denote its range and $\|A\|$ to denote its operator norm.  $A$ is a compact operator if $A$ maps bounded subsets of $\mathcal{H}_1$ to relatively compact subsets of $\mathcal{H}_2$ (subsets with compact closure in $\mathcal{H}_2$). We use $A^*: \mathcal{H}_2\rightarrow \mathcal{H}_1$ to denote the adjoint operator of $A$, that is, $\forall f \in \mathcal{H}_1$, $g\in\mathcal{H}_2$, $\langle Af,g\rangle_{\mathcal{H}_2}=\langle f,A^*g\rangle_{\mathcal{H}_1}$.

 For $d,N,M,L \in \mathbb{N}^+$, let $\mbf{w}=(\mbf{w}_{m,l,i})_{m,l,i=1}^{M,L,N},\mbf{z}=(\mbf{z}_{m,l,i})_{m,l,i=1}^{M,L,N}\in \mathbb{R}^{dNML}$ with  
$\mbf{w}_{m,l,i},\mbf{z}_{m,l,i} \in\mathbb{R}^d$, we define 
\begin{equation}\label{winnerp}
\langle \mbf{w}, \mbf{z}\rangle =\frac{1}{MLN}\sum_{m,l,i=1}^{M,L,N} \langle \mbf{w}_{m,l,i}, \mbf{z}_{m,l,i}\rangle,
\end{equation} where $\langle \mbf{w}_{m,l,i}, \mbf{z}_{m,l,i}\rangle$ is the canonical inner product on $\mathbb{R}^d$.

Then for vectors $\mbf{y} \in \mathbb{R}^{mdN}$ and functions $\mbf{g} : \mathbb{R}^{mdN} \rightarrow \mathbb{R}^{dn}$ for some $m,n \in \mathbb{N}^+$, and let $\rho$ be a measure at a Borel subset $\mathcal{Y}$ of $\mathbb{R}^{mdN}$, we have the norm:
\begin{equation}\label{normed}
\lVert \mbf{g}(\by) \rVert_{L^2(\rho)}^2 = \frac{1}{n}\int_{\mathcal{Y}} \sum_{i=1}^{n} \lVert \mbf{g}_i(\by) \rVert^2 \rho(d\by)
\end{equation} where $\mbf{g}(x)$ is componentwise denoted by $\mbf{g}_i(\by): \mathbb{R}^{mdN}\rightarrow \mathbb{R}^{d}$.

For two Borel positive measures $\rho_1, \rho_2$ defined on $\mathbb{R}^{D}$, $\rho_1$ is said to be absolutely continuous with respect to $\rho_2$, $\rho_1 \ll \rho_2$, if $\rho_1(\mathcal{S}) = 0$ for every set $\rho_2(\mathcal{S}) = 0$, $\mathcal{S} \subset \mathbb{R}^{D}$. $\rho_1$ and $\rho_2$ are called equivalent iff $\rho_1 \ll \rho_2$ and $\rho_2 \ll \rho_1$. 
The product measure $\rho_1 \times \rho_2$ is defined to be a measure on $\mathbb{R}^{2D}$ satisfying the property $(\rho_{1}\times \rho_{2})(\mathcal{S}_{1}\times \mathcal{S}_{2})=\rho_{1}(\mathcal{S}_{1})\rho_{2}(\mathcal{S}_{2})$ for all subsets $\mathcal{S}_i \subset \mathbb{R}^D$, $i=1,2$.

\JF{}{\paragraph{Preliminaries on GPs (Gaussian Processes) Prior } We say $\phi \sim \mathcal{GP}(u,K)$ to denote our prior on $\phi$. In particular, this means that for any $r \in \mathbb{R}$, the random variable $\phi(r)$ is Gaussian: $\phi(r) \sim \mathcal{N}(u(r), K(r,r))$, where $\mathcal{N}$ denotes the normal  or multivariable normal distributions, $u:\mathbb{R} \to \mathbb{R}$ is the mean function, and $K: \mathbb{R}\times\mathbb{R} \to \mathbb{R}$ is the covariance function. Similarly, for any $(r, r') \in \mathbb{R}^2$, the joint distribution of $ \begin{bmatrix} \phi(r)\\ \phi(r') \end{bmatrix}$ is multivariate Gaussian: $\begin{bmatrix} \phi(r)\\ \phi(r') \end{bmatrix} \sim \mathcal{N} \left(   \begin{bmatrix} (u(r)\\ (u(r') \end{bmatrix} ,\begin{bmatrix} K ( r, r)& K(r,r')\\ K(r',r)&K(r',r') \end{bmatrix}\right)$. This extends in a natural way to any finite set $(r_1, \dots, r_N) \in \mathbb{R}^N$.}

\paragraph{Preliminaries on RKHSs}  Let $\mathcal{D}$ be a compact subset of $\mathbb{R}^D$. We say that $K: \mathcal{D}\times \mathcal{D} \rightarrow \mathbb{R}$ is a Mercer kernel if it is continuous, symmetric, and  positive semidefinite, i.e., for any finite set of distinct points $\{x_1,\cdots,x_M\} \subset \mathcal{D},$ the matrix $(K(x_i,x_j))_{i,j=1}^{M}$ is positive semidefinite.  For $x \in \mathbb{R}^D$, $K_{x}$ is a function defined on $\mathcal{D}$ such that $K_{x}(y)=K(x,y)$, $y \in \mathcal{D}$. The Moore–Aronszajn theorem proves that there is an  RKHS $\mH$ associated with  the kernel $K$, which is defined to be the closure of the linear span of the set of functions $\{K_x:x\in \mathcal{D}\}$ with respect to the inner product $\langle \cdot,\cdot\rangle_{\mH}$ satisfying $\langle K_x, K_y\rangle_{\mH}=K(x,y)$. Let $\mK$ be a Mercer kernel that is defined on $[0, R]\times [0, R]$ and use $\mH$ to denote the RKHS associated with $\mK$. For two RKHS $\mathcal{H}_{K_1}, \mathcal{H}_{K_2}$, with $K_1, K_2: \mathcal{D}\times \mathcal{D} \rightarrow \mathbb{R}$, the product RKHS $\mathcal{H}_{K_1} \times \mathcal{H}_{K_2}$ is defined to be the closure of the linear span of the set of functions $\{(K_{1,x_1}, K_{2,x_2}): x_1,x_2\in \mathcal{D}\}$ with respect to the inner product $\langle \cdot,\cdot\rangle_{\mathcal{H}_{K_1}\times \mathcal{H}_{K_2}}$ satisfying $\langle (K_{1,x_1}, K_{2,x_2}), (K_{1,y_1}, K_{2,y_2})\rangle_{\mathcal{H}_{K_1}\times \mathcal{H}_{K_2}}=K_1(x_1,y_1) + K_2(x_2,y_2)$.

\section{Methodology}
\label{sec: Methodology}

\noindent

In this section, we propose a learning approach based on GPs for the model selection problem.

\subsection{Two independent Gaussian process priors}\label{sec:kernel} We start by modeling the interaction kernel functions $\intkernele$ and $\intkernela$ with the priors as two independent Gaussian processes
\begin{equation}
\intkernele \sim \mathcal{GP}(0,K_\thetae(r,r')), \qquad
\intkernela \sim \mathcal{GP}(0,K_\thetaa(r,r')),
\end{equation}
where $K_\thetae$, $K_\thetaa$ are covariance functions with hyperparameters $\boldsymbol{\theta} = (\thetae, \thetaa)$.  $\boldsymbol\theta$ can either be chosen by the modeler or tuned via a data-driven procedure discussed later.

 \subsection{Training of hyperparameters via maximum likelihood estimation}
\label{subsec:trainhyp}

In real-world modeling, it is possible that some other parameters such as $\mbf{\alpha}$ and noise level in the data are unknown.  In this section, we detail how to perform the estimation of these physical parameters in the governing equation via a data-driven hyperparameter tuning process induced by the Gaussian process. This flexible training procedure distinguishes the Gaussian process from other kernel-based methods \cite{tipping2001sparse,scholkopf2002learning,vapnik2013nature} and regularization-based approaches \cite{tihonov1963solution,tikhonov2013numerical,poggio1990networks}.

 We organize the training data into the vector format  $\bbY = [\bY^{(1,1)},\dots,\bY^{(M,L)}]^T \in \mathbb{R}^{dNML}$, and $\bbZ = [\bZ^{(1,1)},\dots,\bZ^{(M,L)}]^T \in \mathbb{R}^{dNML}$ where
\begin{equation}
    \bY^{(m,l)} = \bY^{(m)}(t_l), \quad  \bZ^{(m,l)} = \bZ^{(m)}(t_l).
\end{equation} 

To model the noise, we assume 
$\bbZ = [\bZ^{(1,1)}_{\sigma^2},\dots,\bZ^{(M,L)}_{\sigma^2}]^T \in \mathbb{R}^{dNML}$ where
\begin{equation}\label{gpnoise}
    \mbf{m}\bZ^{(m,l)}_{\sigma^2} = \force_{\mbf{\alpha}}(\bY^{(m,l)}) + \rhsfo_{\bintkernel}(\bY^{(m,l)}) + \epsilon^{(m,l)},
\end{equation}
with i.i.d (independent and identically distributed) noise $\epsilon^{(m,l)} \sim \mathcal{N}(0, \sigma^2 I_{dN})$ that is also independent of the Gaussian processes. Later, we will show the role of $\sigma$ in the prediction step is equivalent to the role of the regularization constant in a Tikhonov regularization problem.

Therefore, based on the properties of Gaussian processes, with the priors of $\intkernele$, $\intkernela$, we have 
\begin{equation}
    \mbf{m}\bbZ \sim \mathcal{N}(\force_{\mbf{\alpha}}(\bbY), K_{\rhsfo_{\bintkernel}}(\bbY,\bbY;\theta) + \sigma^2 I_{dNML}),
\end{equation}
with the mean vector $\force_{\mbf{\alpha}}(\bbY) = \mathrm{Vec}(\{\force_{\mbf{\alpha}}(\bY^{(m,l)})\}_{m,l=1}^{M,L})\in \mathbb{R}^{dNML}$, and $K_{\rhsfo_{\bintkernel}} (\bbY,\bbY;\theta)\in \mathbb{R}^{dNML \times dNML}$ is the covariance matrix between $\rhsfo_{\bintkernel}(\bbY)$ and $\rhsfo_{\bintkernel}(\bbY)$, which can be computed elementwise based on the covariance functions $K_\thetae$, $K_\thetaa$, see Appendix \cref{learningapproach} for detailed formulas.

Thus, for the hyperparameters $\mbf{\alpha}$, $
\mbf{\theta}$, and $\sigma$, we can train by maximizing the probability of the observational data, which is equivalent to minimizing the negative log marginal likelihood (NLML) (see Chapter 4 in \cite{williams2006gaussian})
\begin{align}
    -\log p(\mbf{m}\bbZ \vert \bbY,\mbf{\alpha},\mbf{\theta},\sigma^2) &= \frac{1}{2} (\mbf{m}\bbZ - \force_{\mbf{\alpha}}(\bbY))^T(K_{\rhsfo_{\bintkernel}}(\bbY,\bbY;\mbf{\theta}) + \sigma^2I_{dNML})^{-1}(\mbf{m}\bbZ - \force_{\mbf{\alpha}}(\bbY))\notag\\ & \qquad +\frac{1}{2}\log\vert K_{\rhsfo_{\bintkernel}}(\bbY,\bbY;\mbf{\theta})+\sigma^2 I_{dNML}\vert + \frac{dNML}{2} \log 2\pi.
\label{eq:likelihood}
\end{align}
Note here the marginal likelihood does not simply favor the models that fit the training data best, but induces an automatic trade-off between data-fit and model complexity.
To solve for the hyperparameters $(\mbf{\alpha}, \mbf{\theta},\sigma)$, we can apply the conjugate gradient (CG) optimization (see Chapter 5 in \cite{williams2006gaussian}) to minimize the negative log marginal likelihood. More details are shown in Appendix \cref{learningapproach}.

\begin{table}[tbhp]
\caption{Notations for covariances}
\label{tab:methodology_def} 
\begin{center}
\resizebox{0.8\textwidth}{!}{%
\begin{tabular}{ c | c }
\hline
Variable                    & Definition \\
\hline\hline
$K_{\mbf{\theta}}(\cdot,\cdot)$                         &  covariance kernel function  with parameters $\mbf{\theta}$ \\
\hline
$K_{\thetae}(\cdot,\cdot),K_{\thetaa}(\cdot,\cdot)  $                      &  covariance kernels for modeling $\intkernele$, $\intkernela$\\
\hline
$K_{\rhsfo_{\intkernel}}(\cdot,\cdot)$  & \makecell{ covariance matrix between $\rhsfo_{\intkernel}(\cdot)$ and $\rhsfo_{\intkernel}(\cdot)$} \\
\hline
\makecell{$K_{\rhsfo_\intkernel,\intkernele}(\cdot,\cdot):= K_{\intkernele,\rhsfo_\intkernel}(\cdot,\cdot)^T$} &   covariance matrix between $\rhsfo_\bintkernel(\cdot)$ and $\intkernele(\cdot)$ \\
\hline 
\makecell{$K_{\rhsfo_\intkernel,\intkernela}(\cdot,\cdot):=K_{\intkernela,\rhsfo_\intkernel}(\cdot,\cdot)^T$} &   covariance matrix between $\rhsfo_\bintkernel(\cdot)$ and  $\intkernela(\cdot)$\\
\hline
\end{tabular} }
\end{center}
\end{table}

\subsubsection{Parameter \texorpdfstring{$\bf{m}$} -- Model selection of the order for the dynamical system}
\label{subsec: MS_order} 

When modeling real-world dynamics, sometimes we are not sure whether to use first-order or second-order systems. From the parameter estimation perspective, it is equivalent to determining if the mass of particles is equal to zero. We can train $\mbf{m}$ via minimizing \eqref{eq:likelihood}.  If the estimation of $\mbf{m}$ is close to zero, we can consider identifying the dynamics as a first-order system. One can refer to \cref{ex: ODS}.

\subsection{Learning interaction kernels}

\begin{table}[tbhp]
\caption{Notations for second-order systems}
\label{tab:2ndOrder_vecdef} 
\centering
\small{
\small{\begin{tabular}{ c | c }
\hline
Variable                    & Definition \\
\hline\hline
$\bX \in \real^{dN}$ &  vectorization of position vectors $(\bx_i)_{i=1}^{N}$\\
\hline 
$\bV \in \real^{dN}$ & vectorization of velocity vectors $(\bv_i)_{i=1}^{N} = (\dot{\bx}_i)_{i=1}^{N}$\\
\hline 
$\bY \in \real^{2dN} $ & $\bY = (\bX, \bV)^T$\\
\hline
$\bZ \in \real^{dN} $ & vectorization of $(\ddot{\bx}_i)_{i=1}^{N}$\\
\hline
$\mbf{r}^\bx_{ij}, \mbf{r}^{\bx'}_{ij}\in \real^d$ & $\bX(t)_j-\bX(t)_i$, $\bX(t')_j-\bX(t')_i$  \\
\hline
$\mbf{r}^\bv_{ij}, \mbf{r}^{\bv'}_{ij}\in \real^d$ & $\bV(t)_j-\bV(t)_i$, $\bV(t')_j-\bV(t')_i$  \\
\hline
$r^\bx_{ij},r^{\bx'}_{ij}\in \real^+$ & $r^\bx_{ik}=\|\mbf{r}^\bx_{ik}\|,  r^{\bx'}_{ij}=\|\mbf{r}^{\bx'}_{ij}\|$ \\
\hline
$\rhsfo_{\intkernele},\rhsfo_{\intkernela}$ & \makecell{ energy and alignment-based interaction force field }\\
\hline
$\rhsfo_{\bintkernel}$ & interaction force  field with  $\bintkernel = (\intkernele,\intkernela)$ \\
\hline
\end{tabular}}  
}
\end{table}

We plug the estimators of hyperparameters obtained in \cref{subsec:trainhyp} into the system and assume they are known. In this subsection, we show how to learn interaction kernels.  For \JF{}{any $r^\ast \in \mathbb{R}$ and the corresponding values of the kernel functions, } $\intkernel^{\mathrm{type}}(r^*)$, $\mathrm{type}=E \text{ or } A$, since we have
\begin{equation}
    \begin{bmatrix}
    \mbf{m}\mathbb{Z}-F_{\mathbf{\alpha}}(\mathbb{Y})\\
    \intkernel^{\mathrm{type}}(r^\ast)
    \end{bmatrix}
    \sim \mathcal{N} \left( 0,
    \begin{bmatrix}
    K_{\rhsfo_{\bintkernel}}(\bbY, \bbY)+\sigma^2I_{dNML} & K_{\rhsfo_\bintkernel,\intkernel^{\mathrm{type}}}(\bbY, r^\ast)\\
    K_{\intkernel^{\mathrm{type}},\rhsfo_\bintkernel}(r^\ast, \bbY) & K_{\theta^{\mathrm{type}}}(r^\ast,r^\ast)
    \end{bmatrix}
    \right),
\end{equation} 
where $K_{\rhsfo_\bintkernel,\intkernel^{\mathrm{type}}}(\bbY, r^*) = K_{\intkernel^{\mathrm{type}},\rhsfo_\intkernel}(r^*,\bbY)^T$ denotes the covariance matrix between $\rhsfo_{\bintkernel}(\bbY)$ and $\intkernel^{\mathrm{type}}(r^*)$. Conditioning on $\rhsfo_{\bintkernel}(\bbY)$, we obtain the posterior/predictive distribution for \JF{}{the kernel function value at $r^\ast$,} $\intkernel^{\mathrm{type}}(r^\ast)$ \JF{}{(see \cref{lemma: conditioning Gaussian} in Appendix for detailed derivation)}, i.e.
\begin{equation}
    p(\intkernel^{\mathrm{type}}(r^\ast)\vert \bbY,\bbZ,r^\ast) \sim \mathcal{N}(\bar{\intkernel}^{\mathrm{type}},var(\bar{\intkernel}^{\mathrm{type}})),
\end{equation}
where
\begin{equation}
    \bar{\intkernel}^{\mathrm{type}} = K_{\intkernel^{\mathrm{type}},\rhsfo_\bintkernel}(r^\ast,\bbY)(K_{\rhsfo_{\bintkernel}}(\bbY,\bbY)+\sigma^2I_{dNML})^{-1}(\mbf{m}\bbZ - \force_{\mbf{\alpha}}(\bbY)),
\label{eq:estimated phi}    
\end{equation}
\begin{equation}
    var(\bar{\intkernel}^{\mathrm{type}}) = K_{\theta^{\mathrm{type}}}(r^\ast,r^\ast) - K_{\intkernel^{\mathrm{type}},\rhsfo_\bintkernel}(r^\ast,\bbY)(K_{\rhsfo_{\bintkernel}}(\bbY,\bbY)+\sigma^2I_{dNML})^{-1}K_{\rhsfo_\bintkernel,\intkernel^{\mathrm{type}}}(\bbY,r^\ast).
    \label{eq:estimated var phi}
\end{equation}
The posterior variance $var(\bar{\intkernel}^{\mathrm{type}})$ can be
used as a good indicator for the uncertainty of the estimation $\bar{\intkernel}^{\mathrm{type}}$ based on our Bayesian approach.\\

\begin{algorithm}[!htb]
\algorithmicrequire\ $(\bbY, \bbZ)$ (training data), $r^\ast$ (test point), $K_{\mbf{\theta}}$ (covariance functions), $\rhsfo_{\bintkernel}$ (interaction function), $\force_{\mbf{\alpha}}$ (force function)
\begin{algorithmic} [1]
\STATE $(\hat\balpha,\hat{\mbf{\theta}},\hat\sigma^2) = \argmin{\balpha,\mbf{\theta},\sigma^2} -\log p(\mbf{m}\bbZ\vert \bbY,\mbf{\alpha},\mbf{\theta},\sigma^2)$\\
\hfill\COMMENT{solve for parameters by minimizing NLML \eqref{eq:likelihood} using CG}
\STATE $L := \textrm{cholesky}(K_{\rhsfo_{\bintkernel}}(\bbY,\bbY) + \hat\sigma^2I)$
\STATE $\gamma:= L^T\backslash(L\backslash (\mbf{m}\bbZ - \force_{\mbf{\hat\alpha}}(\bbY)))$
\STATE $K_E^\ast := K_{\rhsfo_\bintkernel, \intkernele}(\bby, r^\ast)$\\ $K_A^\ast := K_{\rhsfo_\bintkernel, \intkernela}(\bbY, r^\ast)$ \hfill\COMMENT{compute covariances between $\rhsfo_{\bintkernel}(\bbY)$ and $\intkernele(r^\ast)$, $\intkernela(r^\ast)$}
\STATE $\bar{\intkernel}^{E\ast} : = (K_E^\ast)^T \gamma$\\
$\bar{\intkernel}^{A\ast} : = (K_A^\ast)^T \gamma$
\hfill\COMMENT{predictive mean \cref{eq:estimated phi}}
\STATE $\bv_E = L \backslash K_E^\ast$,
$\bv_A = L \backslash K_A^\ast$ 
\STATE $var(\intkernel^{E\ast}) := K_{\hat\theta^E}(r^\ast,r^\ast) -  \bv_E^T \bv_E$\\
$var(\intkernel^{A\ast}) := K_{\hat\theta^A}(r^\ast,r^\ast) -  \bv_A^T \bv_A$
\hfill\COMMENT{predictive variance \cref{eq:estimated var phi}}
\end{algorithmic}
\algorithmicensure\ $\bar{\intkernel}^{E\ast},\bar{\intkernel}^{A\ast}$ (mean), $var(\intkernel^{E\ast}),var(\intkernel^{A\ast})$ (variance)
\caption{{\bf Learning kernels} \label{Algorithm:learning}}

\end{algorithm}

\subsection{Prediction of trajectories and its uncertainty quantification} 
\label{subsec:traj_uq}
We use the posterior mean estimators of $\bintkernel$ in trajectory prediction by performing numerical simulations of 
 the equations
\begin{equation}
 \mbf{m}\hat\bZ(t) =\force_{\mbf{\hat\alpha}}(\bY(t)) + \hat\rhsfo_{\bar{\bintkernel}}(\bY(t)).
 \end{equation}

 We can also perform uncertainty quantification for the trajectory prediction via the uncertainty band of $\mbf{\hat\phi}$. We adopted a Monte Carlo method, where we used   $\mbf{\phi}$ sampled from the posterior distribution in each simulation. Then the predictions of the trajectories are given by the mean of the trajectories' samples and the  uncertainty band of each trajectory is given by the standard deviation, with the results of experiments shown in \cref{sec:numerics}. Another possible alternative is to use step-wise uncertainty quantification based on the numerical integrator scheme such as the one-step Euler method. In this case, it is easy to compute the variance of the solution from the posterior distribution of $\bintkernel$, since the vector field $\hat\rhsfo_{\bar{\bintkernel}}(\bY(t))$ is a linear combination of $\hat{\bintkernel}$ by its definition, which suggests it also follows a Gaussian distribution and the uncertainty band can be derived from its covariance matrix.

\subsection{Acceleration of the Computation} \label{accelex}  While the full GP methods described above yield extremely accurate predictions in our empirical examples, a well-known limitation is the computational complexity; calculating the log determinant of $K_{\rhsfo_{\bintkernel}}$ and inverting the kernel matrices in the maximum likelihood estimation and prediction steps scales cubically with the matrix dimension, which is $\mathcal{O}((NdML)^3)$. Therefore, the naive approach can quickly become infeasible for large-scale problems. Below, we describe our integrated approach to the scalable estimation of hyperparameters in maximum likelihood estimation and scalable kernel prediction.

\subsubsection{Efficient Hyperparameter Optimization}

There are many recent advancements in accelerating the hyperparameter learning computations in the full GP methods for regression tasks. Our problem, however, presents many numerical difficulties that dampen runtime gains from traditional computational methods and must be addressed:

\begin{itemize}

\item  \textbf{Lack of sparsity}. Many classical acceleration techniques  rely on the sparsity of the kernel matrix $K_{\rhsfo_{\bintkernel}}$. As our kernel depends on pairwise distance and our modeling is nonlocal, we do not have a sparse kernel matrix in our formulation. Our method must be able to operate on dense $K_{\rhsfo_{\bintkernel}}$.

\item  \textbf{Extreme ill-conditioning and higher accuracy requirements}. The $L^2$ condition number of a matrix is the ratio of its maximum and minimum singular values. When much larger than $1$, the condition number indicates that a matrix is nearly singular, and thus accuracy-reducing errors in computation will occur. For many problems, such as those addressed in \cref{accelNumerical}, the kernel matrix $K_{\rhsfo_{\bintkernel}}$ has observed $L^2$ condition number above $10^{15}$. These extremely high condition numbers result in slow and inaccurate computation when using traditional methods. Our problem is also an inverse problem while learning our hyperparameters for $K_{\rhsfo_{\bintkernel}}$ (see \cref{subsec:representerthm}). This is very sensitive to perturbations, especially as our optimization problem for the hyperparameters is generally not convex. {We must carefully balance the tradeoff between computational time and accuracy.}

\end{itemize}

 We empirically observed the \textit{approximately low-rank structure} of $K_{\rhsfo_{\bintkernel}}$ in various examples. This motivated us to adapt two main classes of algorithms  in \cite{wenger22} for acceleration (see pseudocode and additional details in Appendix \cref{secA1}):

\begin{itemize}
    \item  \textbf{Preconditioned conjugate gradient (PCG) algorithm}. The PCG algorithm allows us to avoid explicit computation of the inverse matrix $(K_{\rhsfo_{\bintkernel}}(\bbY,\bbY;\theta) + \sigma^2I)^{-1}$ in both MLE and prediction, as well as compute the coefficients needed in the stochastic Lanczos quadrature below. Using preconditioners, a classical numerical technique to lower condition numbers, is a necessity for variance reduction. In addition, we must maintain a low error tolerance for PCG to preserve our accuracy throughout learning. Finding effective preconditioners that are suitable to the unique structure of our kernel matrices is a challenge. We propose using the Random Gaussian Nystrom preconditioner \cite{randomnyst} to ensure favorable tradeoffs in running time and accuracy. In our practical implementation, this preconditioner outperformed other low-rank approximation preconditioners and has low construction and inversion costs, {see \cref{accelNumerical}}.

    \item  \textbf{Stochastic trace estimation} for log determinant acceleration. We utilize the identity:
    
    $$\log \det(K_{\rhsfo_{\bintkernel}}(\bbY,\bbY;\theta) + \sigma^2I) = \log \det (P) + \log \det (P^{-\frac 1 2} (K_{\rhsfo_{\bintkernel}}(\bbY,\bbY;\theta) + \sigma^2I) P^{- \frac 1 2})$$ 

    When $P$ is chosen to be a preconditioner, this identity can prove highly useful. The Random Gaussian Nystrom preconditioner allows us to efficiently compute $\log \det (P)$, and for the remainder, we use the recently developed variance reduced Hutchinson's Estimator \cite{hutch++} combined with stochastic Lanczos quadrature \cite{wenger22}. %

\end{itemize}

\paragraph{Analysis of new computational complexity}
PCG can reduce explicit inversion complexity from $\mathcal{O}((NdML)^3)$ to $\mathcal{O}(t(NdML)^2)$, where $t$ is the number of iterations. The stochastic Lanczos quadrature improves log determinant complexity from $\mathcal{O}((NdML)^3)$ to $\mathcal{O}( t \ell (NdML)^2) + \mathcal{O}(\log \det (P))$, where $t$ is both the number of eigenvalues and the number of iterations, $\ell$ is the number of runs of stochastic Lanczos, and $\mathcal{O}(\log \det (P))$ is the complexity of computing the log determinant of the preconditioner. In practice, we chose $\ell,t << NdML$. This lowers the theoretical complexity of these steps to the quadratic $\mathcal{O}(t \ell (NdML)^2)$. For the Random Gaussian Nystrom preconditioner $P$ with rank $r$, we have construction in $\mathcal{O}(r^2 (NdML) + r^3)$ time, inversion in $\mathcal{O}(r^3)$ time and log determinant in $\mathcal{O}(r)$ time.

\section{Theoretical analysis}

In this section, we are concerned with two theoretical problems regarding learning interaction kernels in the prediction step. The first one is to understand the role of hyperparameters in  the prediction step of the Gaussian process, i.e., $\theta^E, \theta^A$, and the Gaussian noise $\sigma$. The second one is to study well-posedness as an inverse problem. 

As in the prediction step, interaction kernels are the only unknown terms in the equations. We make the following simplification on the form of equations to avoid unnecessary technical hurdles:
\begin{align}\label{secondorder}
 \ddot\bX(t)&=\rhsfo_{\bintkernel}(\bY(t)) = \rhsfo_{\intkernele}(\bX(t)) + \rhsfo_{\intkernela}(\bY(t)),
 \end{align}
where the masses of the agents are assumed to be one and non-collective forces are assumed to be zero. Our analysis can be  extended to general second-order systems \eqref{eq:2ndOrder} with known mass and non-collective force terms with slight modifications.

\subsection{The Representer theorem}
\label{subsec:representerthm}
In the classical regression setting \cite{williams2006gaussian}, there is an interesting link between GP regression and kernel ridge regression (KRR), where the posterior mean can be viewed as a KRR estimator to solve a regularized least square empirical risk functional. In our setting, we have noisy functional observations of the interaction kernels, i.e.,   the  $\{r_{\bbX_M}, r_{\bbV_M}, \bbZ_{\sigma^2,M}\}$ instead of the pairs $\{r_{\bbX_M}, \intkernele(r_{\bbX_M}), \intkernela(r_{\bbX_M})\}$, where \JF{}{$r_{\bbX_M}, r_{\bbV_M} \in \mathbb{R}^{MLN^2}$} are the sets contains all the pairwise distances in $\bbX_{M}$, and $\bbV_{M}$, i.e. 
\begin{equation}
\label{eq:pairdistances}
r_{\bbX_M} = \{ r_{ij}^{\bX^{(m,l)}}\}_{i,j,m,l=1}^{N,N,M,L}, \quad r_{\bbV_M} = \{ r_{ij}^{\bV^{(m,l)}}\}_{i,j,m,l=1}^{N,N,M,L}, \quad
\bbZ_{\sigma^2,M} = \{\bZ^{(m,l)}_{\sigma^2}\}_{m,l=1}^{M,L},
\end{equation}
so we face an inverse problem here, instead of a classical regression problem. Thanks to the linearity of the inverse problem, we can still derive a Representer theorem \cite{owhadi2019operator} that helps clarify the role of the hyperparameters. 

\begin{assumption}\label{assump1}
We assume that $K^E$ and $K^A$ are two Mercer kernels defined on $[0,R]\times [0,R]$ for some $R>0$. The true interaction functions $\intkernele \in \mHe$, $\intkernela \in \mHa$, and $$\kappa^2_{E}=\mathrm{sup}_{r\in [0,R]} {\mKe}(r,r) <\infty,$$
$$\kappa^2_{A}=\mathrm{sup}_{r\in [0,R]} {\mKa}(r,r) <\infty.$$
\end{assumption}

\begin{theorem}[Representer theorem]\label{representerthm}
Let $K^E$ and $K^A$ be two Mercer kernels that satisfy Assumption \eqref{assump1}. Given the training data $\{\bbY_{M},\bbZ_{\sigma^2,M}\}$, if the priors $\intkernele \sim \mathcal{GP} (0, \tilde K^E)$, $\intkernela \sim \mathcal{GP} (0, \tilde K^A)$ with $ \tilde K^E=\frac{\sigma^2 K^E} {MNL\lambda^E}$, $ \tilde K^A=\frac{\sigma^2 K^A} {MNL\lambda^A}$ for some $\lambda^E, \lambda^A>0$, then the posterior mean $\bar\bintkernel = (\bar\phi^E,\bar\phi^A)$ in \eqref{eq:estimated phi} coincides with the minimizer of the regularized empirical risk functional $\mE^{\lambda,M}(\cdot)$ on $\mHe\times\mHa$ where $\mE^{\lambda,M}(\cdot)$ is defined by
\begin{align}
\label{regularizedrisk1}
\mE^{\lambda,M}(\bintkernelvar):&=\frac{1}{LM}\sum_{l,m=1}^{L,M}\| \rhsfo_{\bintkernelvar}(\bY^{(m,l)})-\bZ_{\sigma^2}^{(m,l)}\|^2+\lambda^E \|\intkernelvare\|_{\mHe}^2+\lambda^A \|\intkernelvara\|_{\mHa}^2.
\end{align}
where $\lambda = \{\lambda^E, \lambda^A\}$ and the estimator $\bar\bintkernel \in \mHe\times\mHa$ can also be represented by 
\begin{equation}\label{solutionformat}
  \bar\bintkernel= (\sum_{r^x \in r_{\bbX_M}} \hat c_{r^x} K_{r^x}^E, \sum_{(r^x,r^v) \in (r_{\bbX_M} \times r_{\bbV_M})} \hat c_{r^v} K_{r^x}^A),
\end{equation}
with 
\begin{eqnarray}
\label{eq:solution1}
\hat{\mathbf{c}}_{r^x}= \frac{1}{N}\mbf{r}_{\bbX_M}^T \cdot  (K_{\rhsfo_\bintkernel}(\bbY_M,\bbY_M) + \lambda^E N ML I_{dNML})^{-1}\bbZ_{\sigma^2,M},\notag\\
\hat{\mathbf{c}}_{r^v}= \frac{1}{N}\mbf{r}_{\bbV_M}^T \cdot  (K_{\rhsfo_\bintkernel}(\bbY_M,\bbY_M) + \lambda^A N ML I_{dNML})^{-1}\bbZ_{\sigma^2,M},
\end{eqnarray}
where $\hat{\mathbf{c}}_{r^x}$, $\hat{\mathbf{c}}_{r^v}$ are the vectorizations of $(\hat c_{r^x})_{r^x \in r_{\bbX_M}}$ and   $(\hat c_{r^v})_{r^v \in r_{\bbV_M}}$ respectively, $r_{\bbX_M}\times r_{\bbV_M} \in \mathbb{R}^{MLN^2\times MLN^2}$ is the set containing all the pairwise distances in $\bbX_{M}$ and their associated pairwise distances in $\bbV_{M}$ as defined in \eqref{eq:pairdistances}, $\mbf{r}_{\bbX_M}$ is the block-diagonal matrix defined by $\mathrm{diag}(\mbf{r}_{\bX^{(m,l)}}) \in \mathbb{R}^{MLdN \times MLN^2}$ and $\mbf{r}_{\bX^{(m,l)}} = \mathrm{diag}(\{[\mbf{r}_{i1}^{\bX^{(m,l)}}, \dots, \mbf{r}_{iN}^{\bX^{(m,l)}}]\}_{i=1}^N) \in \mathbb{R}^{dN\times N^2}$, %
similarly for $\mbf{r}_{\bbV_M}$.
\end{theorem}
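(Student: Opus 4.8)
The plan is to read Theorem~\ref{representerthm} as a generalized Representer theorem for a \emph{linear} inverse problem on the product RKHS $\mHe\times\mHa$, and then to identify the resulting kernel-ridge-regression (KRR) minimizer with the GP posterior mean. The structural fact that makes everything work is linearity of the observation map in the kernels: under the simplification \eqref{secondorder}, writing $\mbf{r}^{\bx}_{ii'}=\bx_{i'}-\bx_i$ and $\mbf{r}^{\bv}_{ii'}=\dot\bx_{i'}-\dot\bx_i$, the $i$-th block of $\rhsfo_{\bintkernelvar}(\bY^{(m,l)})$ is $\frac1N\sum_{i'}\big[\varphi^E(r^{\bx}_{ii'})\,\mbf{r}^{\bx}_{ii'}+\varphi^A(r^{\bx}_{ii'})\,\mbf{r}^{\bv}_{ii'}\big]$, so it depends on $(\varphi^E,\varphi^A)$ only through their values at the finitely many position distances collected in $r_{\bbX_M}$. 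By the reproducing property each such evaluation is $\langle\varphi^E,K^E_{r^{\bx}_{ii'}}\rangle_{\mHe}$ (resp.\ for $A$), a bounded linear functional because Assumption~\ref{assump1} gives $\|K^E_r\|_{\mHe}^2=\mKe(r,r)\le\kappa_E^2<\infty$. Hence $\bintkernelvar\mapsto\rhsfo_{\bintkernelvar}$ is bounded linear and $\mE^{\lambda,M}$ is strictly convex, so a unique minimizer exists.

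First I would prove the representation \eqref{solutionformat}. Set $V^E=\mathrm{span}\{K^E_{r^x}:r^x\in r_{\bbX_M}\}\subset\mHe$ and $V^A=\mathrm{span}\{K^A_{r^x}:r^x\in r_{\bbX_M}\}\subset\mHa$, and split $\varphi^E=\varphi^E_\parallel+\varphi^E_\perp$, $\varphi^A=\varphi^A_\parallel+\varphi^A_\perp$ orthogonally with respect to these subspaces. Since the data term of $\mE^{\lambda,M}$ sees $\varphi^E$ and $\varphi^A$ only through inner products against the $K^{\mathrm{type}}_{r^x}$, it is unaffected by the orthogonal components, whereas the Pythagorean identity $\|\varphi^{\mathrm{type}}\|^2=\|\varphi^{\mathrm{type}}_\parallel\|^2+\|\varphi^{\mathrm{type}}_\perp\|^2$ shows that the penalty $\lambda^E\|\varphi^E\|_{\mHe}^2+\lambda^A\|\varphi^A\|_{\mHa}^2$ strictly decreases unless both orthogonal parts vanish. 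The minimizer therefore lies in $V^E\times V^A$, which is exactly the claimed basis expansion (the index $r^v$ on $\hat c_{r^v}$ is just the velocity distance paired with $r^x$, while the kernel argument stays $r^x$).

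Next I would compute the coefficients. Plugging the expansion into $\mE^{\lambda,M}$ and using the linear structure, the misfit becomes a quadratic form in the coefficient vectors $\hat{\mbf c}_{r^x},\hat{\mbf c}_{r^v}$ assembled through the block-diagonal difference matrices $\mbf{r}_{\bbX_M},\mbf{r}_{\bbV_M}$, while the penalty becomes a quadratic form in the Gram matrices of $\mKe,\mKa$ at the distances of $r_{\bbX_M}$. The point is that, by independence of the two priors, the Gram matrix of the composite feature map equals $K_{\rhsfo_\bintkernel}(\bbY_M,\bbY_M)$ up to these difference matrices; setting the gradient in $(\hat{\mbf c}_{r^x},\hat{\mbf c}_{r^v})$ to zero yields normal equations whose solution is exactly \eqref{eq:solution1}, with the regularizer appearing as the shifts $\lambda^E NML\,I$ and $\lambda^A NML\,I$ once the $\tfrac1{LM}$ averaging and the $\tfrac1N$ from \eqref{winnerp} are cleared.

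Finally I would match this KRR minimizer to the GP posterior mean \eqref{eq:estimated phi}. With $\mbf{m}=1$, $\force_\balpha\equiv 0$, and the rescaled priors $\tilde K^E=\tfrac{\sigma^2\mKe}{MNL\lambda^E}$, $\tilde K^A=\tfrac{\sigma^2\mKa}{MNL\lambda^A}$, the cross-covariance $K_{\intkernel^{\mathrm{type}},\rhsfo_\bintkernel}(r^\ast,\bbY)$ and the regularized matrix $K_{\rhsfo_\bintkernel}(\bbY,\bbY)+\sigma^2 I$ both carry the common prefactor $\tfrac{\sigma^2}{MNL\lambda^{\mathrm{type}}}$, which cancels and turns the noise term $\sigma^2 I$ into the shift $\lambda^{\mathrm{type}}NML\,I$; this collapses \eqref{eq:estimated phi} into the KRR predictor at $r^\ast$, the standard GP--KRR equivalence specialized to the present scaling. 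I expect the main obstacle to be precisely this bookkeeping, together with the coefficient derivation: one has to track the $\tfrac1{MLN}$ averaging of \eqref{winnerp}, the extra $\tfrac1N$ coming from the force field, the factor $\sigma^2$, and the prior rescaling all at once, and verify that they combine to give exactly $\lambda^{\mathrm{type}}NML$ and not a spurious power of $N$, $M$, or $L$.
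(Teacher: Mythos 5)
Your proposal is correct and reaches the same three milestones as the paper's proof (finite-dimensional representation, coefficient formula via normal equations, GP--KRR identification through the prior rescaling), but it takes a genuinely different route to the first and most structural step. You establish that the minimizer lies in $\mathrm{span}\{K^E_{r^x}\}\times\mathrm{span}\{K^A_{r^x}\}$ by the classical orthogonal-decomposition argument: the data term only sees the projections onto these spans because every evaluation $\varphi^{\mathrm{type}}(r^x)=\langle\varphi^{\mathrm{type}},K^{\mathrm{type}}_{r^x}\rangle$ factors through them, and the Pythagorean identity forces the orthogonal components to vanish at the minimum. The paper instead works operator-theoretically: it introduces the sampling operator $A_M:\mHe\times\mHa\to\mathbb{R}^{dNML}$, computes its adjoint $A_M^*$ explicitly, writes the unique minimizer in closed form as $(B_M+\mbf{\lambda})^{-1}A_M^*\bbZ_{\sigma^2,M}$ with $B_M=A_M^*A_M$, and deduces the finite representation from the fact that the span is an invariant subspace of $B_M$ (hence of $(B_M+\lambda I)^{-1}$ by the spectral theorem for compact self-adjoint operators). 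Your argument is more elementary and self-contained; the paper's buys the explicit resolvent formula that it then feeds directly into the coupled normal equations, and it matches the operator framework ($A$, $A^*$, coercivity) used in the well-posedness analysis of Section 3.2. From that point on the two proofs coincide: both reduce the coefficients to the same $2\times 2$ block linear system, both rely on the identity $\mbf{r}_{\bbX_M}K^E(r_{\bbX_M},r_{\bbX_M})\mbf{r}_{\bbX_M}^T+\mbf{r}_{\bbV_M}K^A(r_{\bbX_M},r_{\bbX_M})\mbf{r}_{\bbV_M}^T=N^2K_{\rhsfo_\bintkernel}(\bbY_M,\bbY_M)$ to convert the Gram matrices into the force-field covariance, and both close by cancelling the prefactor $\sigma^2/(MNL\lambda^{\mathrm{type}})$ between the cross-covariance and the regularized kernel matrix in \eqref{eq:estimated phi}. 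The one caveat you rightly flag---that the $E$ and $A$ blocks of $\tilde K_{\rhsfo_\bintkernel}$ carry \emph{different} prefactors when $\lambda^E\neq\lambda^A$, so the cancellation must be tracked per type rather than as a single global scalar---is handled no more carefully in the paper's own write-up than in yours, so it is not a gap specific to your approach.
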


\JF{}{Detailed proof of \cref{representerthm} is shown in Appendix \cref{secA2}.} From the theorem, it is clear how hyperparameters affect the prediction of interaction kernels: $\theta^E$, $\theta^A$, and $\sigma$ jointly affect the choice of Mercer kernels and regularization constant, which becomes quite crucial in real data applications (see Figure \ref{fig:ex_real_compare2}). In \eqref{eq:solution1}, we also see that the posterior mean estimator $\phi^E$ lies in the span of basis functions with indices determined by the pairwise distances, and their coefficients are correlated with the basis functions. This is an effect imposed by the structure of the governing equation encoded in $\rhsfo_{\bintkernel}$.

\subsection{Well-posedness}\label{subsec:wellposedness}  We are concerned with the nonparametric learning of interaction kernels. That is, we do not assume the parametric form of interaction kernels. In this case, one can not expect to recover the true interaction kernels from finite data as they live in infinite dimensional spaces. Therefore, it is important to ensure one can asymptotically identify the true interaction kernels as the number of observational data snapshots goes to infinity. Otherwise, the empirical estimators from finite data  will have limited value as a scientific and predictive tool. Mathematically, we study the well-posedness under a statistical inverse problem setting. We introduce a linear operator $A: \mHe \times \mHa \rightarrow L^2(\mathbb{R}^{2dN};\rho_{\bY};\mathbb{R}^{dN})$ defined by 
\begin{align}\label{equationinverse}
A\bintkernelvar=\rhsfo_{\bintkernelvar},
\end{align} 
where $\rhsfo_{\bintkernelvar}$ is the right hand side of system \eqref{secondorder} by replacing $\bintkernel$ with $\bintkernelvar$, and $\rho_{\bY}$ is the limiting measure on $\mathbb{R}^{2dN}$ that we assume the observational data are sampled i.i.d from. For example,  if we assume that the initial condition of each trial is sampled i.i.d from a measure, then 
\begin{align}
    \rho_{\bY}(S) = \lim_{M \to \infty} \frac{1}{M} \sum_{m,l=1}^{M,L}\mathbb{1}_{\bY^{(m,l)} \in S}
\end{align}
for any Borel set $S \subset \mathbb{R}^{2dN}$ and the limit does exist in the weak sense by the law of large numbers. We denote the marginal probability measures for $\bX$ and $\bV$ by $\rho_{\bX}$, $\rho_{\bV}$ respectively.

Then  the well-posedness of \eqref{equationinverse} is reduced to studying under which conditions $A$ has a bounded inverse.

\subsubsection{Well-posedness on an $L^2$ space} We first consider the embedding of $\mHe\times \mHa$ to a suitable $L^2$ space and consider the well-posedness in a weaker $L^2$-norm. Motivated by \eqref{solutionformat} in the Representer theorem, we consider the measures $\tilde\rho_{r}^{E}$, $\tilde\rho_{r}^{A}$ for $\intkernele$, $\intkernela$ based on the structure of $\rhsfo_{\bintkernelvar}$,
\begin{align}\label{def1}
    \tilde\rho_{r}^{E}(Q) = \int_{Q} \int_{\mathbb{R}^{dN}} \frac{1}{N(N-1)} \sum_{i \neq j} \delta_{r_{ij}^{\bx^\ast}}(r)\cdot(r_{ij}^{\bx^\ast})^2 d \rho_{\bX}(\bX^\ast) dr
\end{align}
\begin{align}\label{def2}
    \tilde\rho_{r}^{A}(Q) = \int_{Q} \int_{\mathbb{R}^{2dN}} \frac{1}{N(N-1)} \sum_{i \neq j} \delta_{r_{ij}^{\bx^\ast}}(r)\cdot(r_{ij}^{\bv^\ast})^2 d \rho_{\bY}(\bX^\ast,\bV^\ast) dr
\end{align}
for any set $Q \subset [0,R]$, and $\delta(\cdot)$ is the Dirac $\delta$ distribution. By the continuity, $\mHe\times\mHa$ can be naturally embedded as a subspace of  $L^2([0,R]\times [0,R];\tilde\rho_{r};\mathbb{R}\times\mathbb{R})$ with $\tilde\rho_{r}= \tilde\rho_{r}^{E} \times \tilde\rho_{r}^{A}$.  One can follow the proof of Proposition 9 in \cite{learning2022} to show that $A$ is a bounded linear operator from  $L^2([0,R]\times [0,R];\tilde\rho_{r};\mathbb{R}\times\mathbb{R})$ to 
$L^2(\mathbb{R}^{2dN};\rho_{\bY};\mathbb{R}^{dN})$.

 Now we can introduce a sufficient condition to guarantee the existence of a bounded inverse of $A$ on $L^2([0,R]\times [0,R];\tilde\rho_{r};\mathbb{R}\times\mathbb{R})$, called the coercivity condition:
\begin{definition}\label{coercivityrkhs}
 We say that the system \eqref{secondorder} satisfies the coercivity condition if $ \forall \bintkernelvar \in \mHe \times \mHa$,
\begin{align}\label{coercivity}
\|A\bintkernelvar\|^2_{L^2(\rho_{\bY})}=\|\rhsfo_{\bintkernelvar}\|^2_{L^2(\rho_{\bY})}\geq c_{\mHe}\|\intkernelvare\|^2_{L^2( \tilde\rho_{r}^{E})} + c_{\mHa}\|\intkernelvara\|^2_{L^2( \tilde\rho_{r}^{A})}
\end{align}
for some constants $c_{\mHe}, c_{\mHa} > 0$.
\end{definition}

Here we show one example  to support the coercivity condition. 

\begin{theorem}\label{2ndordersingle:coercivity}
Consider  $\rho_{\bY}=\begin{bmatrix}\rho_{\bX}\\ \rho_{\bV} \end{bmatrix}$, where $\rho_{\bX}$  is the product of $N$ independent and identical measures with compact support on $\mathbb{R}^d$, and $\rho_{\bV}$ is defined in the same way and is independent of $\rho_{\bX}$.  Then we have 

\begin{align}\label{coercivityidex}
\|\rhsfo_{\bintkernelvar}\|^2_{L^2(\rho_{\bY})}\geq \frac{N-1}{N^2}\|\intkernelvare\|^2_{L^2( \tilde\rho_{r}^{E})} + \frac{N-1}{N^2}\|\intkernelvara\|^2_{L^2( \tilde\rho_{r}^{A})}
\end{align}
 \end{theorem}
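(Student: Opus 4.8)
The plan is to exploit the product structure $\rho_{\bY}=\rho_{\bX}\times\rho_{\bV}$ together with the exchangeability of the i.i.d.\ one-particle factors; denote by $\mu$ the common one-particle marginal of $\rho_{\bX}$. Since the agents are exchangeable under $\rho_{\bY}$, every summand in the definition \eqref{normed} of $\|\rhsfo_{\bintkernelvar}\|^2_{L^2(\rho_{\bY})}$ contributes identically, so it suffices to analyze the reference agent $1$:
\[
\|\rhsfo_{\bintkernelvar}\|^2_{L^2(\rho_{\bY})}=\int\|\rhsfo_{\bintkernelvar}(\bY)_1\|^2\,d\rho_{\bY},\qquad \rhsfo_{\bintkernelvar}(\bY)_1=\frac{1}{N}\sum_{i'=2}^{N}\mbf{g}_{1i'},
\]
where $\mbf{g}_{1i'}:=\intkernelvare(\|\bx_{i'}-\bx_1\|)(\bx_{i'}-\bx_1)+\intkernelvara(\|\bx_{i'}-\bx_1\|)(\bv_{i'}-\bv_1)$. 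Expanding the square separates the integral into diagonal terms ($i'=i''$) and off-diagonal cross terms ($i'\neq i''$, both necessarily distinct from $1$).

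First I would evaluate the diagonal contribution $\frac{1}{N^2}\sum_{i'=2}^N\int\|\mbf{g}_{1i'}\|^2\,d\rho_{\bY}$. Expanding $\|\mbf{g}_{1i'}\|^2$ gives an $EE$ square, an $AA$ square, and a mixed term proportional to $\langle\bx_{i'}-\bx_1,\bv_{i'}-\bv_1\rangle$; by the independence of $\rho_{\bX}$ and $\rho_{\bV}$ and the i.i.d.\ velocities, the inner velocity integral of the mixed term is $\langle\bx_{i'}-\bx_1,\E[\bv_{i'}]-\E[\bv_1]\rangle=0$, so it drops out. Integrating the surviving squares and matching against definitions \eqref{def1} and \eqref{def2}, a single pair contributes exactly $\|\intkernelvare\|^2_{L^2(\tilde\rho_{r}^{E})}+\|\intkernelvara\|^2_{L^2(\tilde\rho_{r}^{A})}$ after using exchangeability to equate all $N(N-1)$ pair terms in those measures; since there are $N-1$ identical diagonal terms, each carrying the prefactor $1/N^2$, the diagonal part equals precisely $\frac{N-1}{N^2}\big(\|\intkernelvare\|^2_{L^2(\tilde\rho_{r}^{E})}+\|\intkernelvara\|^2_{L^2(\tilde\rho_{r}^{A})}\big)$.

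The main obstacle, and the heart of the argument, is showing the off-diagonal cross terms $\int\langle\mbf{g}_{1i'},\mbf{g}_{1i''}\rangle\,d\rho_{\bY}$ are nonnegative, so that discarding them preserves the inequality. Each such term expands into $EE$, $EA$, $AE$, $AA$ pieces. The two mixed pieces vanish by the same zero-mean velocity-difference argument as above. For the $AA$ piece, independence factors the integral into a velocity factor $\int\langle\bv_{i'}-\bv_1,\bv_{i''}-\bv_1\rangle\,d\rho_{\bV}=\E\|\bv_1\|^2-\|\E\bv_1\|^2=\mathrm{tr}(\cov(\bv_1))\ge0$ (using $i'\neq i''$, both $\neq1$, and i.i.d.\ velocities) times a position factor $\int\intkernelvara(\|\bx_{i'}-\bx_1\|)\intkernelvara(\|\bx_{i''}-\bx_1\|)\,d\rho_{\bX}$, which upon conditioning on the shared coordinate $\bx_1$ and using conditional independence of $\bx_{i'},\bx_{i''}$ equals $\E_{\bx_1}\big[\big(\int\intkernelvara(\|\bx'-\bx_1\|)\,d\mu(\bx')\big)^2\big]\ge0$. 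The subtle $EE$ piece is handled the same way: conditioning on $\bx_1$ renders $\intkernelvare(\|\bx_{i'}-\bx_1\|)(\bx_{i'}-\bx_1)$ and its $i''$ analogue conditionally independent with equal conditional mean $\mbf{h}(\bx_1):=\int\intkernelvare(\|\bx'-\bx_1\|)(\bx'-\bx_1)\,d\mu(\bx')$, so the $EE$ piece equals $\E_{\bx_1}\|\mbf{h}(\bx_1)\|^2\ge0$.

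Combining the exact diagonal identity with the nonnegativity of the off-diagonal remainder yields \eqref{coercivityidex}. The only delicate points to verify carefully are the two conditioning reductions producing the nonnegative quantities $\E_{\bx_1}\|\mbf{h}(\bx_1)\|^2$ and $\E_{\bx_1}\big[\big(\int\intkernelvara(\|\bx'-\bx_1\|)\,d\mu(\bx')\big)^2\big]$, both of which hinge on integrating out the shared reference agent $\bx_1$ last while the distinct partners $\bx_{i'},\bx_{i''}$ remain conditionally independent.
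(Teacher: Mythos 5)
Your proof is correct, and its skeleton matches the paper's: both decompose $\|\rhsfo_{\bintkernelvar}\|^2_{L^2(\rho_{\bY})}$ into diagonal pair terms (which reproduce exactly $\tfrac{N-1}{N^2}\bigl(\|\intkernelvare\|^2_{L^2(\tilde\rho_{r}^{E})}+\|\intkernelvara\|^2_{L^2(\tilde\rho_{r}^{A})}\bigr)$), kill the mixed $E$--$A$ terms via independence of positions and velocities and the zero mean of $\bv_{i'}-\bv_{i''}$, and then argue that the remaining three-index cross terms are nonnegative and may be discarded. Where you genuinely diverge is in how that last nonnegativity is established. The paper isolates it as a standalone lemma (\cref{lemma1B}): for i.i.d.\ $X,Y,Z$ it shows the joint density $p(u,v)$ of $(X-Y,X-Z)$ is a positive definite function, invokes closure of positive definiteness under products to conclude $g(u)g(v)\langle u,v\rangle p(u,v)$ is positive definite, and reads off nonnegativity of the integral. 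You instead condition on the shared reference coordinate $\bx_1$, use the conditional independence and identical distribution of the two partners to write the $EE$ cross term as $\E_{\bx_1}\|\mbf{h}(\bx_1)\|^2$ and the $AA$ position factor as $\E_{\bx_1}\bigl[\bigl(\int\intkernelvara(\|\bx'-\bx_1\|)\,d\mu(\bx')\bigr)^2\bigr]$, i.e.\ as expectations of squares. Your tower-property argument is more elementary and self-contained (no appeal to positive definite kernels or the product-of-p.d.-functions fact the paper cites), and it makes transparent exactly which structural feature is used: integrating out the shared agent last. The paper's route is slightly more general in that positive definiteness of $g(u)g(v)\langle u,v\rangle p(u,v)$ is a reusable statement beyond this specific expectation, but for the theorem at hand both arguments deliver the same conclusion with the same hypotheses.
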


\JF{}{Detailed proof of \cref{2ndordersingle:coercivity} is shown in Appendix \cref{secB2}.} In \cite{miller2020learning}, the identifiability of a structured sum of $\intkernele$ and $\intkernela$ is studied. Here we consider a stronger version of identifiability as we want to individually recover $\intkernele$ and $\intkernela$. Note that it is also possible for   distributions on $\mathbb{R}^{dN}$ with non-i.i.d $\mathbb{R}^d$ components that satisfy the coercivity condition. Finally, we remark that the coercivity condition \eqref{coercivity} holds on measure pairs $(\rho_1,\rho_2)$ equivalent to $(\tilde\rho_{r}^{E},\tilde\rho_{r}^{A})$. This can provide us with many nontrivial examples from the special case in Theorem \ref{2ndordersingle:coercivity}. We conjecture that the coercivity condition is generally satisfied and leave further investigation as future work. 

\subsubsection{Well-posedness on {$\mHe\times\mHa$} and the convergence analysis} Now we turn to study the well-posedness on $\mHe\times\mHa$ with the stronger RKHS norm, and we make the following assumption. 

\begin{assumption}\label{assump2}
We assume that $\tilde\rho_{r}$ is non-degenerate on $[0,R]\times [0,R]$. 
\end{assumption}

We remark that the above assumption is mild. For example, we can pick $\rho_{\bY}$ to be a uniform measure supported on a large enough cube, then $\tilde\rho_{r}$ satisfies the assumption. 

It is straightforward to see that the coercivity condition implies injectivity of $A$ on  $\mHe\times\mHa$: $\bintkernelvar=0$ everywhere on $[0,R]$ when $A\bintkernelvar=0$ for $\bintkernelvar \in \mHe\times\mHa$. This is due to the non-degeneracy of $\tilde\rho_{r}$ on $[0,R]\times [0,R]$ and the continuity of $\bintkernelvar$. Therefore, $A$ is injective. However, showing $A$ has a bounded inverse on $\mHe\times\mHa$ is impossible when it is infinitely dimensional, as $A$ is a compact operator. Suppose the coercivity condition \eqref{coercivity} holds, then following the theoretical framework developed in \cite{learning2022},  one could prove the well-posedness on a suitable subspace determined by the source conditions on $\intkernele,\intkernela$ following inverse problem literature. In this case, it is possible to prove one could recover both kernels with a statistically optimal rate under the corresponding RKHS norm. \JF{}{We obtained the result for the single-kernel case  in our recent work \cite{learning2022}, and we leave the work for the double-kernel case for the future investigation.}

\section{Numerical examples} \label{sec:numerics}
In this section, we investigate the performance of the algorithm proposed in \cref{sec: Methodology} to show the effectiveness of model selection in \eqref{eq:2ndOrder}. Specific instances of \eqref{eq:2ndOrder} have found many applications in modeling the clustering, swarming, and alignment behaviors of collective agents. The examples include (1) Cucker-Smale dynamics (CS) with friction force ($m_i \equiv 1$, $\intkernele \equiv 0$, $\intkernela \neq 0$) in \cref{ex: CS}, (2) fish milling dynamics (FM) with friction force ($m_i \equiv 1$, $\intkernele \neq 0$, $\intkernela \equiv 0$) in \cref{ex: FM}, (3) anticipation dynamics (AD) ($m_i \equiv 1$, $\intkernele, \intkernela \neq 0$) in \cref{ex: AD} and (4) opinion dynamics (OD) with stubborn agents ($m_i\equiv0$, $\intkernele \neq 0$, $\intkernela \equiv 0$) in \cref{ex: ODS}. In (1)-(3), the mass of agents is known in advance, i.e, they are second-order systems. We are interested in learning  $\intkernele$, $\intkernela$, and other hyperparameters $\mbf{\alpha}$ from data, resulting in the selection of types of interactions (energy versus alignment interactions). In (4), we used the prior knowledge that $\intkernela\equiv 0$ and investigate if the true zero mass of the opinions  and $\intkernele$ can be learned from data, resulting in the selection of the order of the system (first versus second order).

The detailed setups of each dynamic are shown in \cref{tab:ex_info}. We applied the strategies proposed in \cref{sec: Methodology} to learn $\balpha$ in $\force_{\balpha}$, and the interaction kernels $\intkernele(r)$, $\intkernela(r)$. We initialize the parameters in $\balpha$ randomly from the uniform distribution $\mathcal{U}([0,1])$, and the same for $\sigma$ in the cases with noisy data. In each experiment, we run 10 independent trials and report the errors of the estimations for $\balpha$, the estimation errors for $\intkernele$, $\intkernela$ in the (relative) $L^\infty([0,R])$-norm,
and compare the discrepancy between the true trajectories (evolved using $\alpha$, $\intkernele$, $\intkernela$) and predicted trajectories (evolved using $\hat{\alpha}$, $\hat{\intkernele}$, $\hat{\intkernela}$) on both the training time interval $[0, T]$ and on the future time interval $[T, T_f]$, over two different sets of initial conditions (IC) – one taken from the training data, and one consisting of new samples from the same initial distribution.

\paragraph{Real data application} We also apply our method to two real datasets of fish in \cref{ex: realdata}, where one shows a flocking behavior and another shows a milling behavior. We fit them into the Cucker-Smale and fish milling dynamics respectively and perform comparisons with two other classical approaches: SINDy \cite{brunton2016discovering} and feed-forward
neural networks.

\paragraph{Numerical Setup.} We simulate the trajectory data $(\bbY, \bbZ)$ on the time interval $[0,T]$ with given i.i.d initial conditions generated from the probability measures specified for each system as shown in \cref{tab:ex_info}. For the training data sets, we generate $M$ trajectories and observe each trajectory at $L$ equidistant times $0 = t_1 < t_2 < \cdots < t_L = T$ and add Gaussian noise to $\bbZ$ with level $\sigma$. We construct an empirical approximation to the probability measure $\tilde \rho_r$, with $2000$ trajectories and let $[0,R]$ be its support. All ODE systems are evolved using \textrm{ode$15$s} in MATLAB\textsuperscript{\textregistered} with a relative tolerance at $10^{-5}$ and absolute tolerance at $10^{-6}$. For noise-free training data, we add a jitter constant $\approx 10^{-6}$ as a way of regularization. We apply the \textit{minimize} function in the GPML package\footnote{Carl Edward Rasmussen \& Hannes Nickisch (http://gaussianprocess.org/gpml/code)} to train the parameters using conjugate gradient optimization with the partial derivatives shown in \cref{sec: Methodology}, and set the maximum number of function evaluations to 400. 

In almost all examples, we use the full GP methods, as we use scarce data and there is no need for acceleration.  However, we show the effectiveness of our acceleration techniques in Fish milling dynamics in  section \ref{accelNumerical} when we have a larger scale of data. 

\begin{table}[tbhp]
\caption{System parameters in the dynamics}
\label{tab:ex_info} 
\begin{center}
\resizebox{\linewidth}{!}{%
\begin{tabular}{ c | c | c | c | c }
\hline
 System & CS & FM & AD & OD \\
 \hline 
 $d$ & 2 & 2 & 2 & 1 \\
 \hline 
 $N$ & 10 & 10 & 10 & 5 \\
 \hline 
 $m_i$ & 1 & 1 & 1 & 0\\
 \hline 
$[0; T; T_f]$ & $[0;10;20]$ & $[0;5;10]$ & $[0;10;20]$ & $[0;2;20]$\\
\hline 
$\mu_0^\bx$ & $\mathrm{Unif}([-2,2]^2)$ & $\mathrm{Unif}([-0.5,0.5]^2)$ & $\mathrm{Unif}([0,5]^2)$ & $\mathrm{Unif}([-1,1])$ \\
\hline 
$\mu_0^\bv$ & $\mathrm{Unif}([-1,1]^2)$ & $\mathrm{Unif}([0,0]^2)$ & $\mathrm{Unif}([0,5]^2)$ & - \\
\hline 
$\intkernele$ & 0 & $\frac{1}{r}\bigg[- e^{-2r}+e^{-\frac{r}{4}}\bigg],$ &  $\frac{0.1}{(1+r)^{2.5}} + \frac{1}{(1+r)^{0.5}}$ & \cref{eq:ODS_phi} \\
\hline 
$\intkernela$ & $\frac{1}{(1 + r^2)^{1/4}}$ & 0 & $ \frac{0.1}{(1+r^2)^{0.5}}$ & 0\\
\hline 
$\force(\bx_i, \dot\bx_i,\mbf{\alpha})$ & $\kappa\dot\bx_i(1-\|\dot\bx_i\|^p)$ & $(\gamma-\beta \|\dot\bx_i\|^2)\dot\bx_i$ & 0 & \cref{eq:ODS_force}\\
\hline
$\balpha$ & $(\kappa, p) = (1,2)$ & $ (\gamma, \beta) = (1.5,0.5)$ & - & $(P_1,\kappa) = (1,10)$\\
\hline
\end{tabular}}
\end{center}
\end{table}

\paragraph{Choice of the covariance function.} We choose the Mat\'{e}rn covariance function defined on $[0,R] \times [0,R]$ for the Gaussian process priors in our numerical experiments, i.e.,
\begin{equation}
\label{eq:Matern kernel}
    K_\theta(r,r')=s_\intkernel^2 \frac{2^{1-\nu}}{\Gamma(\nu)}(\frac{\sqrt{2\nu}\|r-r'\|}{\omega_{\intkernel}})^\nu B_\nu(\frac{\sqrt{2\nu}\|r-r'\|}{\omega_{\intkernel}}),
\end{equation}
where the parameter $\nu > 0$ determines the smoothness; $\Gamma(\nu)$ is the Gamma function; $B_\nu$ is the modified Bessel function of the second kind; and the hyperparameters $\theta = \{s_\intkernel^2, \omega_{\intkernel}\}$ quantify the amplitude and scale. In our numerical examples, we choose $\nu = p + 1/2$ with $p=0 \text{ or } 1$. 

 The Reproducing Kernel Hilbert Space (RKHS), $\mathcal{H}_{Mat\acute{e}rn}$, associated with this Mat\'{e}rn kernel is norm-equivalent to the Sobolev space $W_2^{\nu + 1/2}([0,R])$ defined by 
\begin{equation}
    W_2^{\nu + 1/2}([0,R]) : = \Big\{ f \in L_2([0,R]): \|f\|^2_{W_2^{\nu + 1/2} } := \sum_{\beta \in \mathbb{N}_0^1:|\beta|\leq \nu + 1/2} \|D^\beta f\|_{L_2}^2 < \infty  \Big\}.
\end{equation}
That is to say,  $\mathcal{H}_{Mat\acute{e}rn} = W_2^s([0,R]) $ as a set of functions, and there exists constants $c_1,c_2>0$ such that 
\begin{equation}
    c_1\|f\|_{W_2^{\mu+\frac{1}{2}}} \leq \|f\|_{\mathcal{H}_{Mat\acute{e}rn}} \leq c_2||f||_{W_2^{\mu+\frac{1}{2}}}, \quad \forall f \in \mathcal{H}_{Mat\acute{e}rn}.
\end{equation}
In other words, $\mathcal{H}_{Mat\acute{e}rn}$ consists of functions that are differentiable up to order $\nu$ and weak differentiable up to order $s = \nu + \frac
{1}{2}$.

\begin{figure}[tbhp]
\centering
\subfigure[CS: $\intkernele$]{
\includegraphics[width=0.29\linewidth,height=0.19\linewidth]{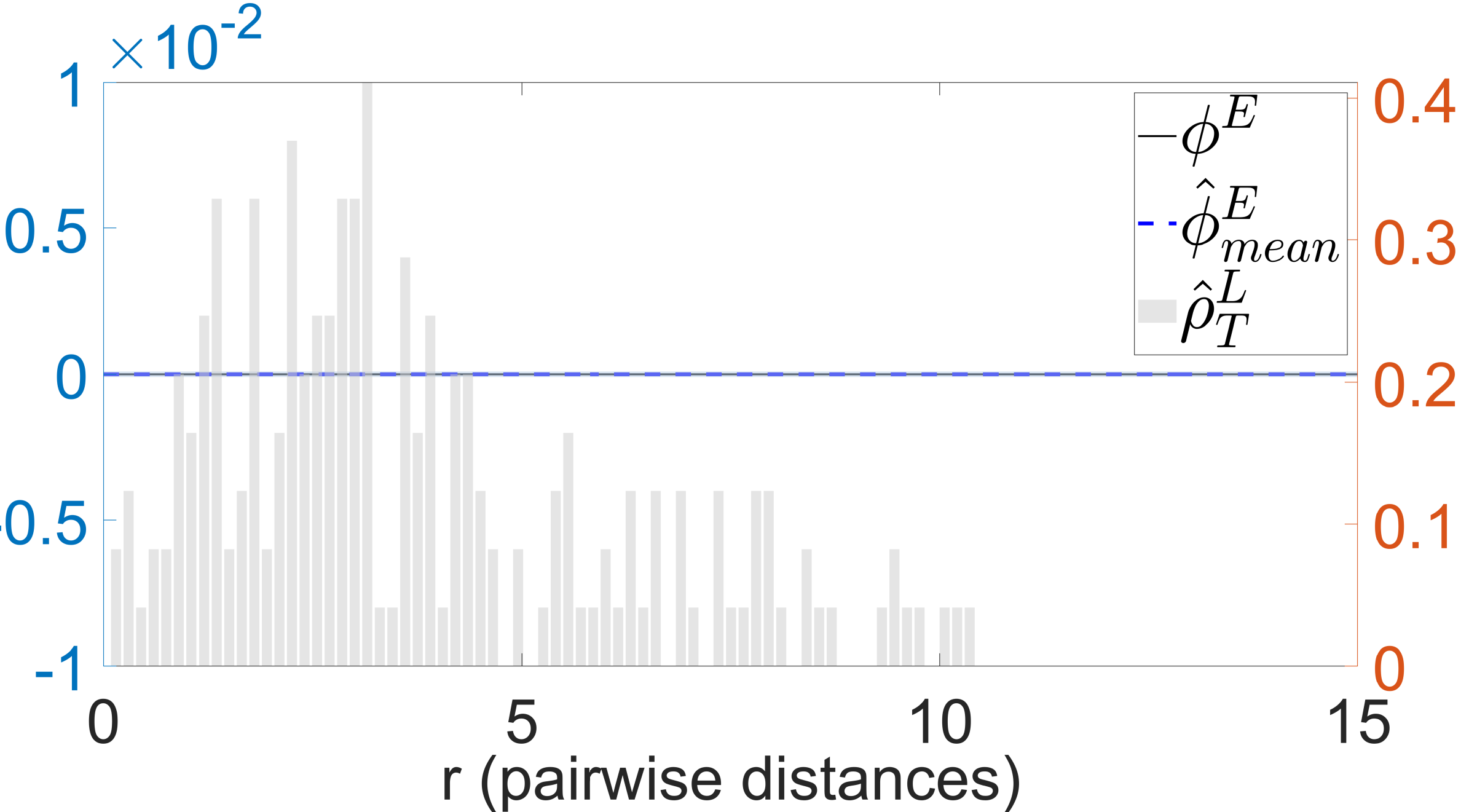}
}
\subfigure[CS: $\intkernela$]{
\includegraphics[width=0.29\linewidth,height=0.19\linewidth]{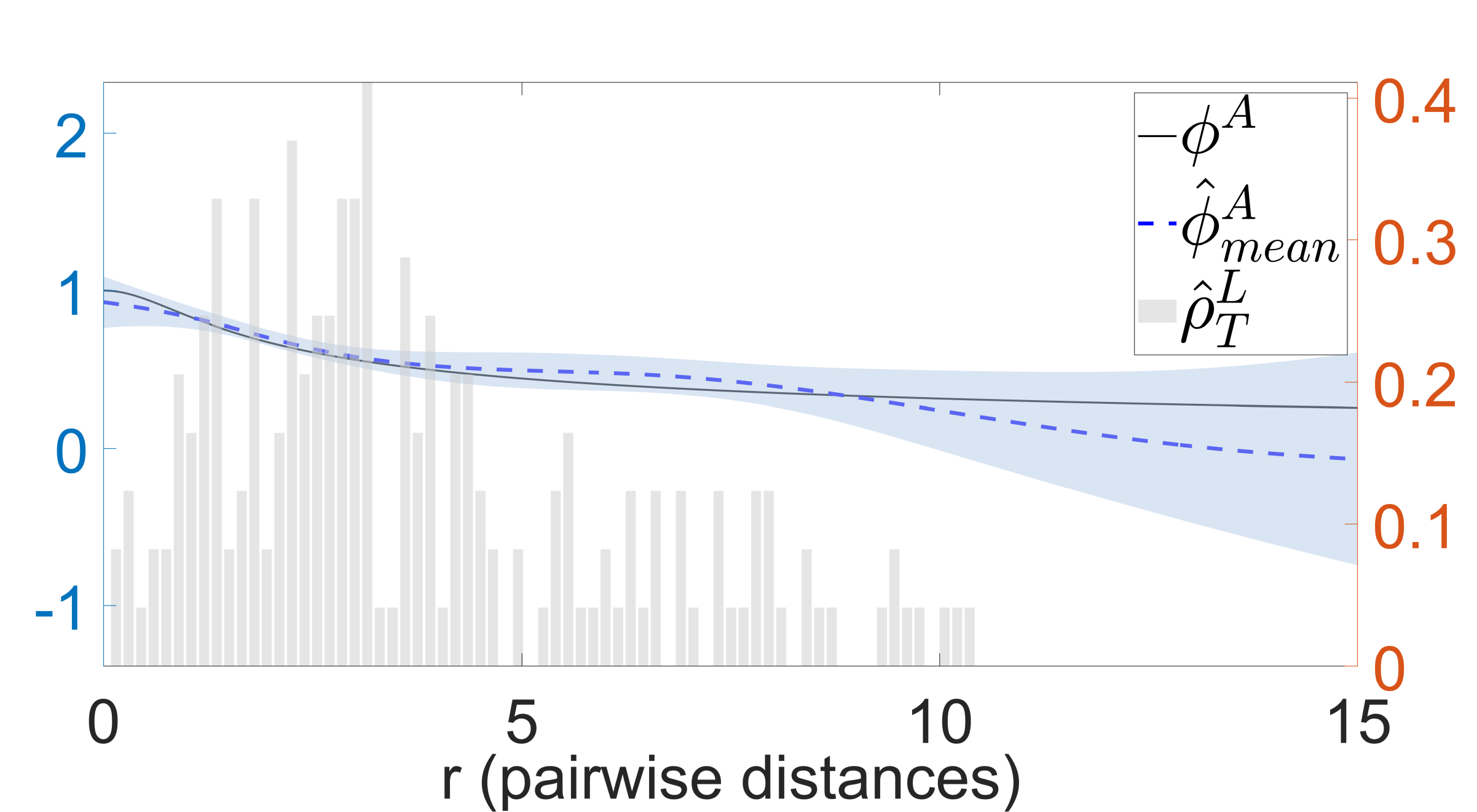}
}
\subfigure[CS: trajectory prediction]{
\includegraphics[width=0.33\linewidth]{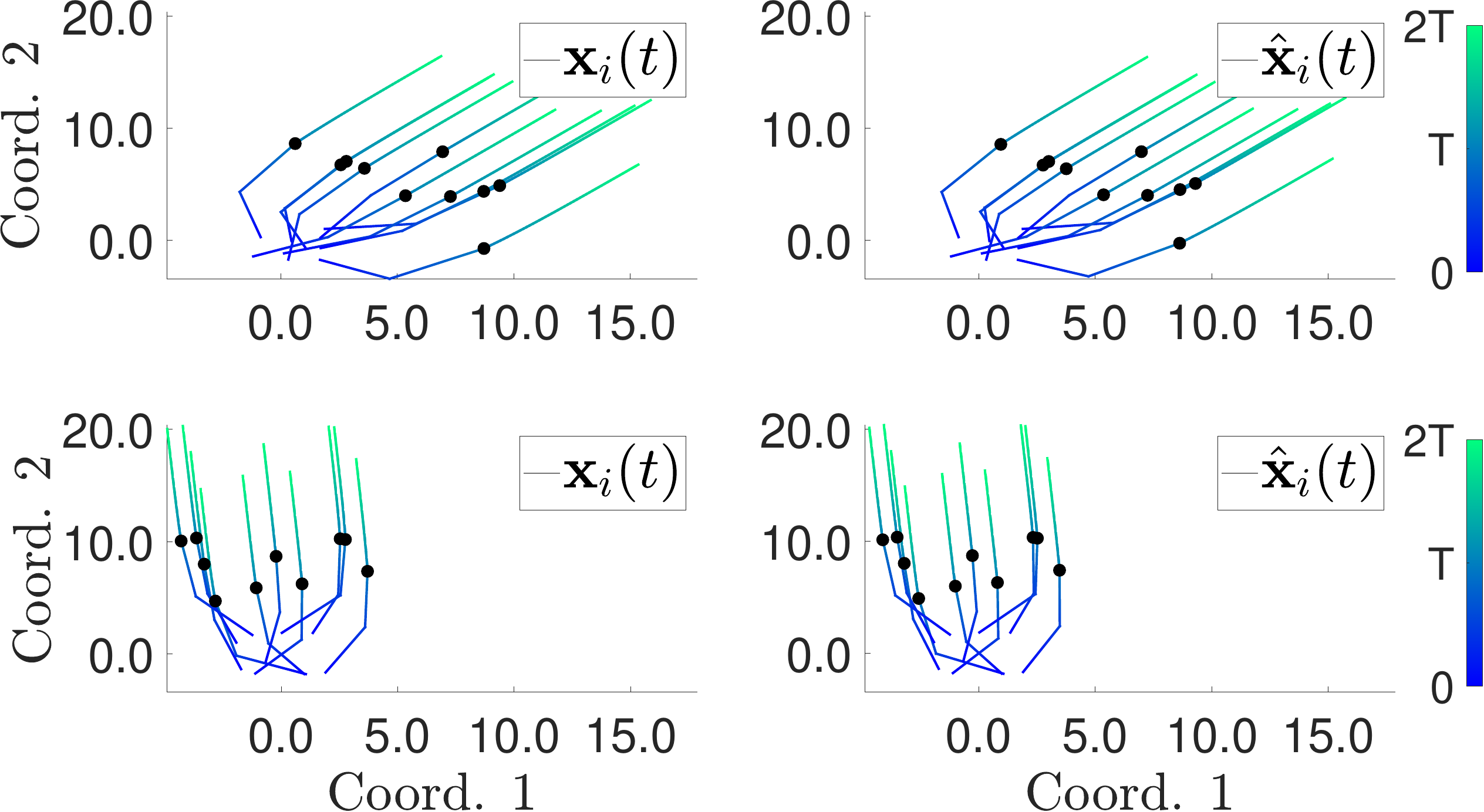}
}
\subfigure[FM: $\intkernele$]{
\includegraphics[width=0.29\linewidth,height=0.19\linewidth]{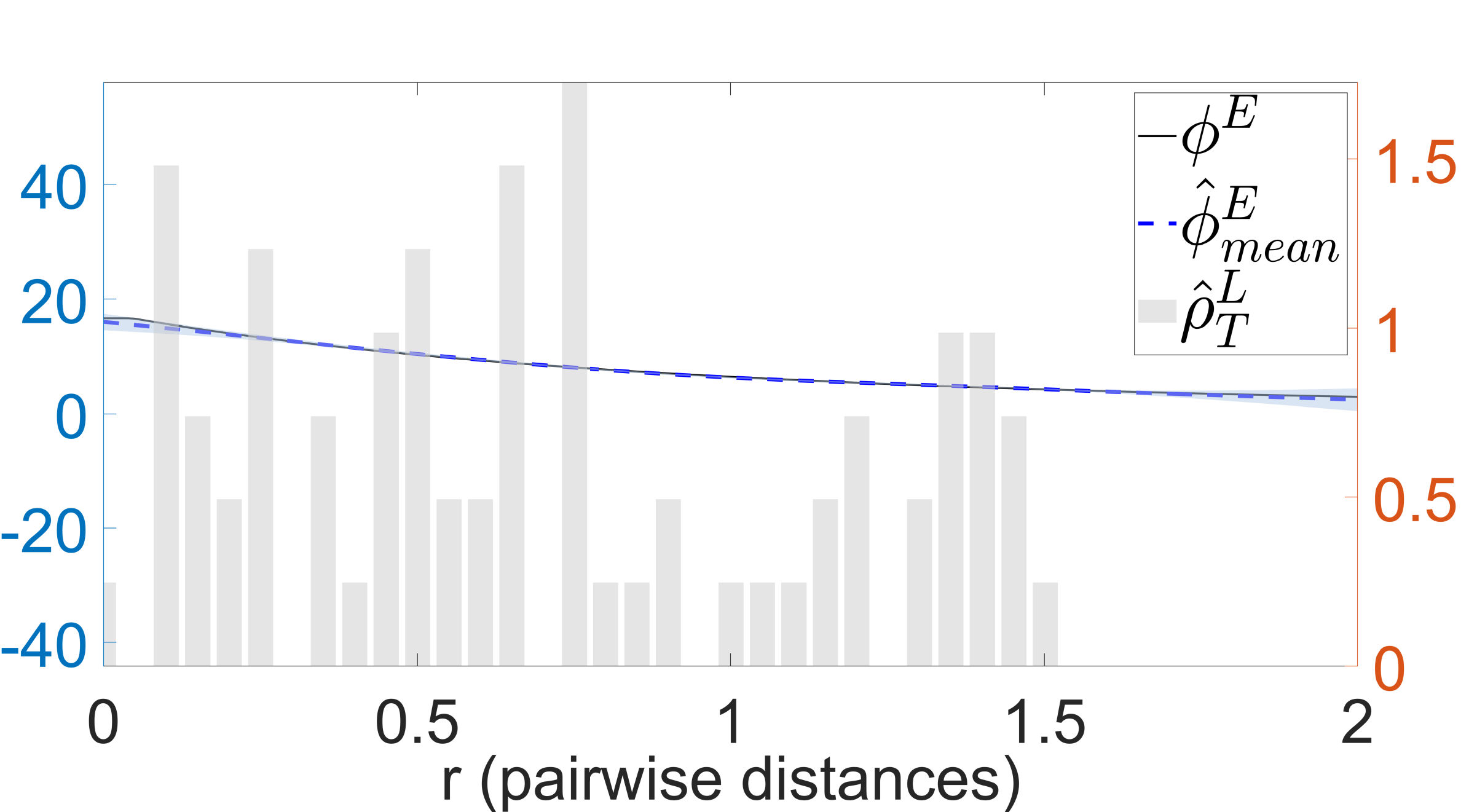}
}
\subfigure[FM: $\intkernela$]{
\includegraphics[width=0.29\linewidth,height=0.19\linewidth]{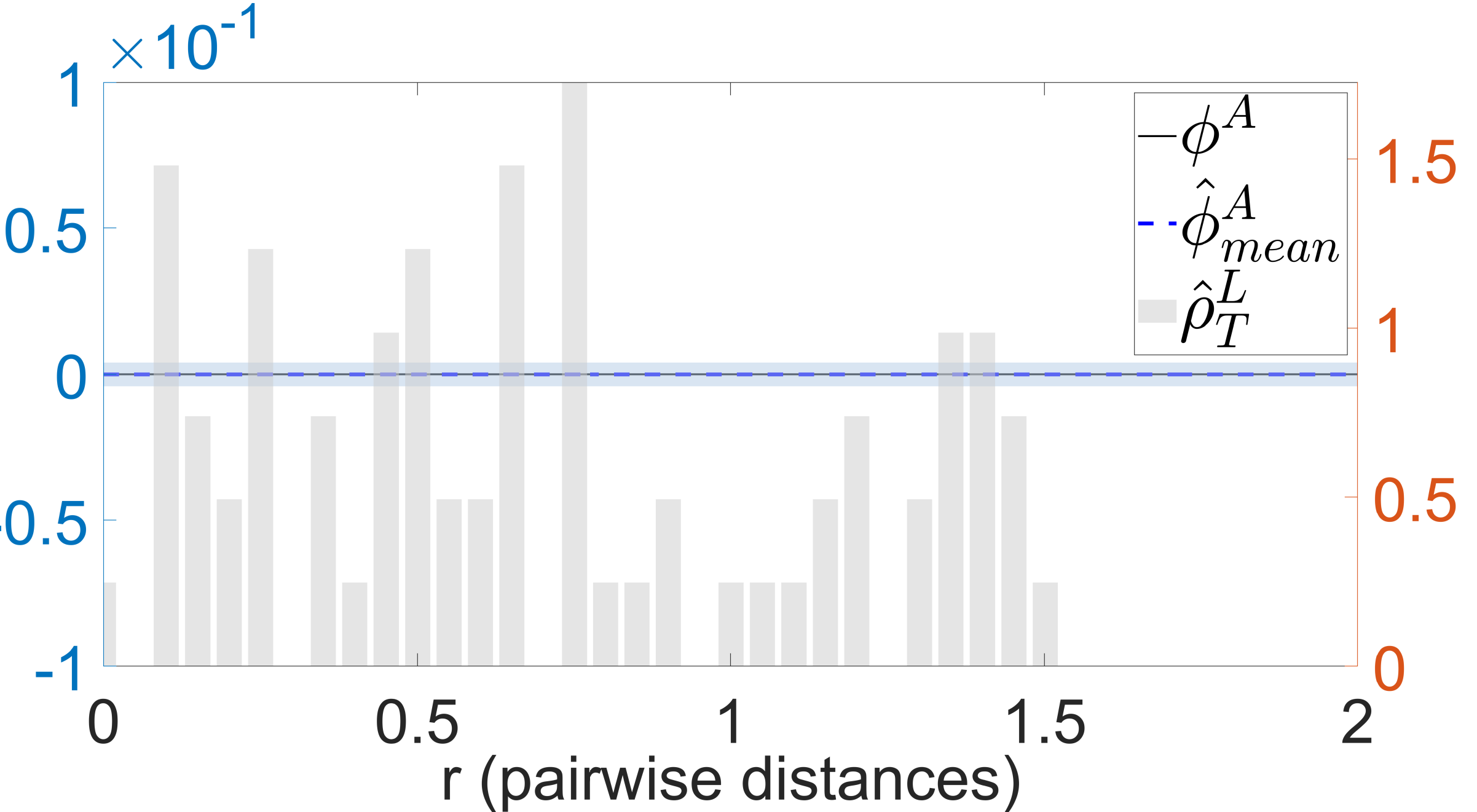}
}
\subfigure[FM: trajectory prediction]{
\includegraphics[width=0.33\linewidth]{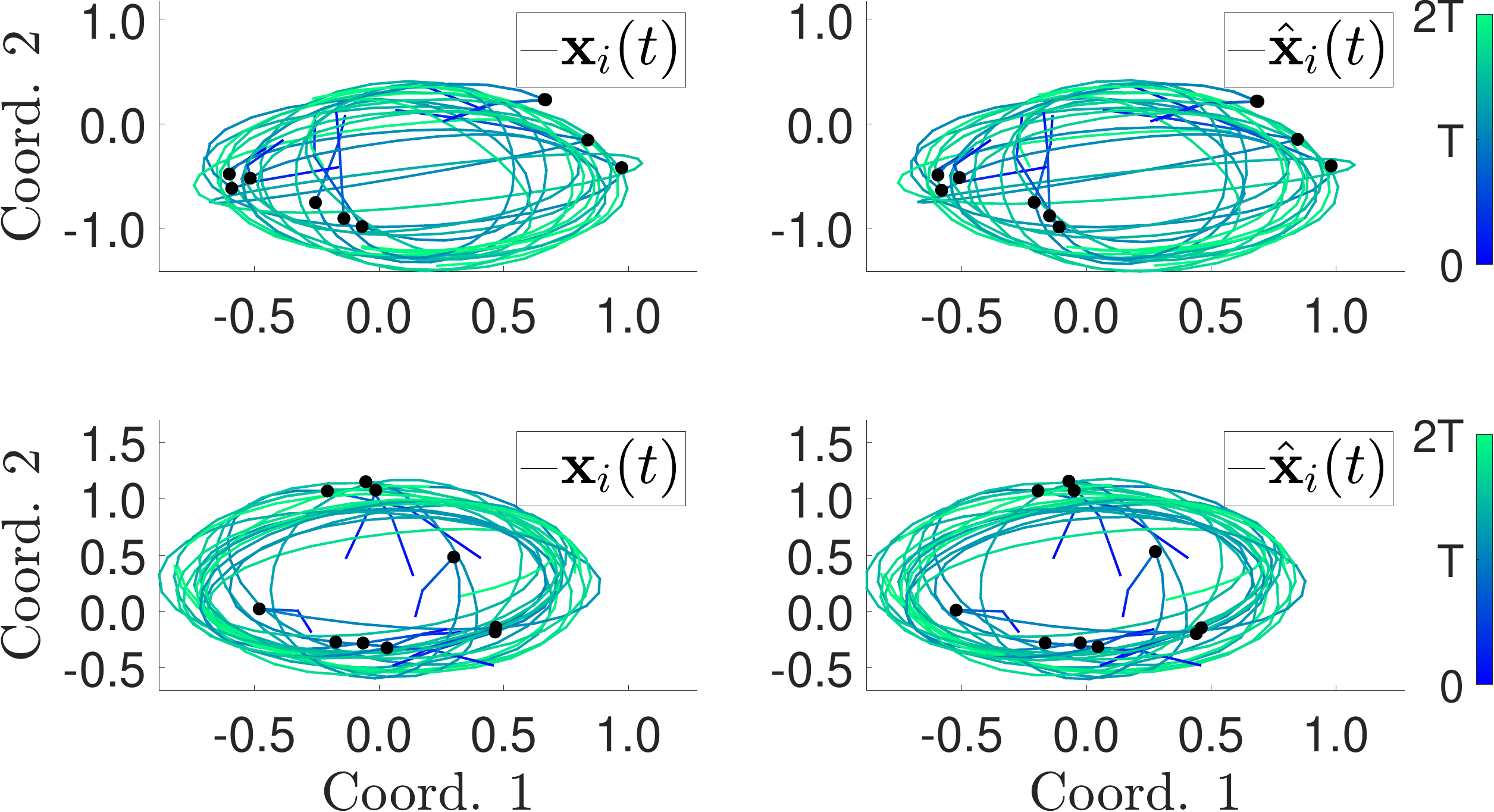}
}
\subfigure[AD: $\intkernele$]{
\includegraphics[width=0.29\linewidth,height=0.19\linewidth]{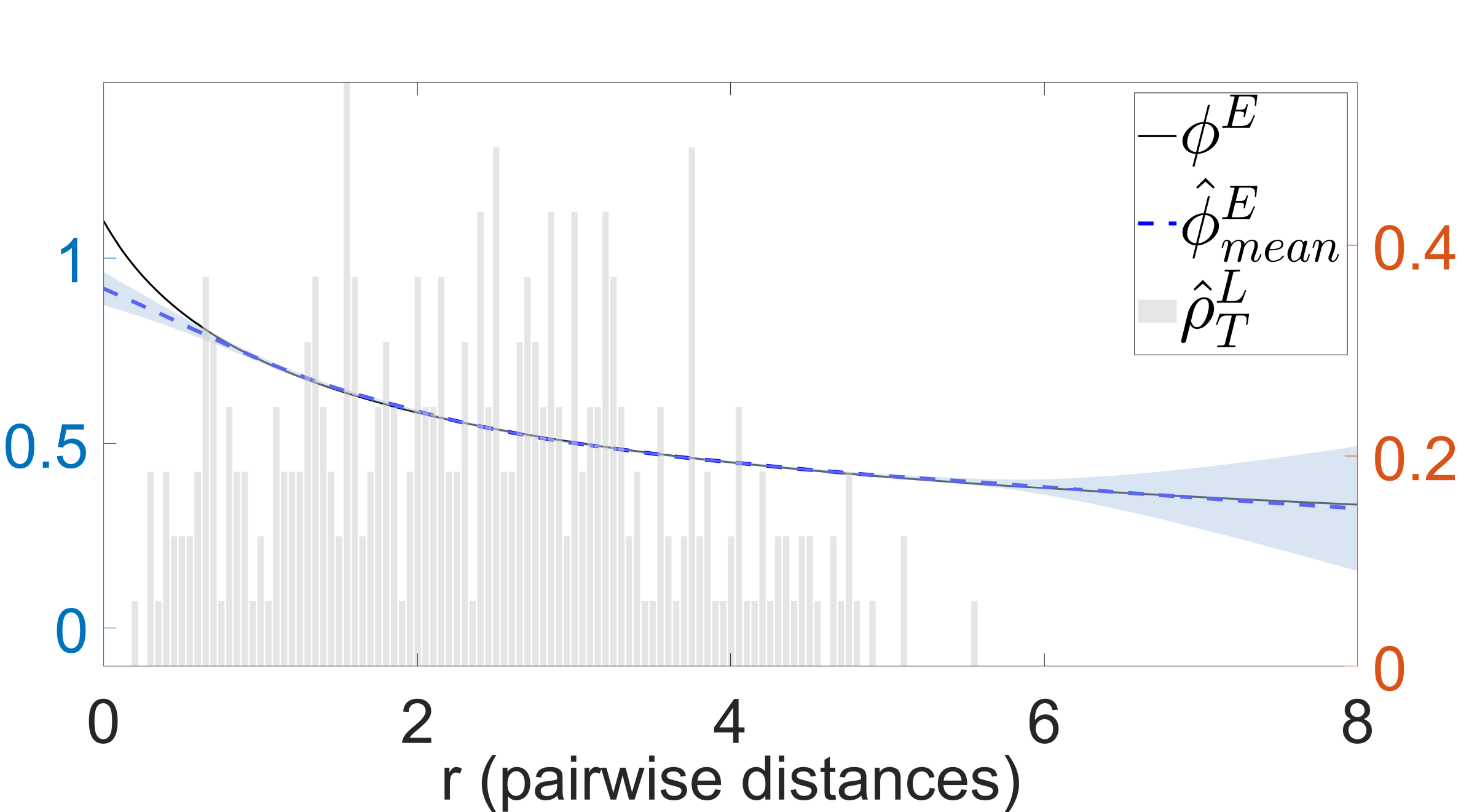}
}
\subfigure[AD: $\intkernela$]{
\includegraphics[width=0.29\linewidth,height=0.19\linewidth]{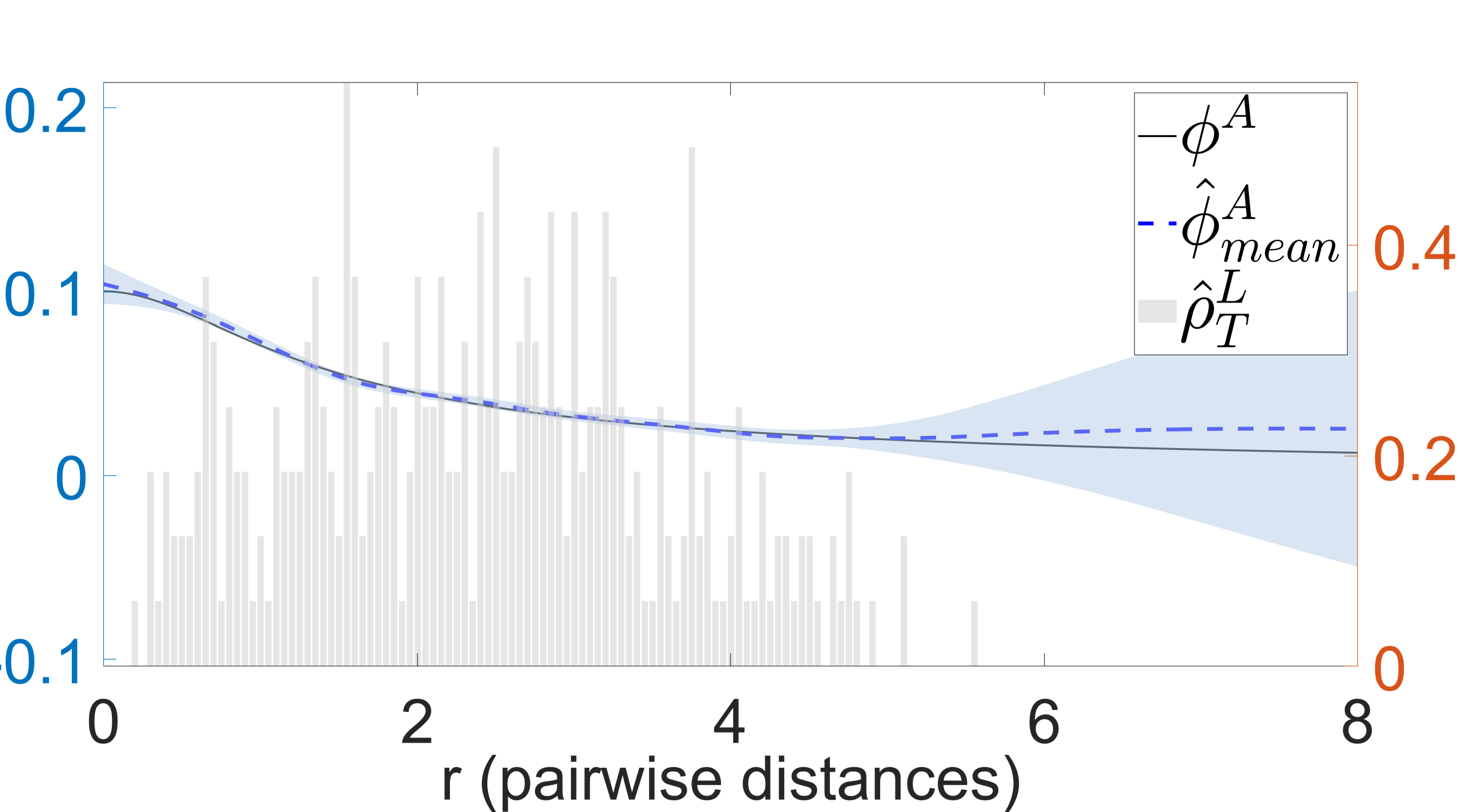}
}
\subfigure[AD: trajectory prediction]{
\includegraphics[width=0.33\linewidth]{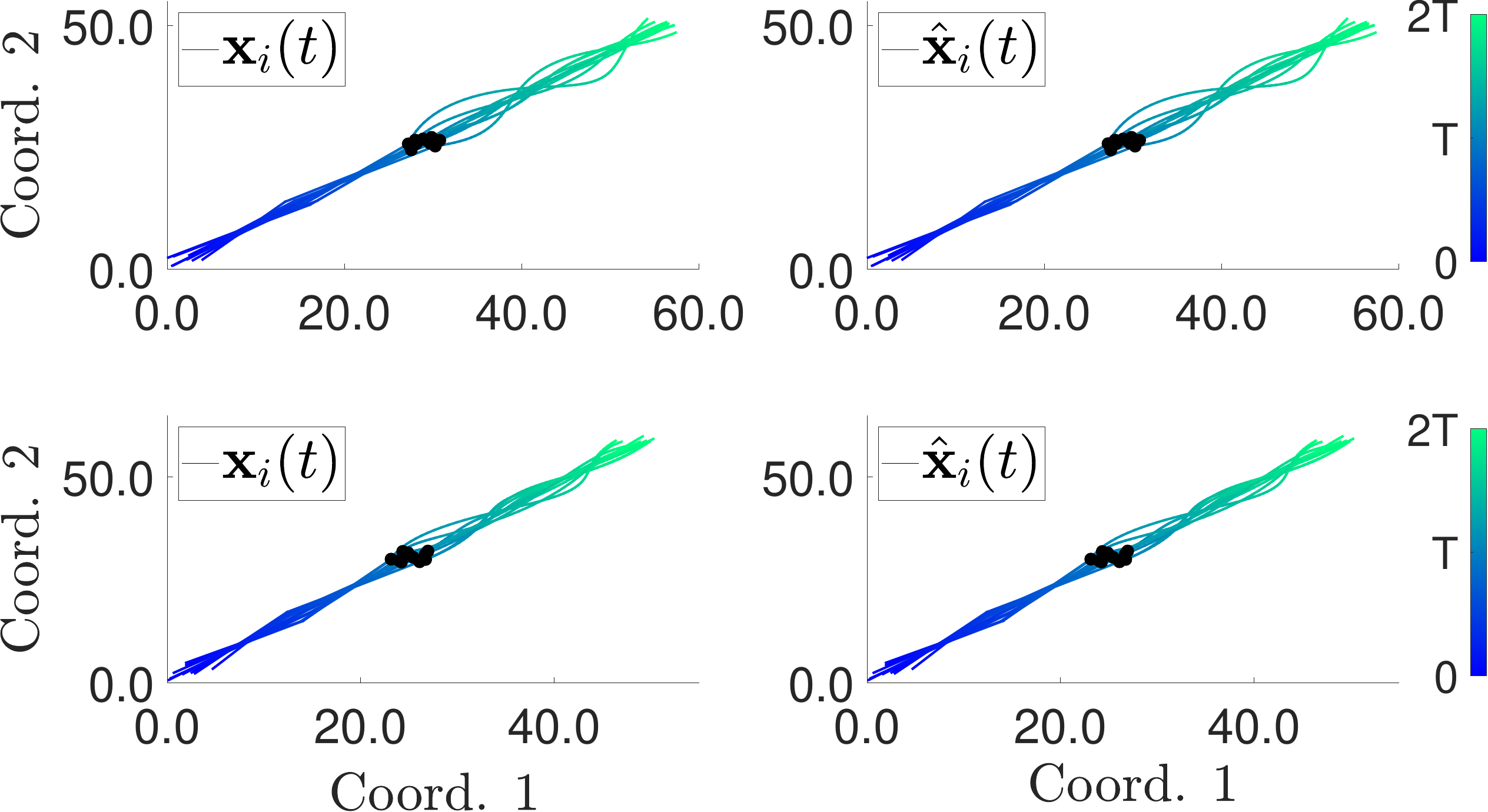}
}
\caption{Results of learning different dynamics using the Matérn kernel. 
Top: Learning CSF ($\{N,M,L,\sigma\}  =\{10,6,3,0.1\}$);
Middle: Learning FM ($\{N,M,L,\sigma\} = \{10,3,3,0.1\}$);
Bottom: Learning AD ($\{N,M,L,\sigma\} = \{10,12,3,0.01\}$).
Left, Center: Predictive mean $\hat{\intkernele}$ and $\hat{\intkernela}$ of the true kernels, and two-standard-deviation band (light blue color) around the means. The grey bars represent the empirical density of the $\tilde \rho_r$;
Right: the true (left) versus predicted (right) trajectories using $\hat{\balpha}$ and $\hat{\intkernel}$  with initial conditions of training data (top) and testing data (bottom) }
\label{fig:ex_results}
\end{figure}

\subsection{Summary of the numerical experiments}

\begin{itemize}

\item The proposed learning approach performs \textit{simultaneous} precise model selections from \textit{small} amounts of \textit{noisy} observation data. The numerical results in all different dynamics show that the algorithm can accurately identify the existence of energy-based/alignment-based interactions and can learn order information of dynamics between agents in the systems.

\item The GP method selects  a  kernel basis to represent the underlying sparse dynamics that generalizes remarkably well in  larger time prediction with new initial conditions.  The occasional larger prediction errors that occur in a larger time interval may be caused by the propagation of estimation errors. We believe the performance is satisfactory since we only have very limited and noisy training data. Even in cases where the prediction errors are relatively large, the estimators can predict remarkably accurate collective behaviors of the agents, e.g. the consensus in the opinion dynamics, the flocking behavior in the Cucker-Smale dynamics, and the milling pattern in the fish milling dynamics.

\item In synthetic experiments, the uncertainty quantification band for the trajectories is rather small ($\mathcal{O}(10^{-3})$), resulting from the narrow uncertainty bands of $\bintkernel$. In real data experiments, we found models using interaction kernels sampled from uncertainty bands  all reproduced the true dynamics very well.

 \item The real data experiments show that the proposed GP approach combined with the particle-based models is practically applicable, and  outperformed  two other competitors in preserving the physics of the true dynamics. 

\end{itemize}

\subsection{Model selection for types of interaction kernels}

\subsubsection{Cucker-Smale dynamics with friction force}
\label{ex: CS}

The Cucker-Smale system \cite{cucker2007emergent,cucker2011general,shvydkoydynamics} is used to model collective behaviors in a system of agents that follow a prescribed protocol of communication, such as wedges of bird flocks, lattices in cell organization, or bee hives \cite{ahn2012collision,choi2017emergent,chuang2007state}. We consider the system of $N$ agents in the form \eqref{eq:2ndOrder} with components defined in \cref{tab:ex_info}, where $\intkernela$ is a communication kernel, or influence function, that makes the agents flock, and $\force_{\balpha}$ a Rayleigh-type friction force that pushes all magnitudes of the velocities $\|v_i\|$ towards the same value 1 and counteracts the directional alignment forces governed by $\intkernela$ to  produce  a rich variety of collective dynamics depending on the relative strengths of the involved forces.

In this example, the unknown parameters $\balpha=(\kappa,p)$ are nonlinear with respect to the system. We show the errors of our estimation for $\balpha$ and $\intkernele$, $\intkernela$ in \cref{tab:ex_CSF}.
Note that for this model, $\intkernela$ is in the RKHS generated by the Mat\'{e}rn kernel we pick. The estimated interaction kernel $\hat{\intkernel}^A$ can recover the true $\intkernela(r)$ almost perfectly in the region within the support of the empirical $\tilde \rho_r$ from both noise-free and noisy training data. Moreover, the true interaction kernel $\intkernela(r)$ is fully covered in the uncertainty region we constructed using the posterior variances. \cref{tab:ex_CSF} also shows that our method can identify the nonexistence of the energy-based interaction well with small errors (at most $O(10^{-4})$) from zero in $L^\infty([0,R])$-norm. See also in \cref{fig:ex_results}(a),(b). The errors for the predicted trajectories are shown in \cref{tab:CSF_traj}. We can see that in both the training time interval $[0,10]$ and future time interval $[10,20]$, the estimators can produce accurate approximations of the true trajectories and the performance becomes better when we increase the size of training data ($M$ or $L$).

\begin{table}[tbhp]
\caption{Means and standard deviations of the  errors of $\hat\balpha$ (including $\hat \sigma$ when noise exists) and $\hat{\intkernel}$ for different settings of the CS dynamics.} 
\label{tab:ex_CSF}
\begin{center}
\resizebox{0.9\linewidth}{!}{%
\begin{tabular}{cccc}
\toprule
$\{N,M,L,\sigma\}$ & $\| \hat{\balpha} - \balpha\|_{\infty}$  & $\| \hat{\intkernele} - 0\|_{\infty}$
& $\| \hat{\intkernela} - \intkernela\|_{\infty}/ \| \intkernela\|_{\infty}$ 
\\
\cmidrule(lr){1-1}\cmidrule(lr){2-4}
$\{10,1,3,0\}$ & $1.9\cdot 10^{-3} \pm 1.0\cdot 10^{-3}$ & $2.1\cdot 10^{-5} \pm 4.0\cdot 10^{-5}$ 
& $5.6\cdot 10^{-2} \pm 1.5\cdot 10^{-2}$ 
\\
\cmidrule(lr){1-1}\cmidrule(lr){2-4}
$\{10,3,3,0\}$ & $\mathbf{1.1\cdot 10^{-3} \pm 7.9\cdot 10^{-4}}$ & $2.6\cdot 10^{-5} \pm 6.5\cdot 10^{-5}$ 
& $4.5\cdot 10^{-2} \pm 2.0\cdot 10^{-2}$ 
\\
\cmidrule(lr){1-1}\cmidrule(lr){2-4}
$\{10,6,3,0\}$ & $1.3\cdot 10^{-3} \pm 2.5\cdot 10^{-3}$ &
$\mathbf{1.1\cdot 10^{-5} \pm 1.3\cdot 10^{-5}}$ 
& $\mathbf{3.2\cdot 10^{-2} \pm 1.0\cdot 10^{-2}}$ 
\\
\cmidrule(lr){1-1}\cmidrule(lr){2-4}
$\{10,6,3,0.05\}$ & $1.1\cdot 10^{-1} \pm 1.1\cdot 10^{-1}$ &
$1.2\cdot 10^{-4} \pm 1.6\cdot 10^{-4}$ 
& $1.6\cdot 10^{-1} \pm 8.6\cdot 10^{-2}$ 
\\
\cmidrule(lr){1-1}\cmidrule(lr){2-4}
$\{10,6,3,0.1\}$ & $2.3\cdot 10^{-1} \pm 2.3\cdot 10^{-1}$ & $1.4\cdot 10^{-4} \pm 2.9\cdot 10^{-4}$ 
& $1.8\cdot 10^{-1} \pm 8.0\cdot 10^{-2}$ 
\\
\bottomrule
\end{tabular} %
}
\end{center}
\end{table}

\begin{table}[tbhp]
\caption{The trajectory prediction errors for different settings.} \label{tab:CSF_traj}
\begin{center}
\resizebox{\linewidth}{!}{%
\begin{tabular}{ccccc}
\toprule
$\{N,M,L,\sigma\}$ & Training IC $[0,10]$ & Training IC $[10,20]$ & new IC $[0,10]$ & new IC $[10,20]$\\
\cmidrule(lr){1-1}\cmidrule(lr){2-5}
$\{10,1,3,0\}$ & $4.9 \cdot 10^{-4} \pm 4.2\cdot 10^{-4}$  &  $6.7 \cdot 10^{-4} \pm 1.3\cdot 10^{-3}$ &  $1.8 \cdot 10^{-3} \pm 4.4\cdot 10^{-3}$ & $1.4 \cdot 10^{-2} \pm 4.2\cdot 10^{-2}$ \\
\cmidrule(lr){1-1}\cmidrule(lr){2-5}
$\{10,3,3,0\}$  & $2.5 \cdot 10^{-4} \pm 2.0\cdot 10^{-4}$  &  $1.5 \cdot 10^{-4} \pm 1.3\cdot 10^{-4}$ &  $4.9 \cdot 10^{-4} \pm 4.9\cdot 10^{-4}$ & $8.7 \cdot 10^{-3} \pm 1.7\cdot 10^{-2}$ \\
\cmidrule(lr){1-1}\cmidrule(lr){2-5}
$\{10,6,3,0\}$  & $\mathbf{1.5 \cdot 10^{-4} \pm 1.2\cdot 10^{-4}}$  &  $\mathbf{9.4 \cdot 10^{-5} \pm 9.2\cdot 10^{-5}}$ &  $\mathbf{2.7 \cdot 10^{-4} \pm 4.1\cdot 10^{-4}}$ & $\mathbf{2.3 \cdot 10^{-4} \pm 4.6\cdot 10^{-4}}$ \\
\cmidrule(lr){1-1}\cmidrule(lr){2-5}
$\{10,6,3,0.05\}$  & $2.3 \cdot 10^{-2} \pm 1.3\cdot 10^{-2}$  &  $1.9 \cdot 10^{-2} \pm 1.3\cdot 10^{-2}$ &  $2.7 \cdot 10^{-2} \pm 1.9\cdot 10^{-2}$ & $2.5 \cdot 10^{-2} \pm 2.0\cdot 10^{-2}$ \\
\cmidrule(lr){1-1}\cmidrule(lr){2-5}
$\{10,6,3,0.1\}$  & $4.2 \cdot 10^{-2} \pm 2.6\cdot 10^{-2}$  &  $3.8 \cdot 10^{-2} \pm 2.8\cdot 10^{-2}$ &  $4.9 \cdot 10^{-2} \pm 3.4\cdot 10^{-2}$ & $4.5 \cdot 10^{-2} \pm 3.9\cdot 10^{-2}$ \\
\bottomrule
\end{tabular} 
}
\end{center}
\end{table}

\subsubsection{Fish-Milling dynamics with friction force}
\label{ex: FM}
 In this subsection, we consider another type of cohesive collective system that produces milling patterns \cite{abaid2010fish,lukeman2009conceptual}. A special instance of such systems is the D'Orsogna model \cite{d2006self,chuang2007state,bhaskar2019analyzing}, which describes the motion of $N$  self-propelled particles powered by biological or mechanical motors, that experience a frictional force,  and can produce a rich variety of collective patterns.
We consider the system of $N$ agents of the form \eqref{eq:2ndOrder} with components defined in \cref{tab:ex_info}, where the interaction kernel $\intkernele$ is derived from the Morse-type potential. Since it is singular at $r=0$, we truncate it at $r_0=0.05$ with a function of the form $ae^{-br}$ to ensure the new function has a continuous derivative. The force function $\forcev$ includes self-propulsion with strength $\gamma$ and nonlinear drag with strength $\beta$.

The errors of the estimations for $\mbf{\alpha}$ after our training procedure and the learned $\intkernele$, $\intkernela$ are shown in \cref{tab:ex_FM}. In this model, $\intkernele$ is in the RKHS generated by the chosen Mat\'{e}rn kernel. We can see that our estimators produced faithful approximations to the true kernel based on the results we report in \cref{tab:ex_FM} and \cref{fig:ex_results}(d),(e). They also show that we can identify the nonexistence of the alignment-based interaction with very small errors and select the correct model. The discrepancy between the true trajectories and the predicted trajectories on both the training time interval $[0, 5]$ and future time interval $[5, 10]$ are shown in \cref{tab:FM_traj}. Even if the trajectory prediction errors can go up to $O(10^{-1})$ with the presence of a relatively large noise for the systems with $N=10$, our estimators provided faithful predictions to most of the agents in the system and the milling pattern as shown in \cref{fig:ex_results}(f).

\begin{table}[tbhp]
\caption{Means and standard deviations of the  errors of $\hat\balpha$ (including $\hat \sigma$ when noise exists) and $\hat{\intkernel}$ for different settings of FM dynamics.
} \label{tab:ex_FM}
\begin{center}
\resizebox{0.9\linewidth}{!}{%
\begin{tabular}{cccc}
\toprule
$\{N,M,L,\sigma\}$ & $\| \hat{\balpha} - \balpha\|_{\infty}$  & $\| \hat{\intkernele} - \intkernele\|_{\infty}/\|\intkernele\|_{\infty}$  
& $\| \hat{\intkernela} - 0\|_{\infty}$ 
\\
\cmidrule(lr){1-1}\cmidrule(lr){2-4}
$\{10,1,3,0\}$ & $7.9 \cdot 10^{-4} \pm 1.0\cdot 10^{-3}$  & $3.6 \cdot 10^{-2} \pm 4.3\cdot 10^{-3}$ %
& $6.6 \cdot 10^{-4} \pm 6.9\cdot 10^{-4}$ 
\\
\cmidrule(lr){1-1}\cmidrule(lr){2-4}
$\{10,1,9,0\}$ & $6.4 \cdot 10^{-5} \pm 6.2\cdot 10^{-5}$  & $3.9 \cdot 10^{-2} \pm 2.7\cdot 10^{-3}$ %
& $1.6 \cdot 10^{-4} \pm 1.3\cdot 10^{-4}$ 
\\
\cmidrule(lr){1-1}\cmidrule(lr){2-4}
$\{10,3,3,0\}$ & $\mathbf{4.7 \cdot 10^{-5} \pm 5.0\cdot 10^{-5}}$  & $3.8 \cdot 10^{-2} \pm 5.4\cdot 10^{-3}$ 
& $1.2 \cdot 10^{-4} \pm 1.7\cdot 10^{-4}$ 
\\
\cmidrule(lr){1-1}\cmidrule(lr){2-4}
$\{10,3,3,0.01\}$ & $3.4 \cdot 10^{-3} \pm 1.9\cdot 10^{-3}$  & $\mathbf{2.9 \cdot 10^{-2} \pm 5.7\cdot 10^{-3}}$ 
& $2.9 \cdot 10^{-3} \pm 4.3\cdot 10^{-3}$ 
\\
\cmidrule(lr){1-1}\cmidrule(lr){2-4}
$\{10,3,3,0.05\}$ & $1.4 \cdot 10^{-2} \pm 8.5\cdot 10^{-3}$ & $4.9 \cdot 10^{-2} \pm 1.5\cdot 10^{-2}$ %
& $\mathbf{4.6 \cdot 10^{-5} \pm 7.0\cdot 10^{-5}}$ %
\\
\cmidrule(lr){1-1}\cmidrule(lr){2-4}
$\{10,3,3,0.1\}$ & $3.5 \cdot 10^{-2} \pm 7.2\cdot 10^{-2}$ & $7.1 \cdot 10^{-2} \pm 2.0\cdot 10^{-2}$ %
& $2.9 \cdot 10^{-2} \pm 9.0\cdot 10^{-2}$ 
\\
\bottomrule
\end{tabular} %
}
\end{center}
\end{table}

\begin{table}[tbhp]
\caption{The trajectory prediction errors for different settings of FM dynamics.} \label{tab:FM_traj}
\begin{center}
\resizebox{\linewidth}{!}{%
\begin{tabular}{ccccc}
\toprule
$\{N,M,L,\sigma\}$ & Training IC $[0,5]$ & Training IC $[5,10]$ & new IC $[0,5]$ & new IC $[5,10]$\\
\cmidrule(lr){1-1}\cmidrule(lr){2-5}
$\{10,1,3,0\}$ & $2.1 \cdot 10^{-3} \pm 2.0 \cdot 10^{-3}$  & $1.0 \cdot 10^{-2} \pm 8.7 \cdot 10^{-3}$  &  $1.9 \cdot 10^{-3} \pm 1.9 \cdot 10^{-3}$ & $5.4 \cdot 10^{-3} \pm 4.4 \cdot 10^{-3}$ \\
\cmidrule(lr){1-1}\cmidrule(lr){2-5}
$\{10,1,9,0\}$ & $\mathbf{3.4 \cdot 10^{-4} \pm 2.9 \cdot 10^{-4}}$  & $\mathbf{1.4 \cdot 10^{-3} \pm 1.2 \cdot 10^{-3}}$  &  $\mathbf{4.7 \cdot 10^{-4} \pm 4.2 \cdot 10^{-4}}$ & $\mathbf{1.3 \cdot 10^{-3} \pm 1.2 \cdot 10^{-3}}$ \\
\cmidrule(lr){1-1}\cmidrule(lr){2-5}
$\{10,3,3,0\}$ & $8.1 \cdot 10^{-4} \pm 8.0 \cdot 10^{-4}$  & $2.2 \cdot 10^{-3} \pm 2.0 \cdot 10^{-3}$  & $8.8 \cdot 10^{-4} \pm 8.8 \cdot 10^{-4}$  & $3.5 \cdot 10^{-3} \pm 2.8 \cdot 10^{-3}$ \\
\cmidrule(lr){1-1}\cmidrule(lr){2-5}
$\{10,3,3,0.01\}$ &  $8.3 \cdot 10^{-3} \pm 3.8 \cdot 10^{-3}$ & $1.8 \cdot 10^{-2} \pm 1.2 \cdot 10^{-2}$  & $6.6 \cdot 10^{-3} \pm 3.2 \cdot 10^{-3}$  & $1.4 \cdot 10^{-2} \pm 9.3 \cdot 10^{-3}$ \\
\cmidrule(lr){1-1}\cmidrule(lr){2-5}
$\{10,3,3,0.05\}$ & $3.4 \cdot 10^{-2} \pm 2.1 \cdot 10^{-2}$  & $7.1 \cdot 10^{-2} \pm 4.7 \cdot 10^{-2}$  &  $3.7 \cdot 10^{-2} \pm 1.9 \cdot 10^{-2}$ & $7.0 \cdot 10^{-2} \pm 4.7 \cdot 10^{-2}$ \\
\cmidrule(lr){1-1}\cmidrule(lr){2-5}
$\{10,3,3,0.1\}$ & $8.0 \cdot 10^{-2} \pm 9.8 \cdot 10^{-2}$  & $1.5 \cdot 10^{-1} \pm 1.9 \cdot 10^{-1}$  & $9.5 \cdot 10^{-2} \pm 1.3 \cdot 10^{-1}$  & $1.5 \cdot 10^{-1} \pm 2.3 \cdot 10^{-1}$ \\
\bottomrule
\end{tabular} 
}
\end{center}
\end{table}

\subsubsection{Anticipation Dynamics}
\label{ex: AD}
 In this subsection, we consider a more complicated model where the interactions depend on both the pairwise distance and the differences in velocities, i.e. both $\intkernele$ and $\intkernela$ are nonzero. The anticipation dynamics (AD) models in \cite{shu2021anticipation} are suitable candidates, and we consider the system of $N$ agents in the form \eqref{eq:2ndOrder} with components defined in \cref{tab:ex_info}.

The errors of the estimations for $\mbf{\alpha}$ and the learned $\intkernele$, $\intkernela$ are shown in \cref{tab:ex_AD}. In this model, both $\intkernele$ and $\intkernela$ are in the RKHS generated by the chosen Mat\'{e}rn kernel. We can see that our estimators produced faithful approximations to both true kernels based on the results we report in \cref{tab:ex_AD} and \cref{fig:ex_results}(g),(h). The comparisons between the true trajectories and the predicted trajectories on both the training time interval $[0, 10]$ and future time interval $[10, 20]$ are shown in \cref{tab:AD_traj}. The estimators can produce accurate approximations of the true
trajectories with errors at most $O(10^{-1})$, see also \cref{fig:ex_results}(i). 

\begin{table}[tbhp]
\caption{Means and standard deviations of the  errors of $\hat \sigma$ (when noise exists) and $\hat{\intkernel}$ for different settings of AD dynamics.
} \label{tab:ex_AD}
\begin{center}
\resizebox{0.9\linewidth}{!}{%
\begin{tabular}{cccc}
\toprule
$\{N,M,L,\sigma\}$ & $\| \hat{\sigma} - \sigma\|_{\infty}$  & $\| \hat{\intkernele} - \intkernele\|_{\infty}/\|\intkernele\|_{\infty}$ 
& $\| \hat{\intkernela} - \intkernela\|_{\infty}/\| \intkernela\|_{\infty}$ 
\\
\cmidrule(lr){1-1}\cmidrule(lr){2-4}
$\{10,3,3,0\}$ & - & $9.2 \cdot 10^{-2} \pm 7.4 \cdot 10^{-3}$ 
& $4.5 \cdot 10^{-2} \pm 1.0 \cdot 10^{-2}$ 
\\
\cmidrule(lr){1-1}\cmidrule(lr){2-4}
$\{10,6,3,0\}$ & - & $7.9 \cdot 10^{-2} \pm 6.7 \cdot 10^{-3}$ 
& $4.3 \cdot 10^{-2} \pm 5.1 \cdot 10^{-3}$ 
\\
\cmidrule(lr){1-1}\cmidrule(lr){2-4}
$\{10,12,3,0\}$ & - & $\mathbf{7.4 \cdot 10^{-2} \pm 6.1 \cdot 10^{-3}}$ 
& $\mathbf{3.6 \cdot 10^{-2} \pm 7.0 \cdot 10^{-3}}$ %
\\
\cmidrule(lr){1-1}\cmidrule(lr){2-4}
$\{10,12,3,0.005\}$ & $8.8 \cdot 10^{-5} \pm 5.1 \cdot 10^{-5}$ & $1.3 \cdot 10^{-1} \pm 1.7 \cdot 10^{-2}$ 
& $7.3 \cdot 10^{-2} \pm 3.2 \cdot 10^{-2}$ 
\\
\cmidrule(lr){1-1}\cmidrule(lr){2-4}
$\{10,12,3,0.01\}$ & $1.8 \cdot 10^{-4} \pm 9.9 \cdot 10^{-5}$ & $1.6 \cdot 10^{-1} \pm 1.9 \cdot 10^{-2}$ 
& $9.3 \cdot 10^{-2} \pm 4.1 \cdot 10^{-2}$ 
\\
\bottomrule
\end{tabular} %
}
\end{center}
\end{table}

\begin{table}[tbhp]
\caption{The trajectory prediction errors for different settings of AD dynamics.} \label{tab:AD_traj}
\begin{center}
\resizebox{\linewidth}{!}{%
\begin{tabular}{ccccc}
\toprule
$\{N,M,L,\sigma\}$ & Training IC $[0,10]$ & Training IC $[10,20]$ & new IC $[0,10]$ & new IC $[10,20]$\\
\cmidrule(lr){1-1}\cmidrule(lr){2-5}
$\{10,3,3,0\}$ & $\mathbf{4.2 \cdot 10^{-4} \pm 3.8 \cdot 10^{-4}}$ & $\mathbf{2.3 \cdot 10^{-4} \pm 2.1 \cdot 10^{-4}}$ & $6.1 \cdot 10^{-4} \pm 8.4 \cdot 10^{-4}$ & $3.5 \cdot 10^{-4} \pm 5.0 \cdot 10^{-4}$ \\
\cmidrule(lr){1-1}\cmidrule(lr){2-5}
$\{10,6,3,0\}$ & $6.6 \cdot 10^{-4} \pm 7.4 \cdot 10^{-4}$ & $3.8 \cdot 10^{-4} \pm 4.1 \cdot 10^{-4}$ & $7.1 \cdot 10^{-4} \pm 9.2 \cdot 10^{-4}$ & $3.9 \cdot 10^{-4} \pm 5.2 \cdot 10^{-4}$ \\
\cmidrule(lr){1-1}\cmidrule(lr){2-5}
$\{10,12,3,0\}$ & $6.2 \cdot 10^{-4} \pm 6.8 \cdot 10^{-4}$ & $3.3 \cdot 10^{-4} \pm 3.7 \cdot 10^{-4}$ & $\mathbf{3.7 \cdot 10^{-4} \pm 5.2 \cdot 10^{-4}}$ & $\mathbf{2.1 \cdot 10^{-4} \pm 3.1 \cdot 10^{-4}}$ \\
\cmidrule(lr){1-1}\cmidrule(lr){2-5}
$\{10,12,3,0.005\}$ & $1.9 \cdot 10^{-3} \pm 2.1 \cdot 10^{-3}$ & $1.1 \cdot 10^{-3} \pm 1.2 \cdot 10^{-3}$ & $1.1 \cdot 10^{-3} \pm 1.2 \cdot 10^{-3}$ & $6.8 \cdot 10^{-4} \pm 7.1 \cdot 10^{-4}$ \\
\cmidrule(lr){1-1}\cmidrule(lr){2-5}
$\{10,12,3,0.01\}$ & $3.4 \cdot 10^{-3} \pm 4.3 \cdot 10^{-3}$ & $1.9 \cdot 10^{-3} \pm 2.4 \cdot 10^{-3}$ & $1.9 \cdot 10^{-3} \pm 2.1 \cdot 10^{-3}$ & $1.2 \cdot 10^{-3} \pm 1.3 \cdot 10^{-3}$ \\
\bottomrule
\end{tabular} 
}
\end{center}
\end{table}

\subsection{Model selection for the order of systems}
\label{ex: ODS}
An example of opinion dynamics is shown below to test the validity of our method for identifying the order of dynamic systems. This is a first-order system of $N$ interacting agents, and each agent $i$ is characterized by a continuous opinion variable $x_i \in \mathbb{R}$. The dynamics of opinion exchange are governed by the first-order equation mentioned in \cref{subsec: MS_order}
with
\begin{equation}
  \intkernele(r) = 
  \begin{cases}
  25 r & \textrm{if } 0 \leq r < 0.4,\\
  10 & \textrm{if } 0.4 \leq r < 0.6,\\
  25 - 25r & \textrm{if } 0.6 \leq r < 1,\\
  0 & \textrm{if } r \geq 1.
  \end{cases}
\label{eq:ODS_phi}
\end{equation}
The interaction kernel $\intkernele$ encodes the non-repulsive interactions between agents: all agents aim to align their opinions to their connected neighbors according to distance-based attractive influences. We consider the case where there is no non-collective force, i.e. $\force_i(\bx_i,\mbf{\alpha}) \equiv 0$.
We also consider a more complicated case where there exist stubborn agents, i.e. \begin{equation}
  \force_i(\bx_i, \mbf{\alpha}) = 
  \begin{cases}
  - \kappa(\bx_i - P_i) & \textrm{if agent $i$ is stubborn with bias $P_i$}\\
  0 & \textrm{otherwise}
  \end{cases}
  \label{eq:ODS_force}
\end{equation}
where $\force_i(\bx_i, \mbf{\alpha})$ describes the additional influence induced by the stubbornness: the stubborn agents have strong desires to follow their bias $P_i$, and $\kappa$ controls the rate of convergence towards their bias. The stubborn agents may cause a major effect on the collective opinion formation process. If $\kappa=0$, then stubborn agents do not follow their biases and behave as regular agents.

\cref{tab:OD_MS} shows the errors of the estimations for $m$, $\balpha$, and $\intkernele(r)$ in 10 independent trails of experiments. It shows our method can identify the order of dynamics with the estimation of $m \approx 0$ and learn the interaction kernel $\intkernel$ simultaneously, see also \cref{fig:ex_OD_MS}.

\begin{table}[tbhp]
\caption{Means and standard deviations of the  errors of $\hat m$ (including $\hat \sigma$ when noise exists) and $\hat{\intkernel}$ for different settings of OD dynamics.} \label{tab:OD_MS}
\begin{center}
\resizebox{\linewidth}{!}{%
\begin{tabular}{ccccc}
\toprule
Model & $\{N,M,L,\sigma\}$ & $\| \hat{m} - 0\|_{\infty}$ & $\| \hat{\balpha} - \balpha\|_{\infty}$ & $\| \hat{\intkernel} - \intkernel\|_{\infty}/ \| \intkernel\|_{\infty}$ 
\\
\cmidrule(lr){1-2}\cmidrule(lr){3-5}
OD & $\{5,6,3,0\}$ & $8.5\cdot 10^{-4}\pm 9.0\cdot 10^{-4}$ & - & $3.8\cdot 10^{-3}\pm 1.1\cdot 10^{-3}$ 
\\
\cmidrule(lr){1-2}\cmidrule(lr){3-5}
OD & $\{5,6,3,0.1\}$ & $4.8\cdot 10^{-3}\pm 5.2\cdot 10^{-4}$ & $3.2\cdot 10^{-2}\pm 1.6\cdot 10^{-2}$ & $1.1\cdot 10^{-2}\pm 5.6\cdot 10^{-3}$ 
\\
\cmidrule(lr){1-2}\cmidrule(lr){3-5}
ODS & $\{10,3,3,0\}$ & $5.5\cdot 10^{-4}\pm 2.8\cdot 10^{-4}$ & $7.2\cdot 10^{-2}\pm 4.1\cdot 10^{-2}$ & $5.2\cdot 10^{-2}\pm 4.4\cdot 10^{-2}$ 
\\
\cmidrule(lr){1-2}\cmidrule(lr){3-5}
ODS & $\{10,3,3,0.1\}$ & $3.8\cdot 10^{-3}\pm 1.8\cdot 10^{-3}$ & $9.0\cdot 10^{-1}\pm 1.1\cdot 10^{0}$ & $3.3\cdot 10^{-2}\pm 1.9\cdot 10^{-2}$ 
\\
\bottomrule
\end{tabular} 
}
\end{center}
\end{table}

\begin{figure}[tbhp]
\centering
\includegraphics[width=0.31\linewidth,height=0.22\linewidth]{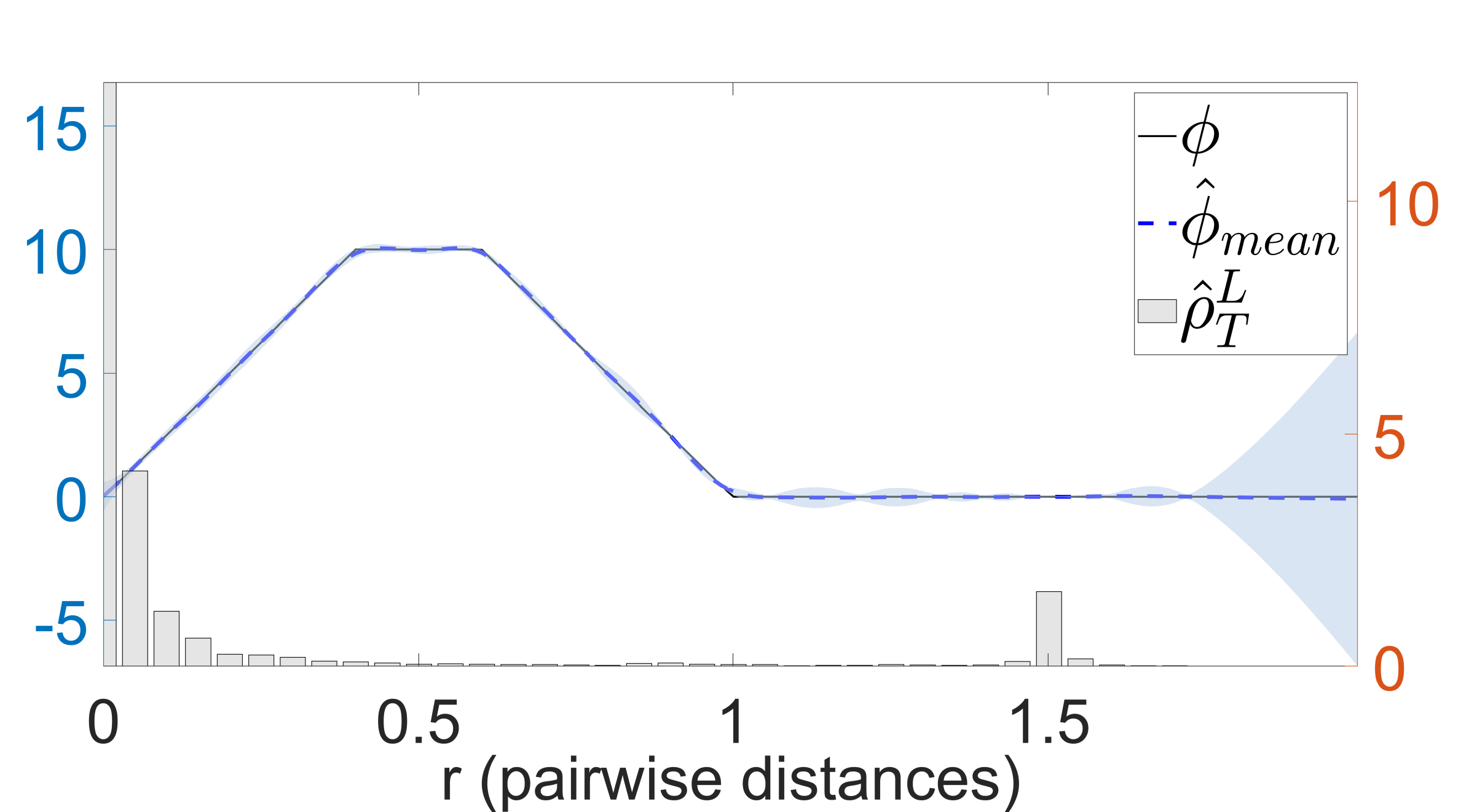}
\includegraphics[width=0.31\linewidth,height=0.22\linewidth]{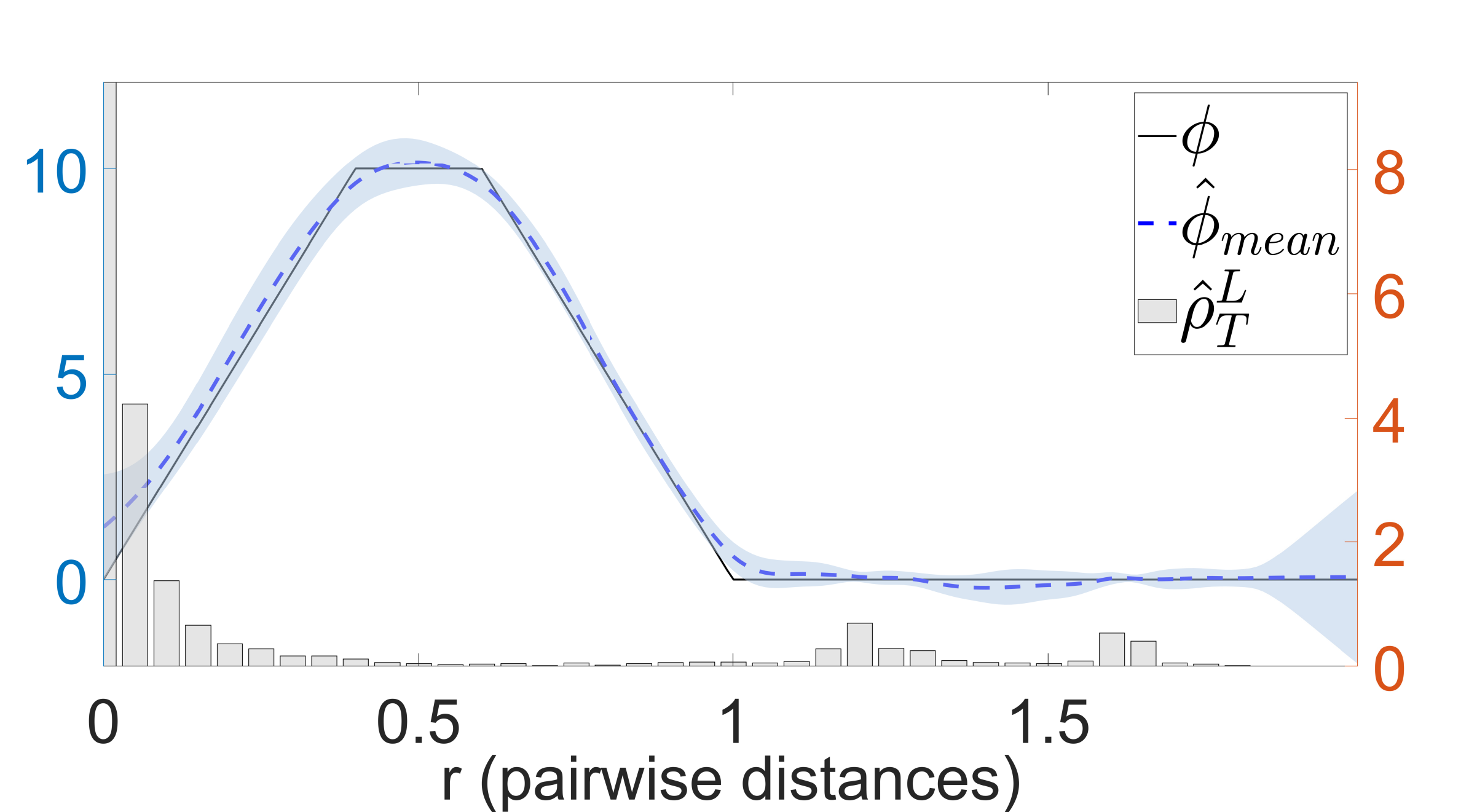}
\includegraphics[width=0.36\linewidth]{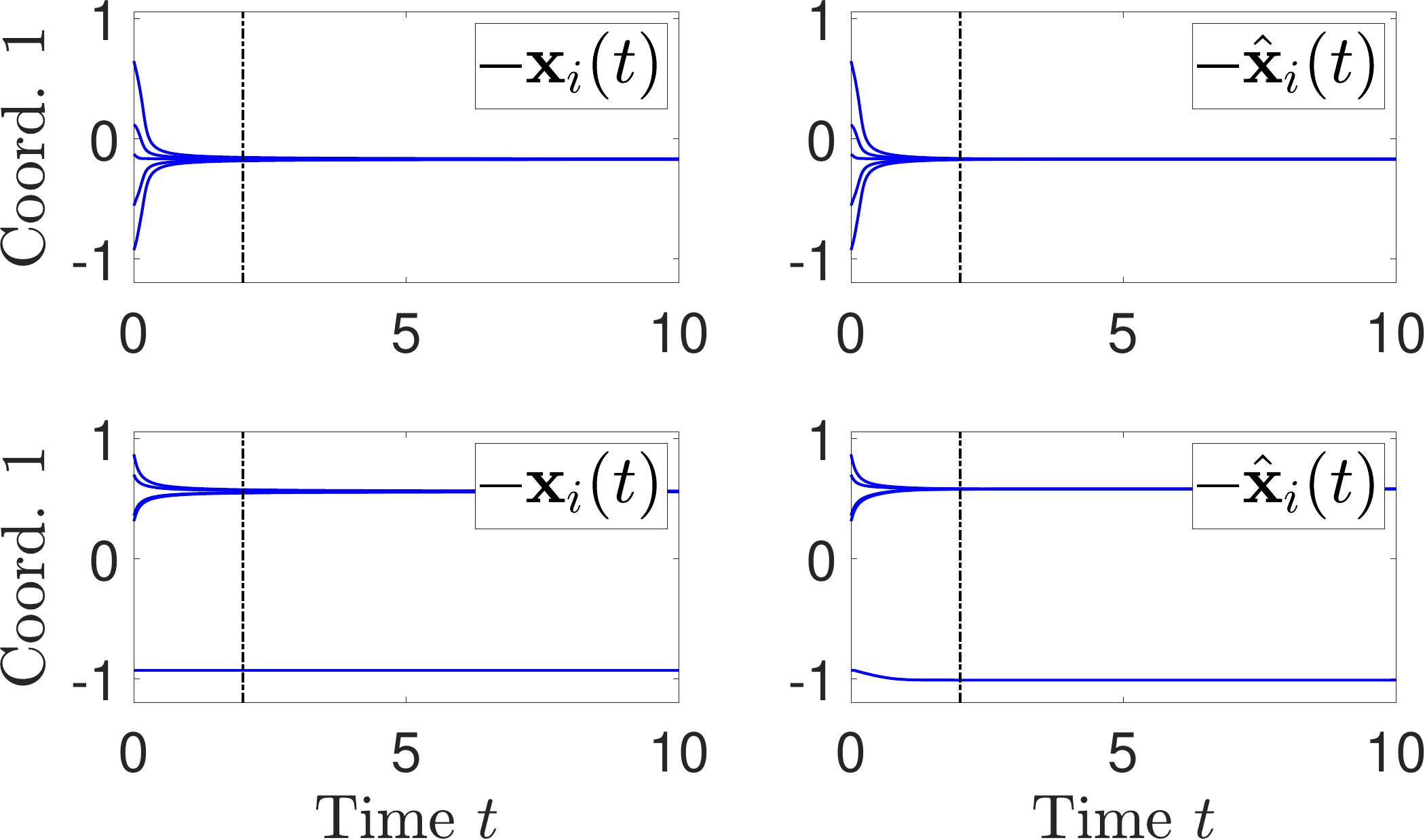}
\caption{Model selection of OD ($\{N,M,L\} = \{5,6,3\}$) and $\sigma = 0, 0.1$ using the Matérn kernel. Left: Predictive mean $\hat{\intkernel}$ of the true kernel, and two-standard-deviation band (light blue color) around the mean. The grey bars represent the empirical density of the $\tilde \rho_r$. Right: the true (left) versus predicted (right) trajectories using $\hat{\balpha}$ and $\hat{\intkernel}$  with initial conditions of training data (top) and testing data (bottom) when $\sigma = 0.1$.}
\label{fig:ex_OD_MS}
\end{figure}

\subsection{Real fish data}
\label{ex: realdata}
Finally, we test the performance of our method using two real datasets of swimming fish by Couzin et al., which are available at ScholarsArchive of Oregon State University\footnote{Katz, Yael, Kolbjorn Tunstrom, Christos C Ioannou, Cristian Huepe, and Iain D Couzin, 2021. The URL address is https://ir.library.oregonstate.edu/concern/datasets/zk51vq07c}. The experimental arena consisted of a white shallow tank of size $2.1 \times 1.2$ m ($7 \times 4$ ft) surrounded by a floor-to-ceiling white curtain. Water depth was chosen to be $4.5$-$5$ cm so the schools would be approximately 2D. We consider two data sets, one is from frame 2201 to frame 2296 which consists of 50 fish and forms a flocking behavior, and another one is from frame 4601 to frame 4798 which consists of 124 fish and forms a milling behavior. We relabel them as frame 0 to frame 95 and frame 0 to frame 198 respectively, refer to more details of the dataset in the supplementary information of \cite{katz2011inferring}.   We first normalize the position data into the region [0,1], and then we smooth the data by using a moving window average with a window size of 10 frames and apply the finite difference method to calculate the velocities and accelerations. 

\paragraph{Flocking behavior example}
For the first data set, as shown in \cref{fig:ex_real_results}(e), the fish will eventually follow approximately the same direction as time evolves. In this case, the magnitude of the velocity data of fish is relatively small. The velocities can be considered the same, as long as their normalized direction vectors are very close. So the fish exhibit approximate  flocking behavior (i.e. $\| \mbf{v}_i - \mbf{v}_c \|\approx 0$ for all $i$ and some common velocity $\mbf{v}_c$). Therefore, we use the Cucker-Smale system shown in \cref{ex: CS} to model the flocking behavior, i.e. considering the governing equation \eqref{eq:2ndOrder} with corresponding interaction kernel $\intkernela$ and force $\force(\bx_i, \dot\bx_i, \mbf{\alpha})$,  $\balpha = (\kappa, p)$ shown in \cref{tab:ex_info} for CS dynamics.

The training data consists of frame 0 and frame 28. In the training procedure, we first use a subset of data with two selected agents, the initialization of hyperparameters for $\theta^E,\theta^A,\sigma$, and $\balpha$ are $(1,1), (1,1), 0.001, (1, 1)$, and we set the length of runs in the minimizer solver to be 100. The results shown in \cref{fig:ex_real_results}(a),(b) suggest there only exist alignment-based interactions since the estimated energy-based interaction $\hat{\intkernel}^E \equiv 0$. Therefore, we use all data to learn the system with only $\intkernela$. After we obtain the estimators, we run the learned dynamical system on the time interval [0,20] with frame 0 as the initial condition. We find that the simulated position data at $t=19$ matches the position data at frame 95 very well. We then compare the original position data set with the simulated ones at $t=0:0.2:19$.

\paragraph{Milling behavior example}

For the second data set, as shown in \cref{fig:ex_real_results}(g), the fish will eventually follow approximately a milling pattern. Therefore, we use the Fish-Milling system shown in \cref{ex: FM} to model the milling behavior, i.e. considering the governing equation \eqref{eq:2ndOrder} with interaction kernel $\intkernele$ and force $\force(\bx_i, \dot\bx_i, \mbf{\alpha})$,  $\balpha = (\gamma, \beta)$ shown in \cref{tab:ex_info} for FM dynamics.

The training data consists of frame 0 and frame 28. In the training procedure, we first use a subset of data with two selected agents, the initialization of hyperparameters for $\theta^E,\theta^A,\sigma$, and $\balpha$ are $(1,1), (1,1), 0.001, (1, 1)$, and we set the length of runs in the minimizer solver to be 100. With the estimated parameters, we obtained the estimators for $\intkernele$ and $\intkernela$ using all data, and run the learned dynamical system at the time interval [0,38] with the frame 0 as the initial condition.

\begin{figure}[tbhp]
\centering
\subfigure[Real data CS: $\intkernele$]{
\includegraphics[width=0.22\textwidth,height=0.14\textwidth]{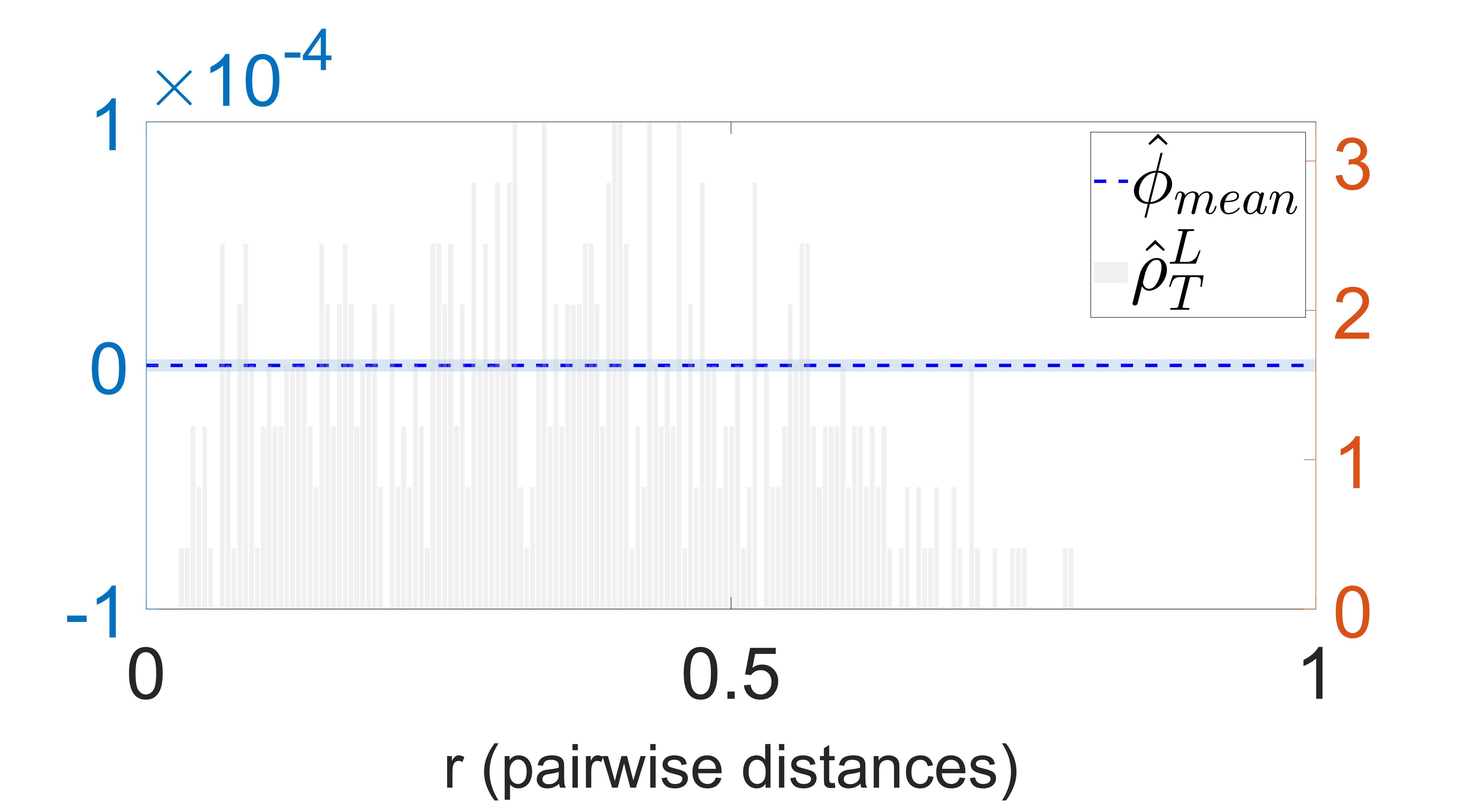}
}
\subfigure[Real data CS: $\intkernela$]{
\includegraphics[width=0.22\textwidth,height=0.14\textwidth]{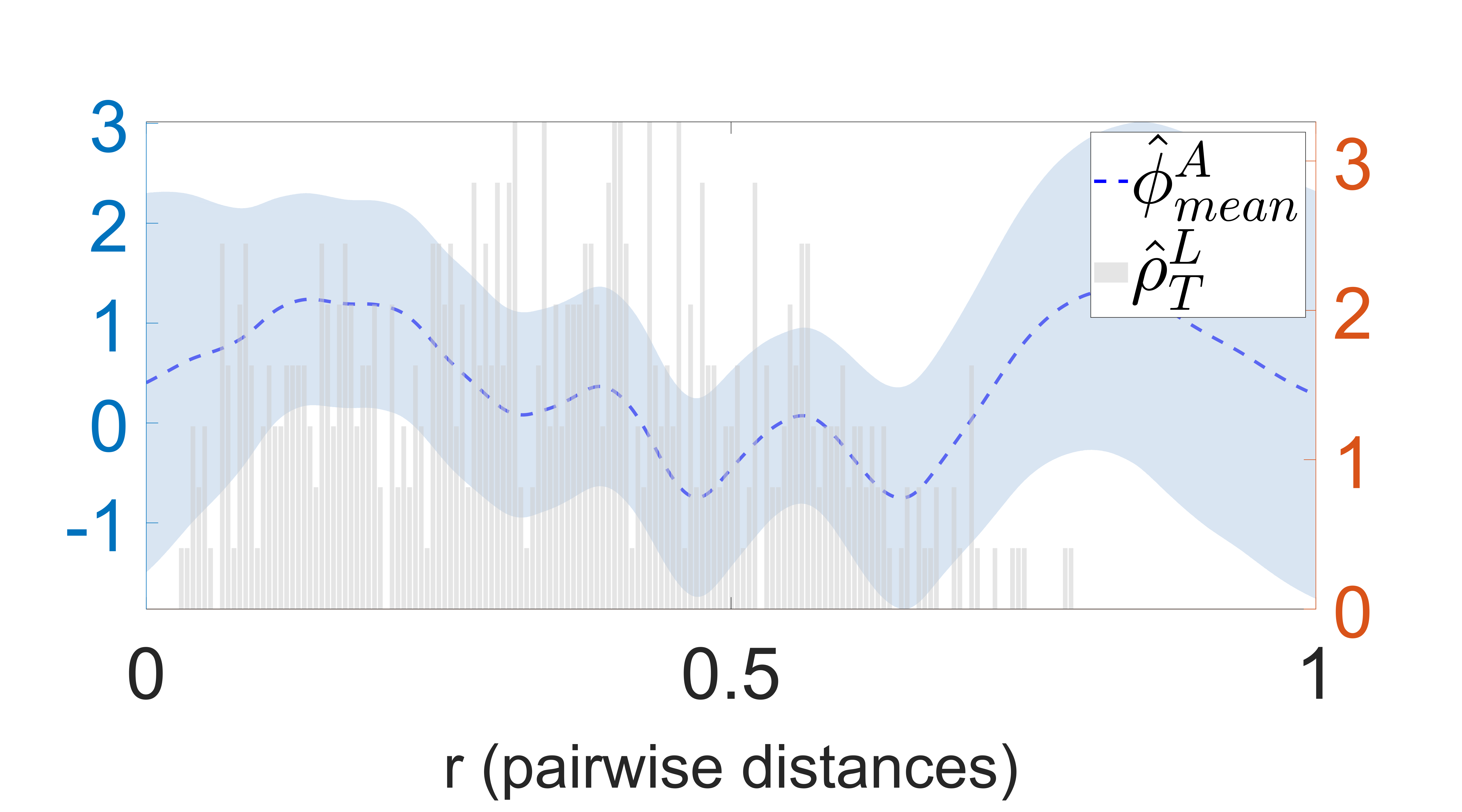}
}
\subfigure[Real data FM: $\intkernele$]{
\includegraphics[width=0.22\textwidth,height=0.14\textwidth]{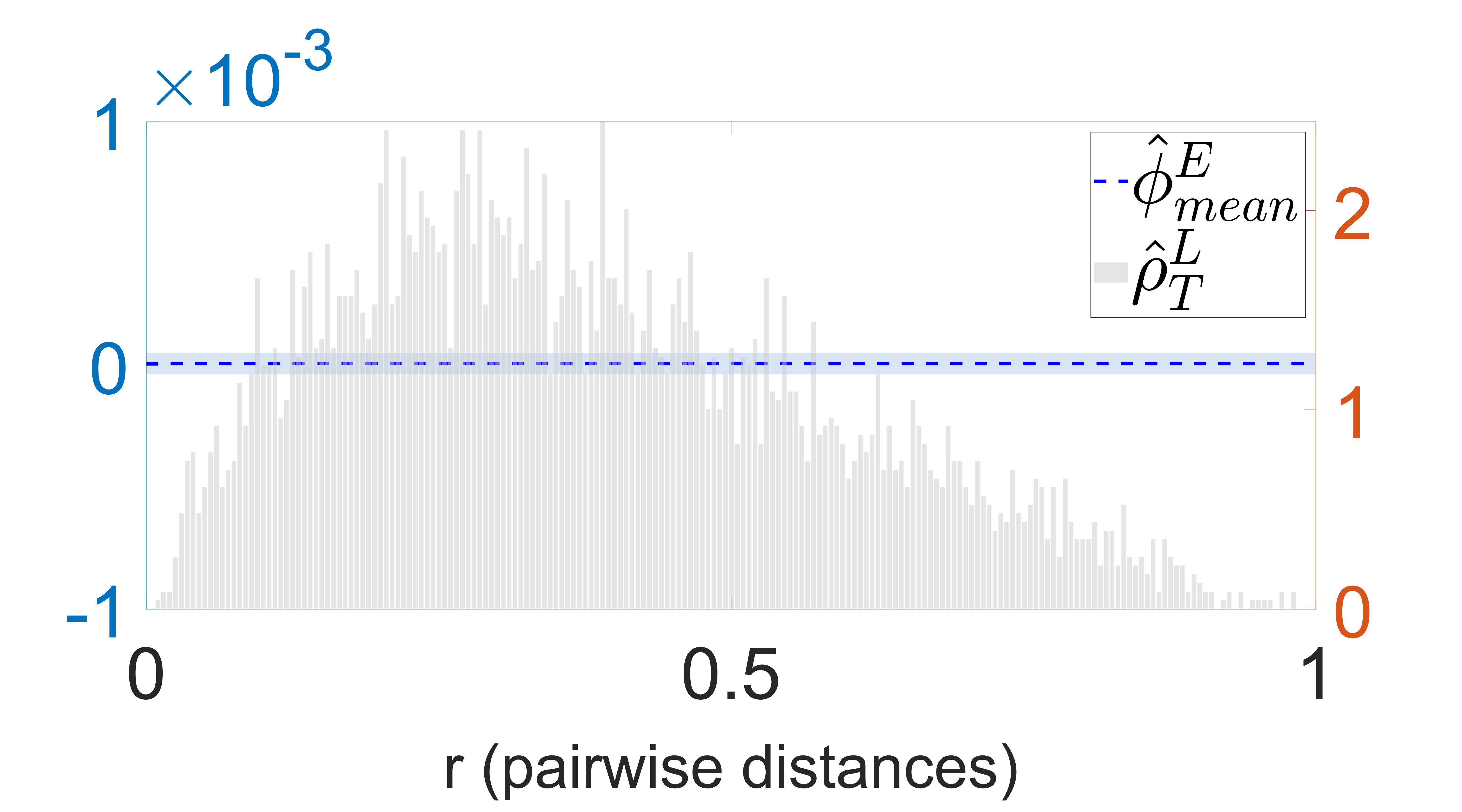}
}
\subfigure[Real data FM: $\intkernele$]{
\includegraphics[width=0.22\textwidth,height=0.14\textwidth]{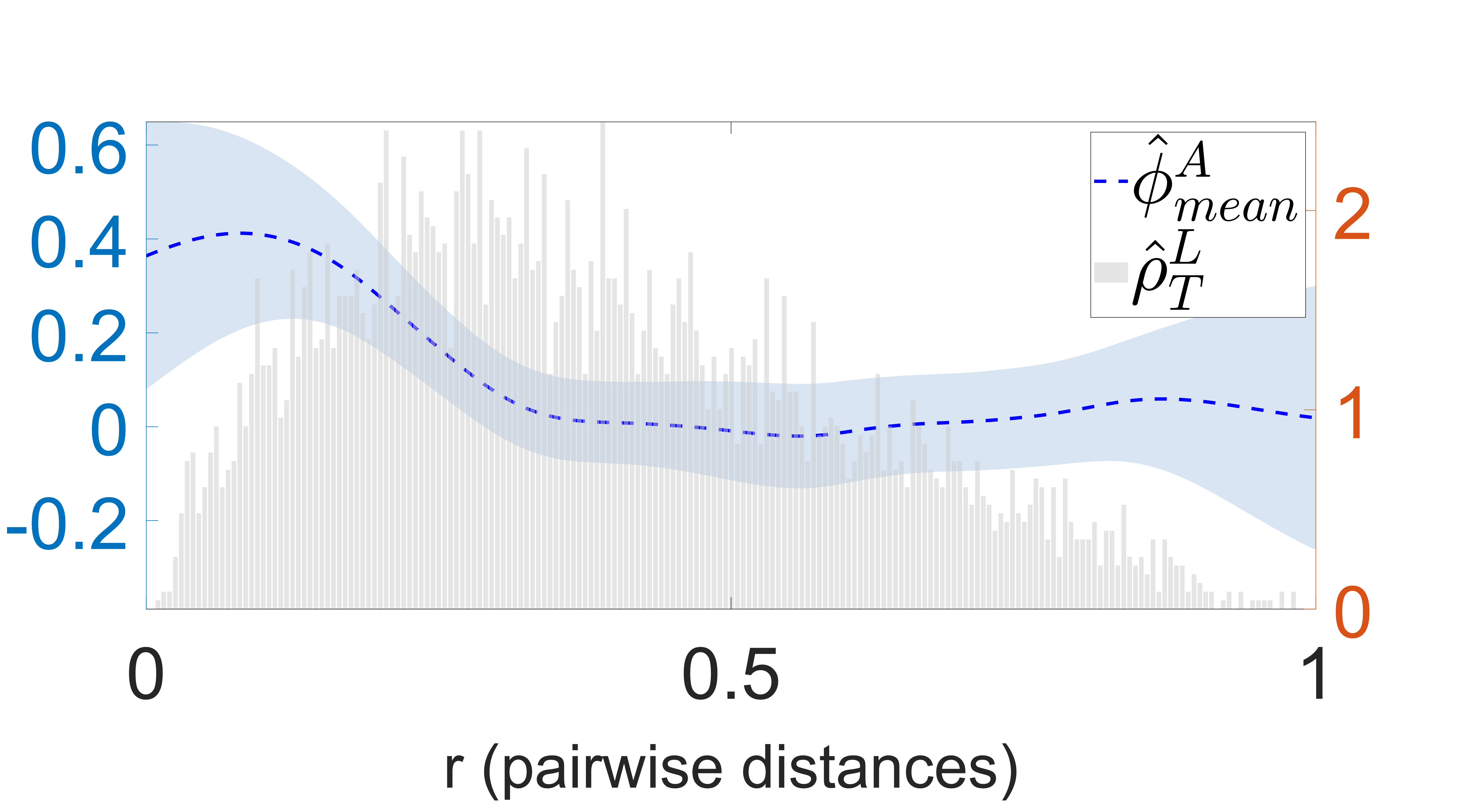}
}
\subfigure[Real data CS trajectory prediction]{
\includegraphics[width=0.3\textwidth]{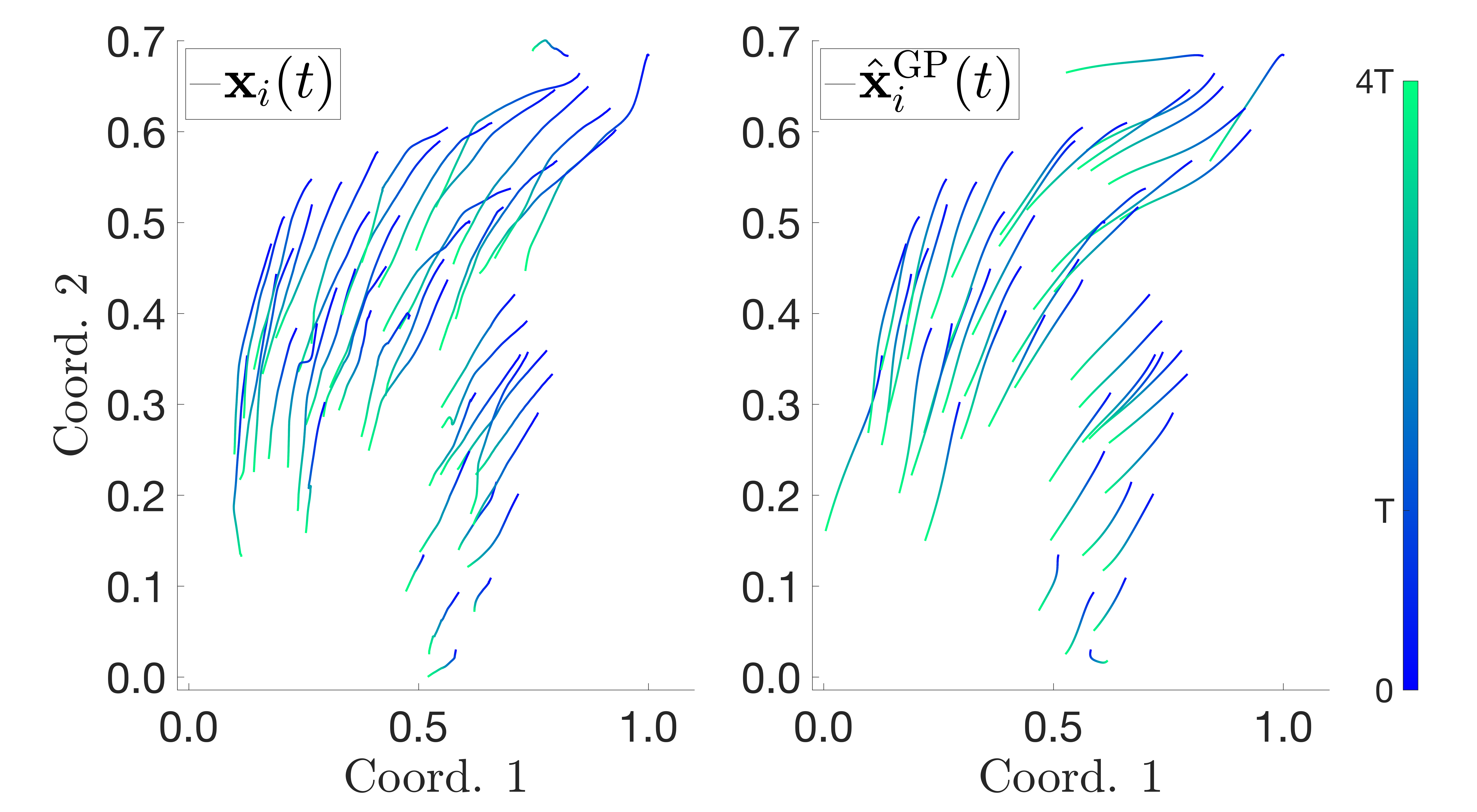}
\includegraphics[width=0.3\textwidth]{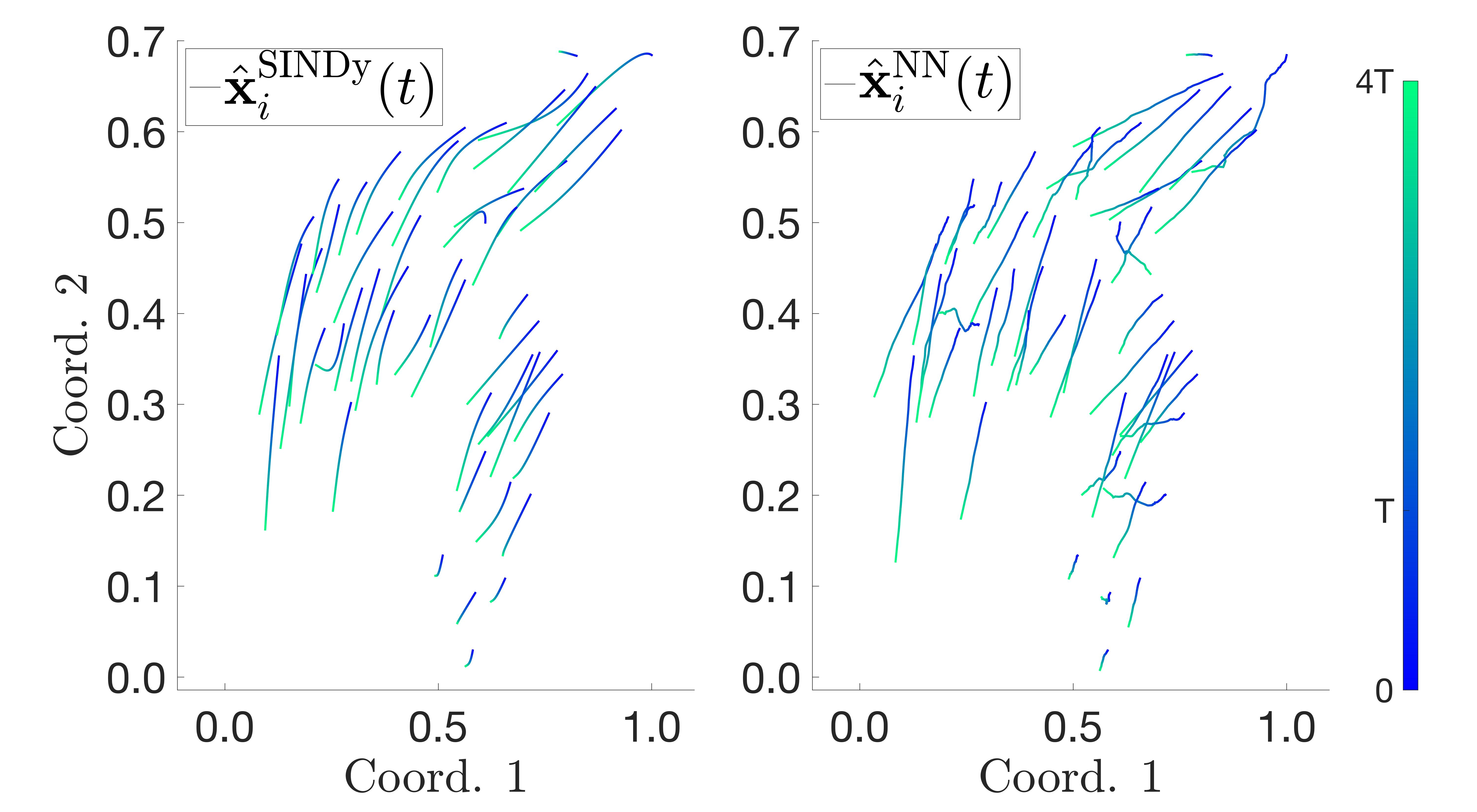}
}
\subfigure[Real data CS comparisons]{
\includegraphics[width=0.3\textwidth]{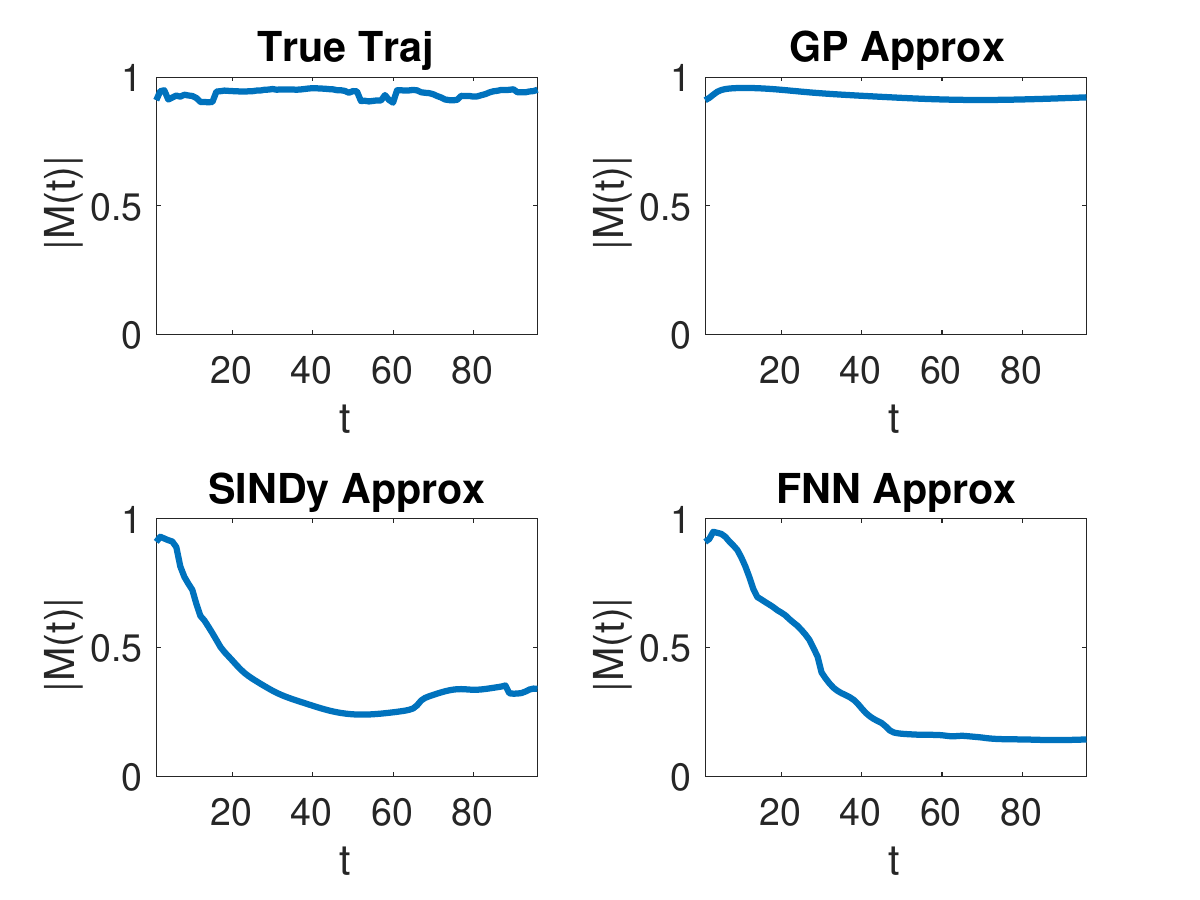}
}
\subfigure[Real data FM trajectory prediction]{
\includegraphics[width=0.3\textwidth]{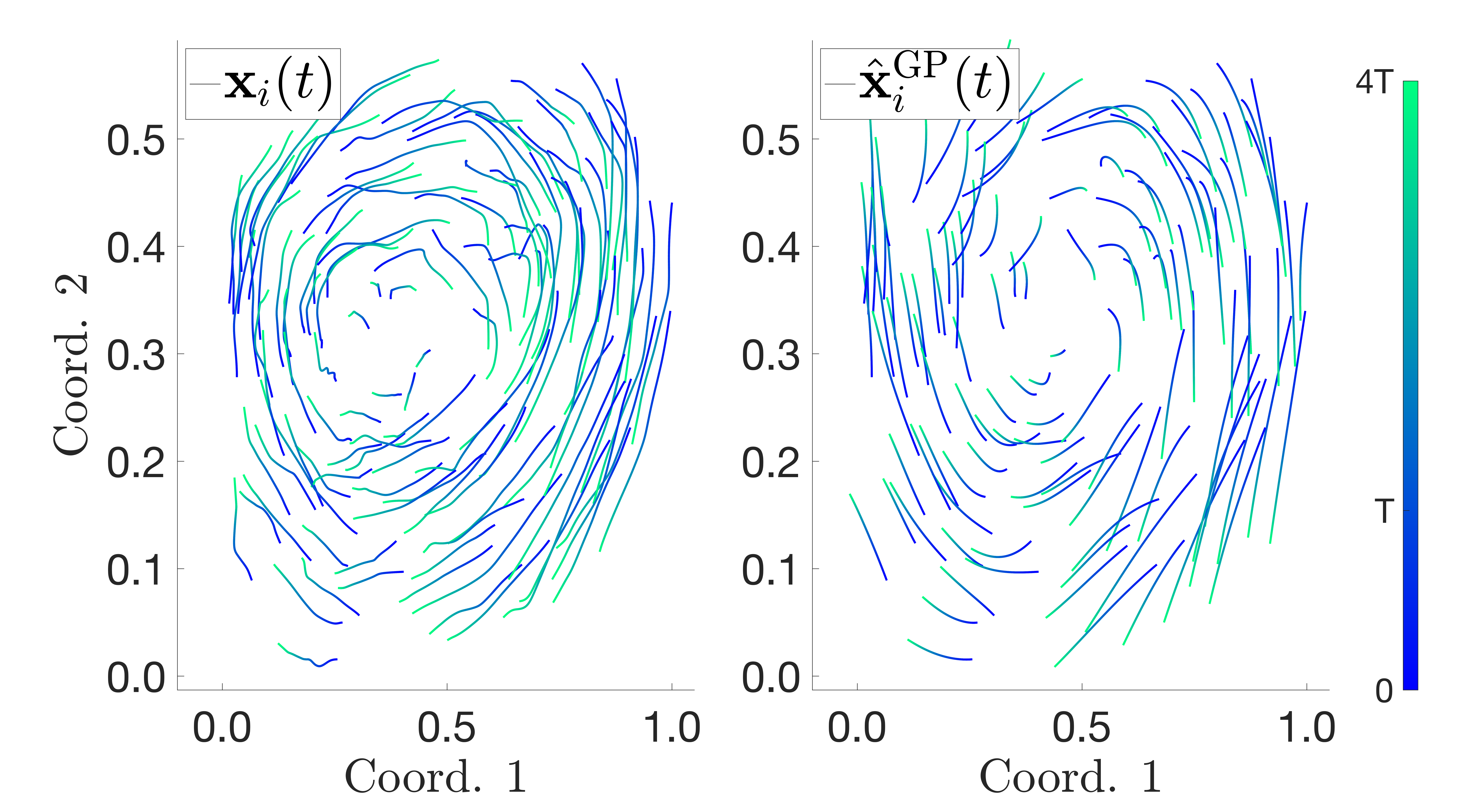}
\includegraphics[width=0.3\textwidth]{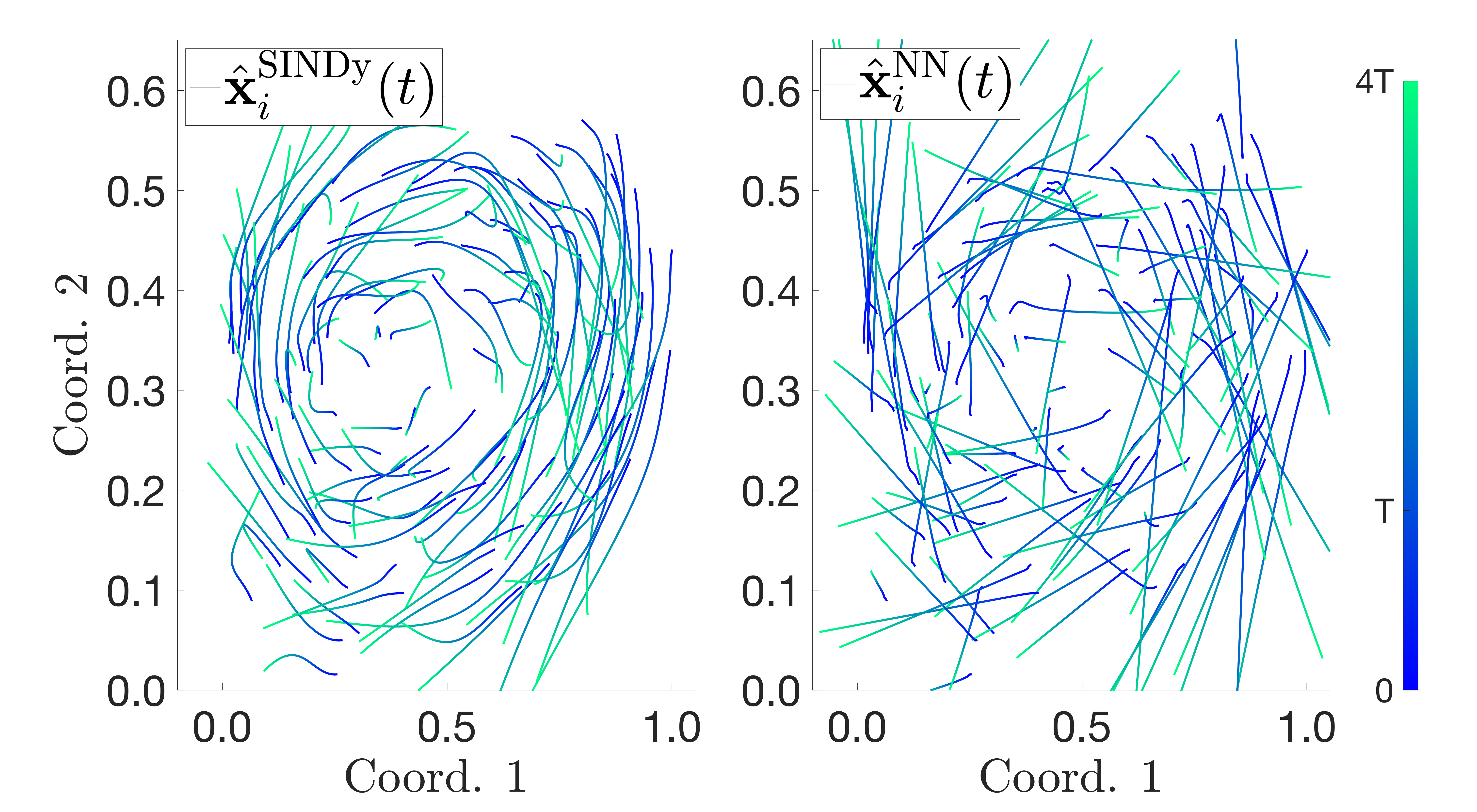}
}
\subfigure[Real data FM comparisons]{
\includegraphics[width=0.3\textwidth]{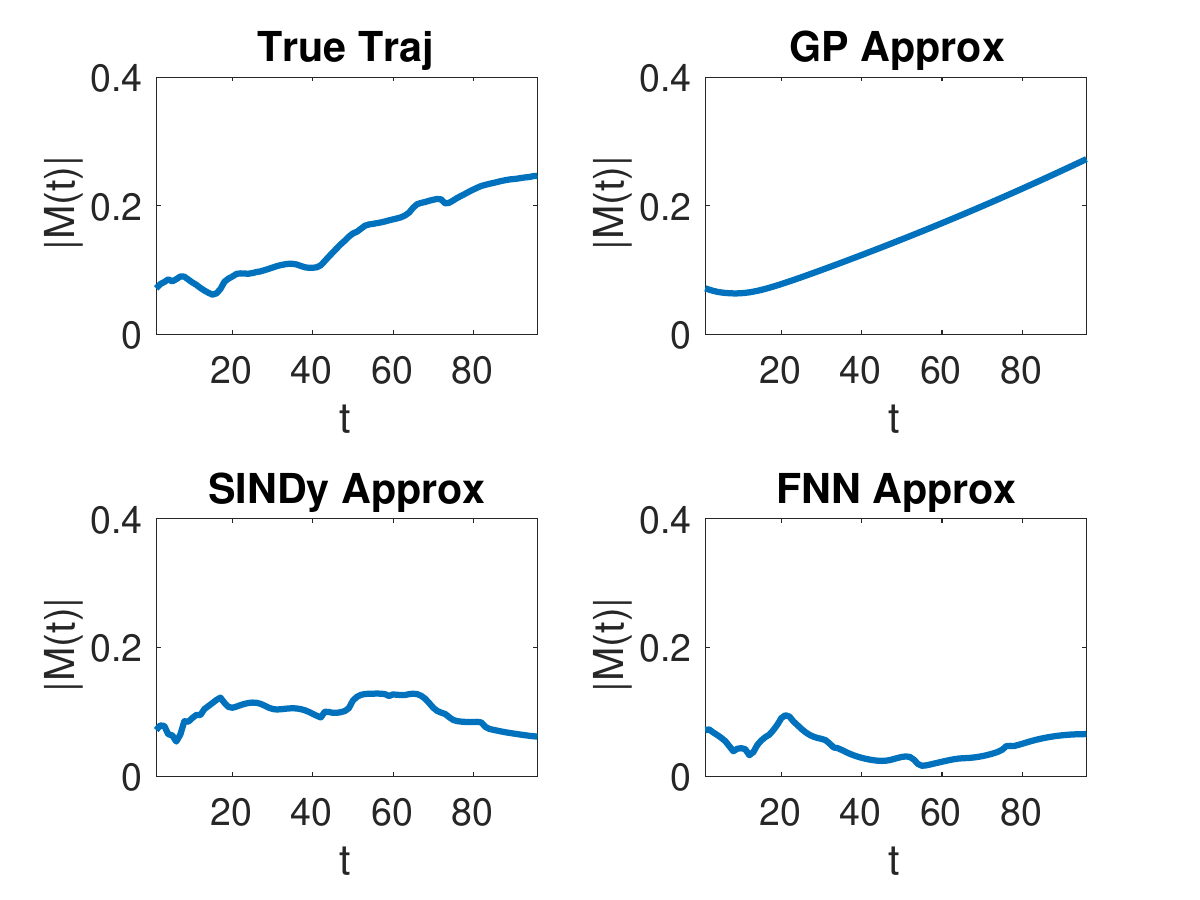}
}

\caption{(a)(b): Fitting into a Cucker-Smale system (dim=100), estimated $\intkernele$ and $\intkernela$ with all data; (c)(d): Fitting into a Fish-Milling system (dim=248), estimated $\intkernele$ and $\intkernela$ with data of 2 agents; 
(e)(g): true dynamics (left) v.s. the predicted dynamics using our proposed approach, SINDy model, and FNN model, with frame 0 as the initial condition; (f)(h): baseline comparisons using the group polarisation parameter $M(t)$.
}
\label{fig:ex_real_results}
\end{figure}

\paragraph{Measure of Performance}
To evaluate the performance at the group level, we consider the group polarisation $M(t)$ \cite{jhawar2020noise}, which is a vector order parameter that encapsulates both the direction and degree of the fish
alignment, which is defined by 
\begin{equation}
    M(t) = \frac{1}{N} \sum_{i=1}^N \hat{\bv}_i(t) 
\end{equation}
where $\hat{\bv}_i(t) = \bv_i(t)/\|\bv_i(t)\|$ is the direction of motion of the i-th fish (at time t). When $|M|$ is close to 1, the fish are moving in a coherent direction, whereas when $|M|$ is close to zero, there is no prevailing direction and individual motion is effectively isotropic.

\paragraph{Baseline Comparisons}
We perform comparisons with approaches that learn the right-hand side function of \eqref{eq:2ndOrder} directly from trajectory data: the first one is SINDy  \cite{brunton2016discovering}, which aims at finding a sparse representation for each row of governing equations in a (typically large) dictionary; the second one is regression using feed-forward neural networks, for which we use the MATLAB\textsuperscript{\textregistered} 2021a Deep Learning Toolbox\textsuperscript{\texttrademark}. 

For the SINDy model, we apply a reasonably large dictionary consisting of monomials up to order 2, sines, and cosines of frequencies $\{k\}_{k=1}^{10}$.
For the neural network model, we consider a three-layer FNN (Feed-Forward Neural Network) with $[50, 50, 25]$ hidden units for the flocking behavior example, and a two-layer FNN with $[40, 20]$ hidden units for the milling behavior example.

The predictive trajectories for the flocking behavior example using different models are shown in \cref{fig:ex_real_results}(e). We compare the performances in terms of group polarisation $M(t)$ in \cref{fig:ex_real_results}(f), and the order-1 Wasserstein distances between the empirical distributions of $M(t)$ in true data and the predicted dynamics are $W(p_M^{true},p_M^{gp}) = 0.0112$, $W(p_M^{true},p_M^{SINDy}) = 0.5497$, and $W(p_M^{true},p_M^{NN}) = 0.5869$, see \cref{fig:ex_real_compare1} (Left). The results for the milling behavior example are shown in \cref{fig:ex_real_results}(g). We compare the performances in \cref{fig:ex_real_results}(h), and the order-1 Wasserstein distances between the empirical distributions of $M(t)$ in the true data and the predicted dynamics are $W(p_M^{true},p_M^{gp}) = 0.0087$, $W(p_M^{true},p_M^{SINDy}) = 0.0530$, and $W(p_M^{true},p_M^{NN}) = 0.1055$, see \cref{fig:ex_real_compare1} (Right). In both examples, we can see that although both predictions using the SINDy and FNN models look similar to the true trajectories,  based on the group polarisation parameter $M(t)$ and comparing the changes of $M(t)$ in $t$ or the empirical distributions of $M(t)$, only our model using GP captures the group behaviors.

\begin{figure}[tbhp]
\centering
\includegraphics[width=0.47\textwidth]{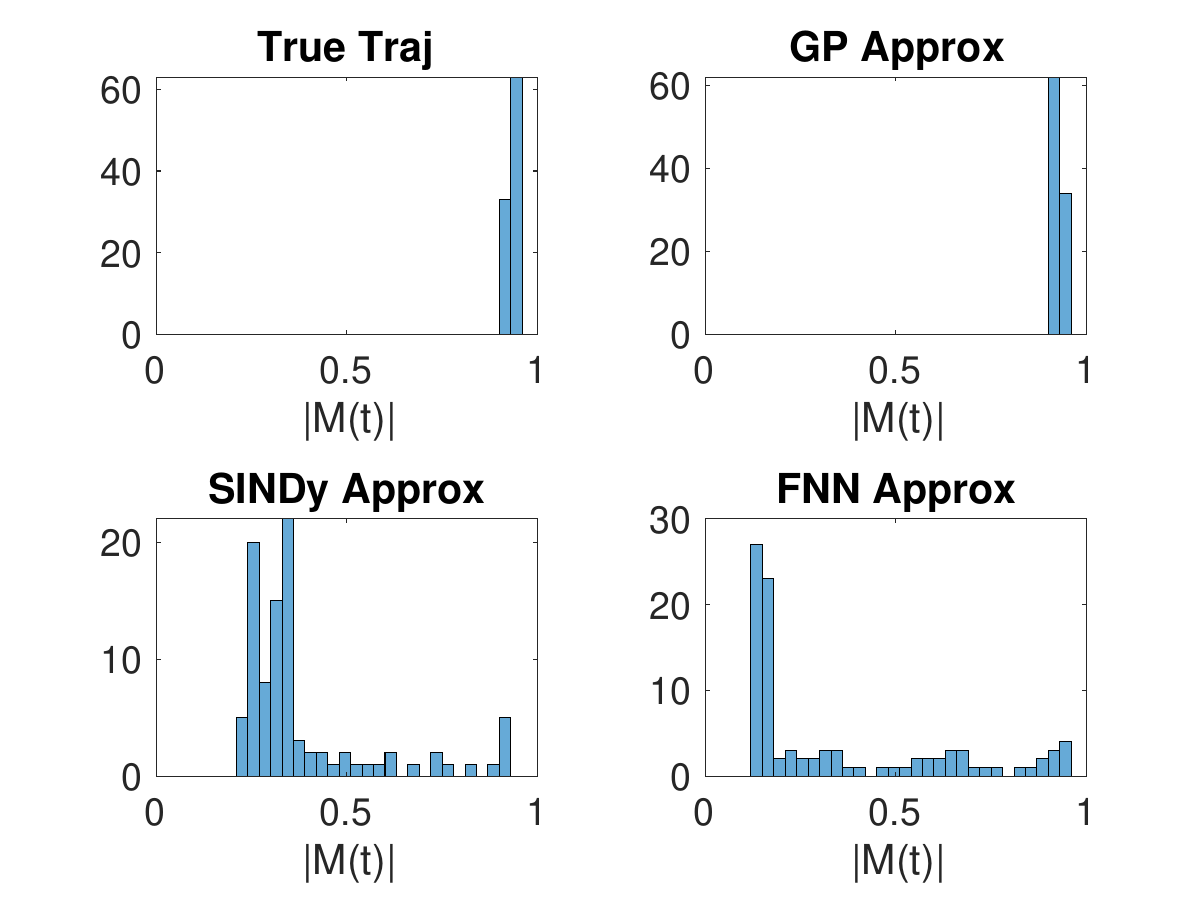}
\includegraphics[width=0.47\textwidth]{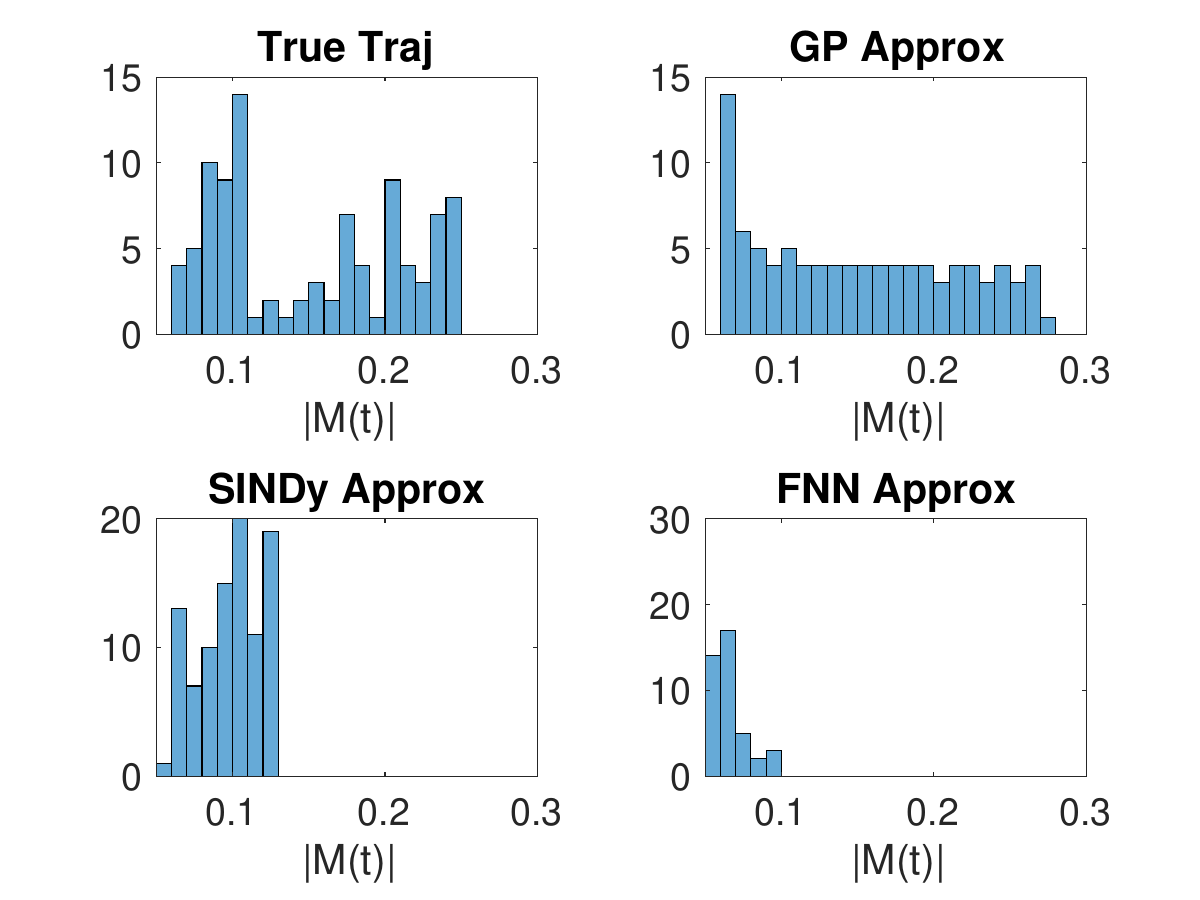}
\caption{Baseline comparisons using
the group polarisation parameter $M(t)$. Left: the empirical distribution of $M(t)$ in the flocking behavior example, the order-1 Wasserstein distances between true data and the predicted dynamics are $W(p_M^{true},p_M^{gp}) = 0.0112$, $W(p_M^{true},p_M^{SINDy}) = 0.5497$, and $W(p_M^{true},p_M^{NN}) = 0.5869$. Right: the empirical distribution of $M(t)$ in the milling behavior example, the order-1 Wasserstein distances between true data and the predicted dynamics are $W(p_M^{true},p_M^{gp}) = 0.0087$, $W(p_M^{true},p_M^{SINDy}) = 0.0530$, and $W(p_M^{true},p_M^{NN}) = 0.1055$.}
\label{fig:ex_real_compare1}
\end{figure}

We also compare our result with two other GP models in the milling behavior example, where the parameters $\balpha$ are not estimated properly: (1) we use the initial values of hyperparameters, i.e. let $\theta^E,\theta^A,\sigma$ and $\balpha$ equal $(1,1), (1,1), 0.001, (1,1)$, and do not train those hyperparameters; (2) we apply the noise-free model, i.e. do not consider noise and let $\sigma \equiv 0$. The results of these two GP models are shown in \cref{fig:ex_real_compare2}.

\begin{figure}[tbhp]
\centering
\includegraphics[width=0.3\linewidth,height=0.2\linewidth]{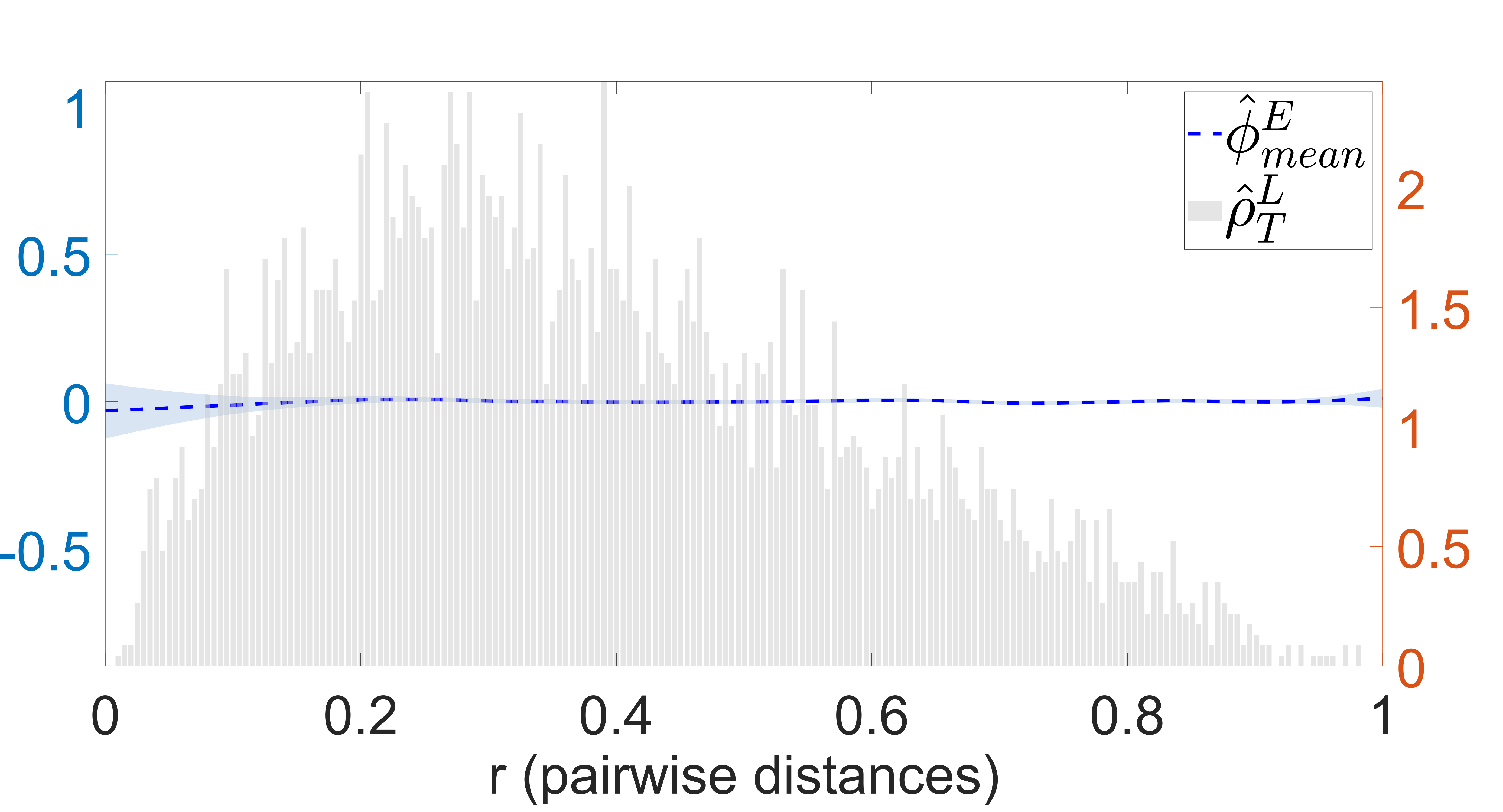}
\includegraphics[width=0.3\linewidth,height=0.2\linewidth]{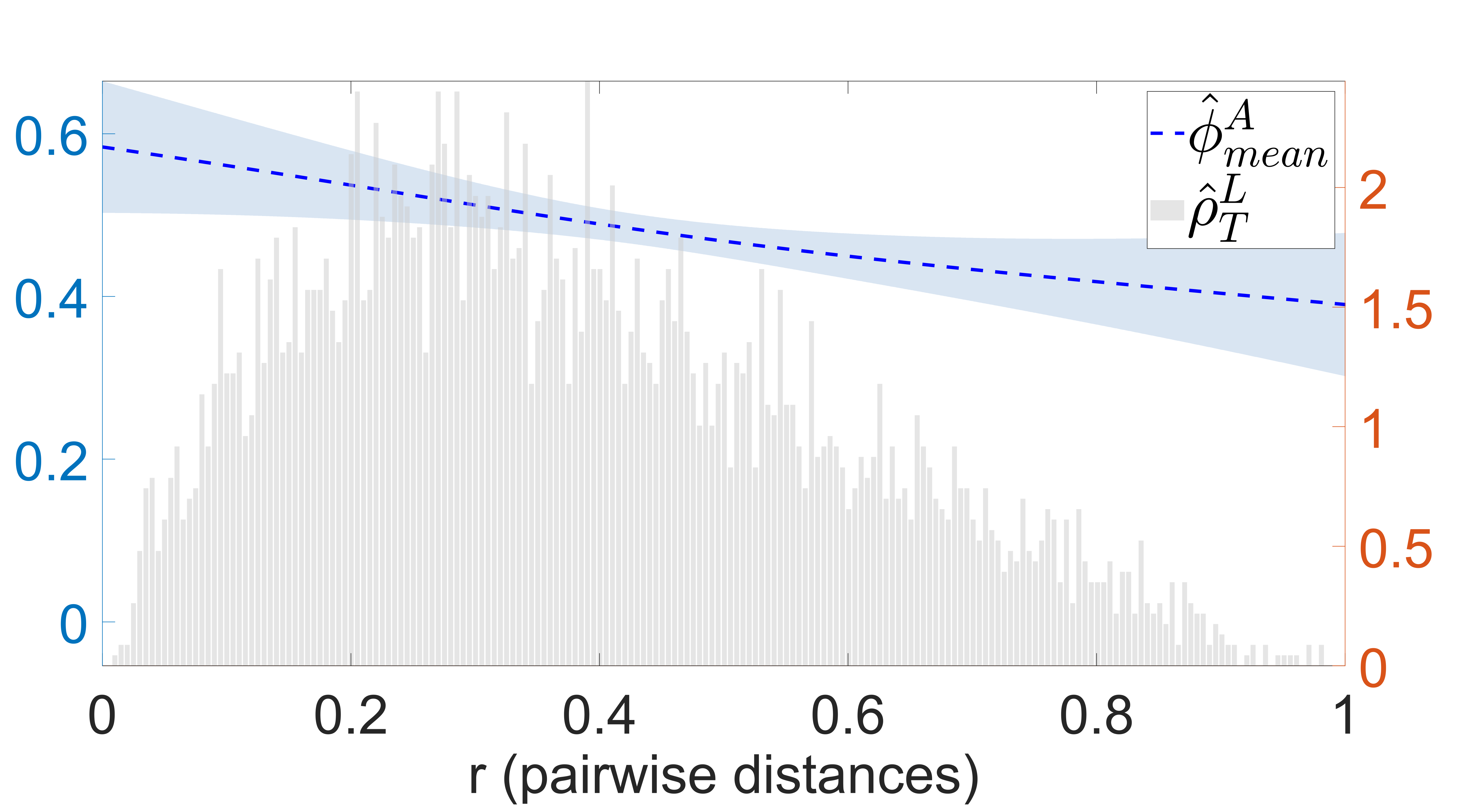}
\includegraphics[width=0.38\textwidth]{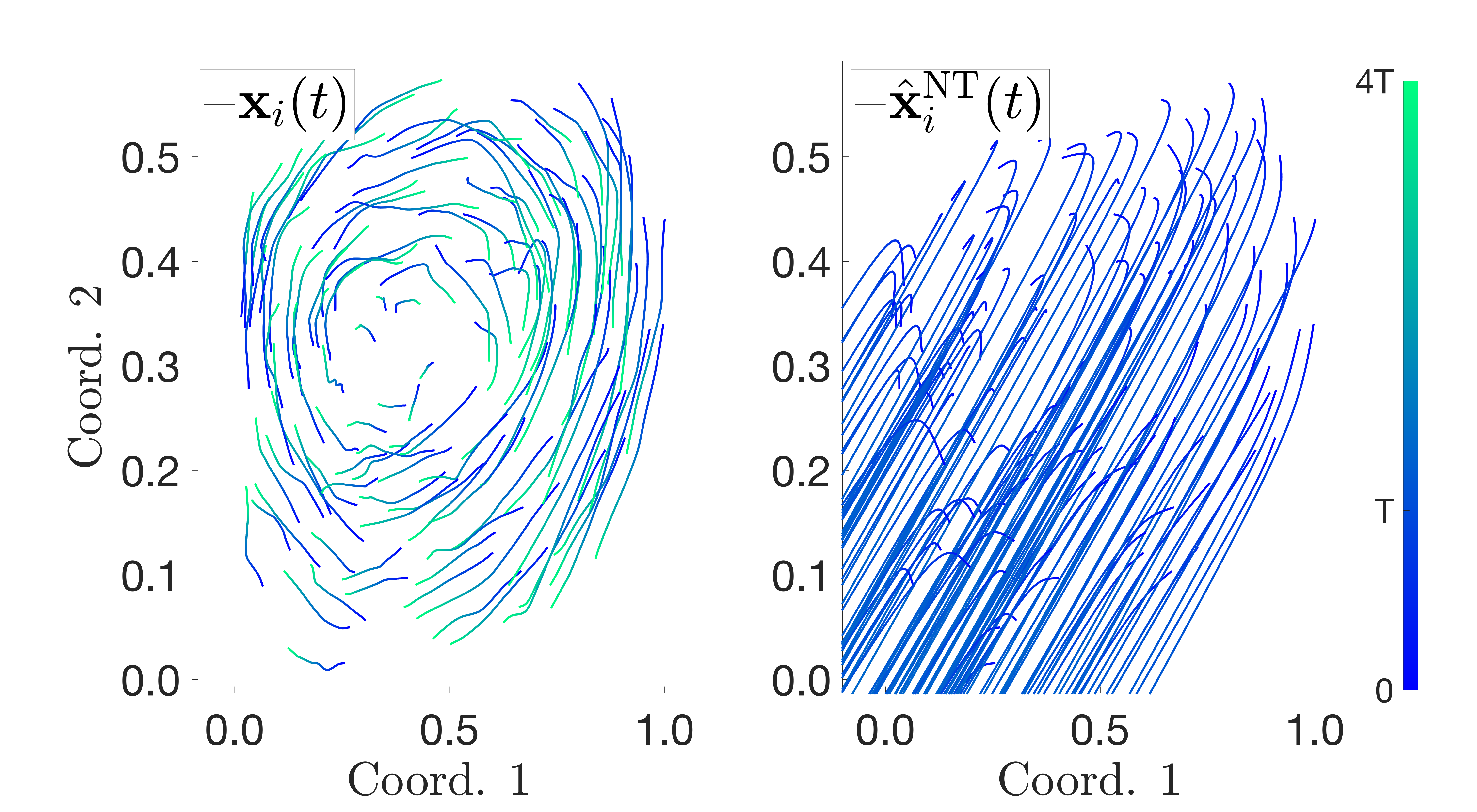}
\includegraphics[width=0.3\linewidth,height=0.2\linewidth]{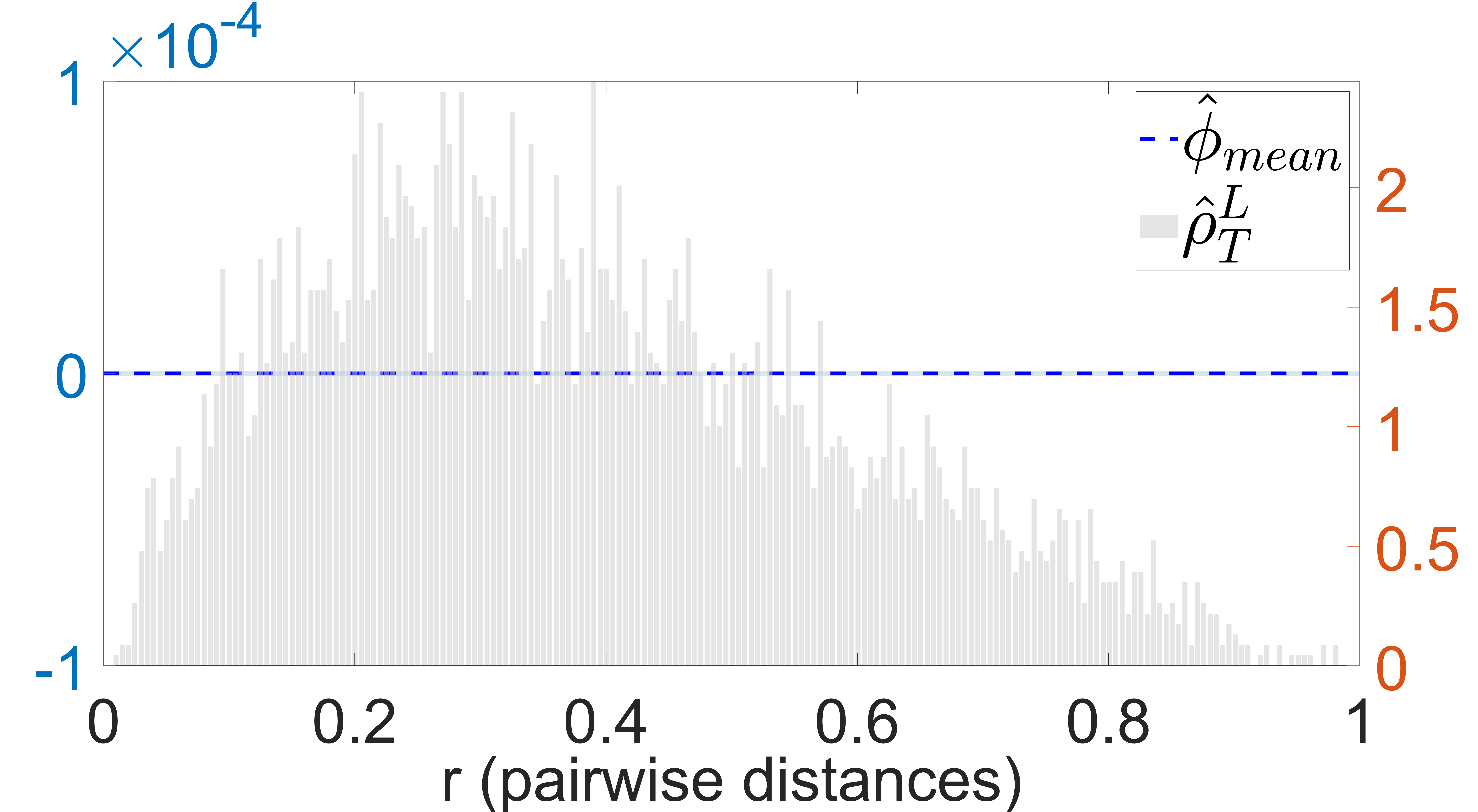}
\includegraphics[width=0.3\linewidth,height=0.2\linewidth]{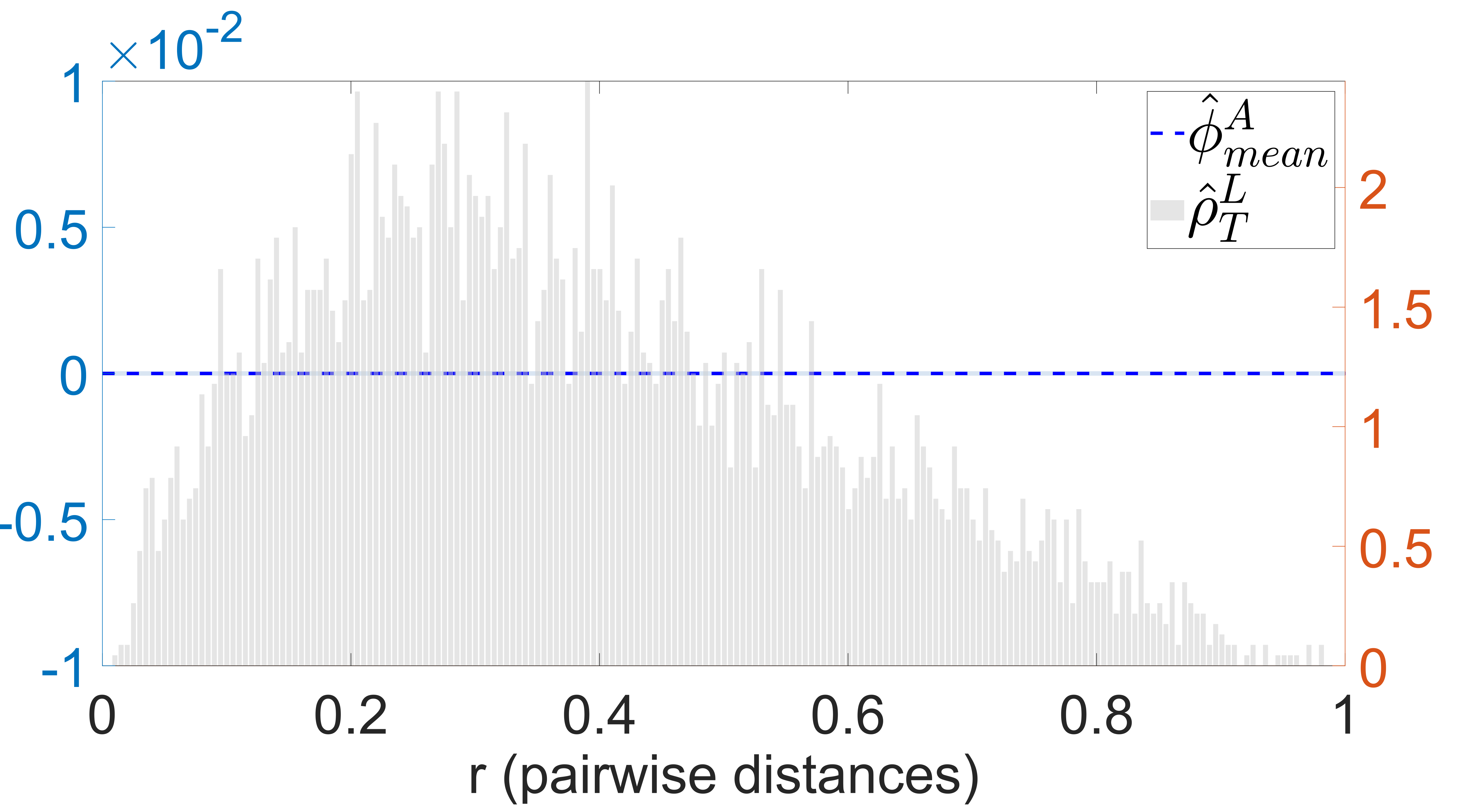}
\includegraphics[width=0.38\textwidth]{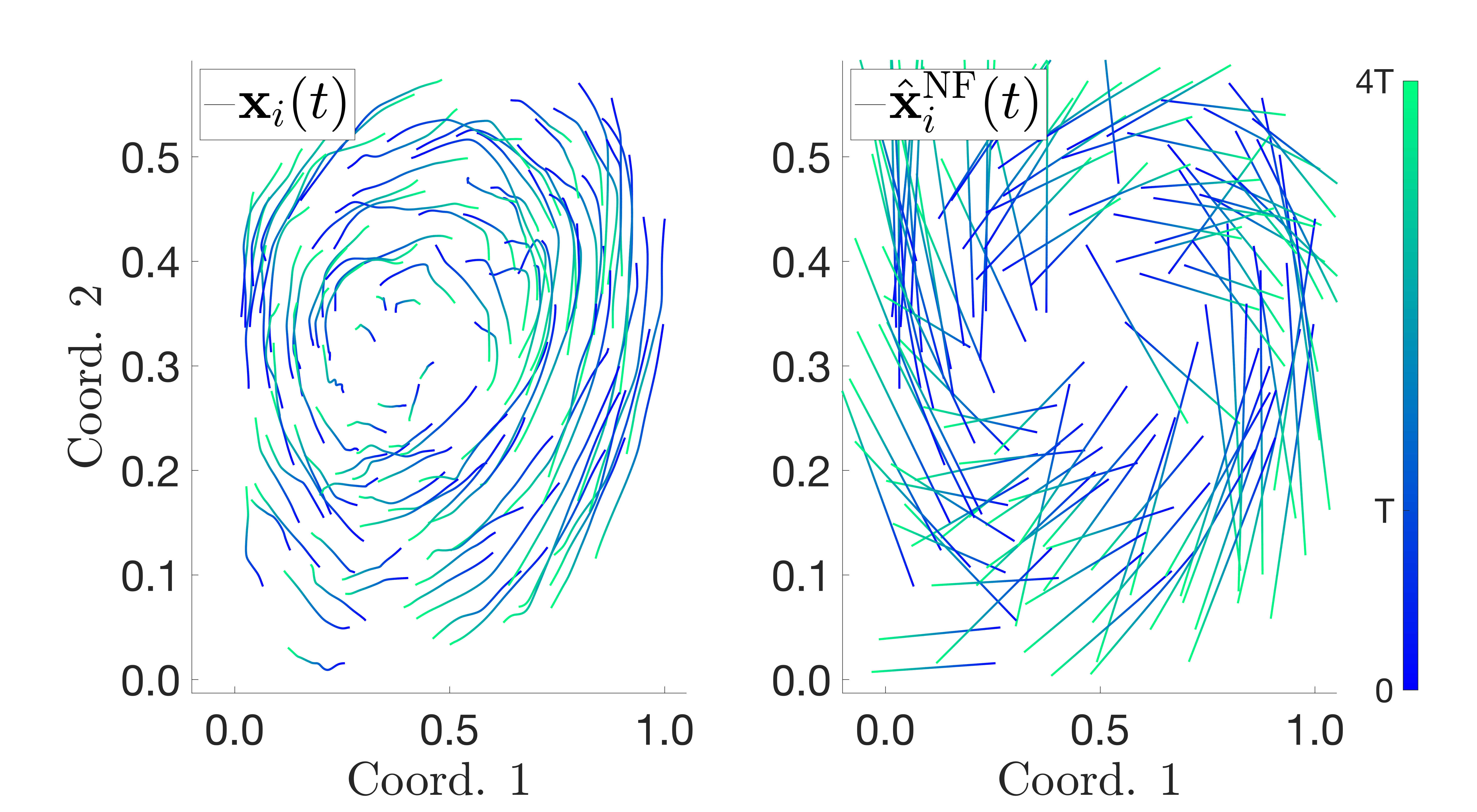}
\caption{Fitting into a Fish-Milling system (dim=248). Top: estimated $\intkernele$, $\intkernela$, and predictive trajectory with initial parameters, i.e. no training (NT) for hyperparameters $\theta^E,\theta^A,\sigma$, and $\balpha$; Bottom: estimated $\intkernele$, $\intkernela$, and predictive trajectory using noise-free (NF) model, i.e. $\hat\sigma = 0$. }
\label{fig:ex_real_compare2}
\end{figure}

\section{Acceleration Result Comparison}\label{accelNumerical}

We now present our acceleration (see \cref{accelex}) results for a 20-dimensional Fish Milling (FM) system with increasing observational data. When we have a larger amount of observational data, we will focus on learning $\sigma, \gamma,$ and $\beta$, and use $\theta^E = \theta^A = 1$ for a default prior with the Mat\'ern kernel. We will show the impact on kernel predictions is minimal. 

 We use $\nu = \frac 3 2$ for all examples below. All results shown are averaged over $10$ complete runs with standard deviation included where we used the same training data but with initialized hyperparameters uniformly at random from an interval centered at the ground truth with radius 0.5 in each trial. We use the randomized Gaussian Nystrom preconditioner \cite{randomnyst} for all tests with rank the floor of $\frac{30}{\log(12)} \cdot \log(\frac{NdML}{10})$. While we would ideally use the effective rank of our kernel matrix, this is expensive to compute in practice and we resort to empirical approximation for our trials.

\begin{figure}[!htb]
\centering

\includegraphics[width=0.45\linewidth]{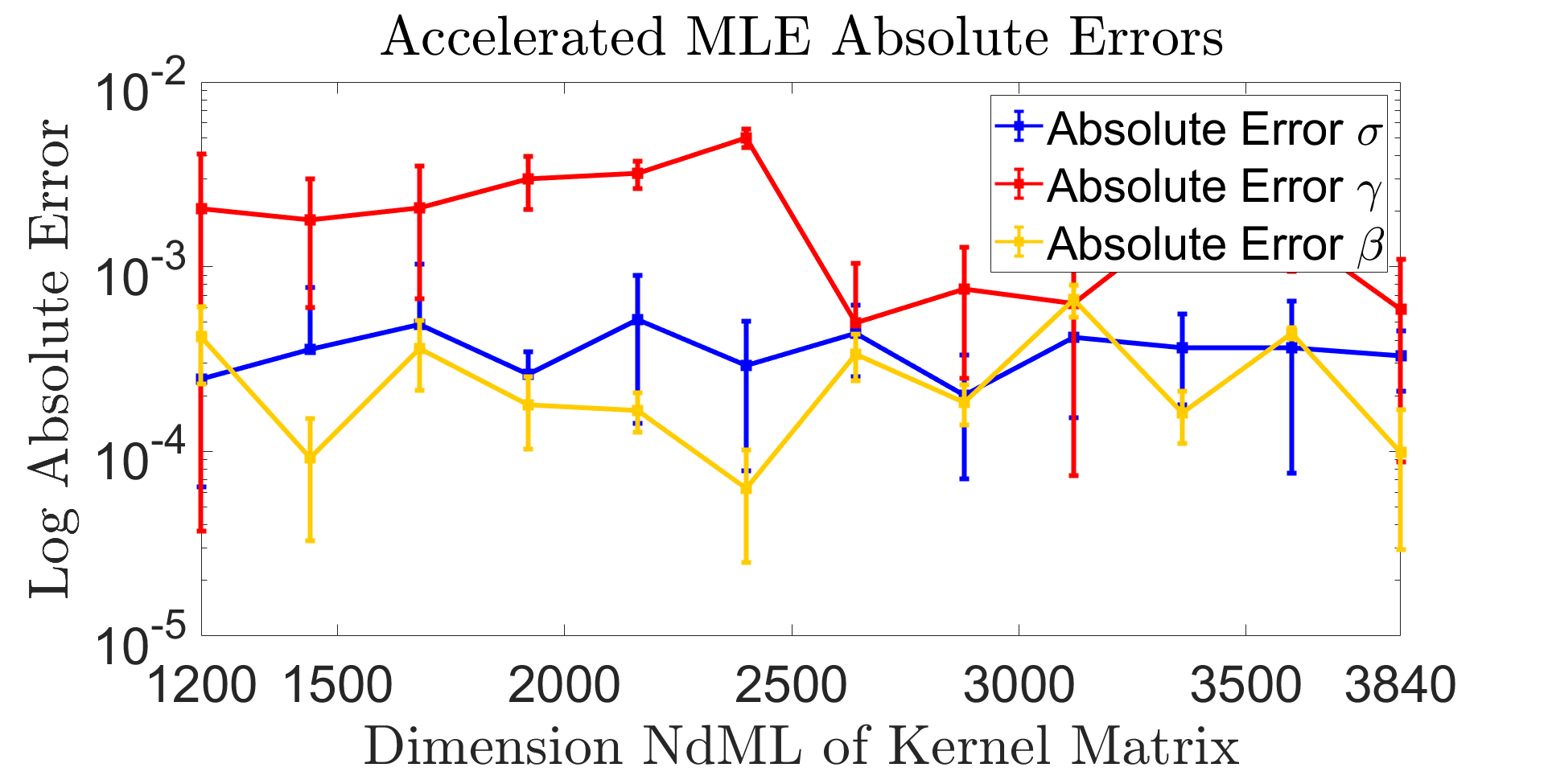}
\includegraphics[width=0.45\linewidth]{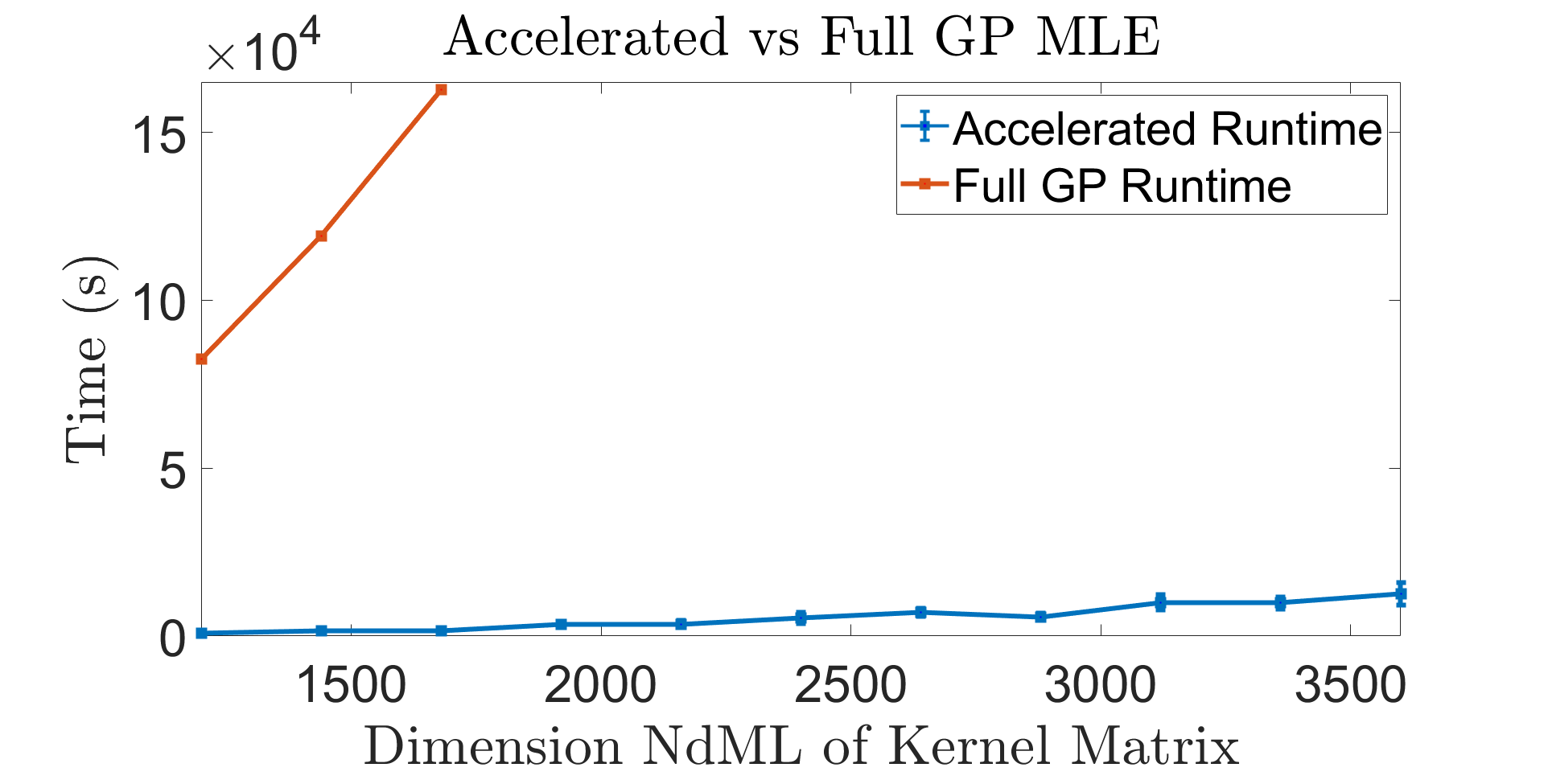}

\caption{Log plot of absolute error of learned hyperparameters $\gamma$ (red), $\beta$ (yellow), and $ \sigma$ (blue) for the FM system ($\{N,M,L\} = \{20,M,6\}$) with varying $M$. True values are $\sigma = 0.01, \gamma = 1.5, \beta = 0.5$. Shown also is a runtime comparison with Full GP.}
\label{fig:acceleration_mle}

\includegraphics[width=0.45\linewidth]{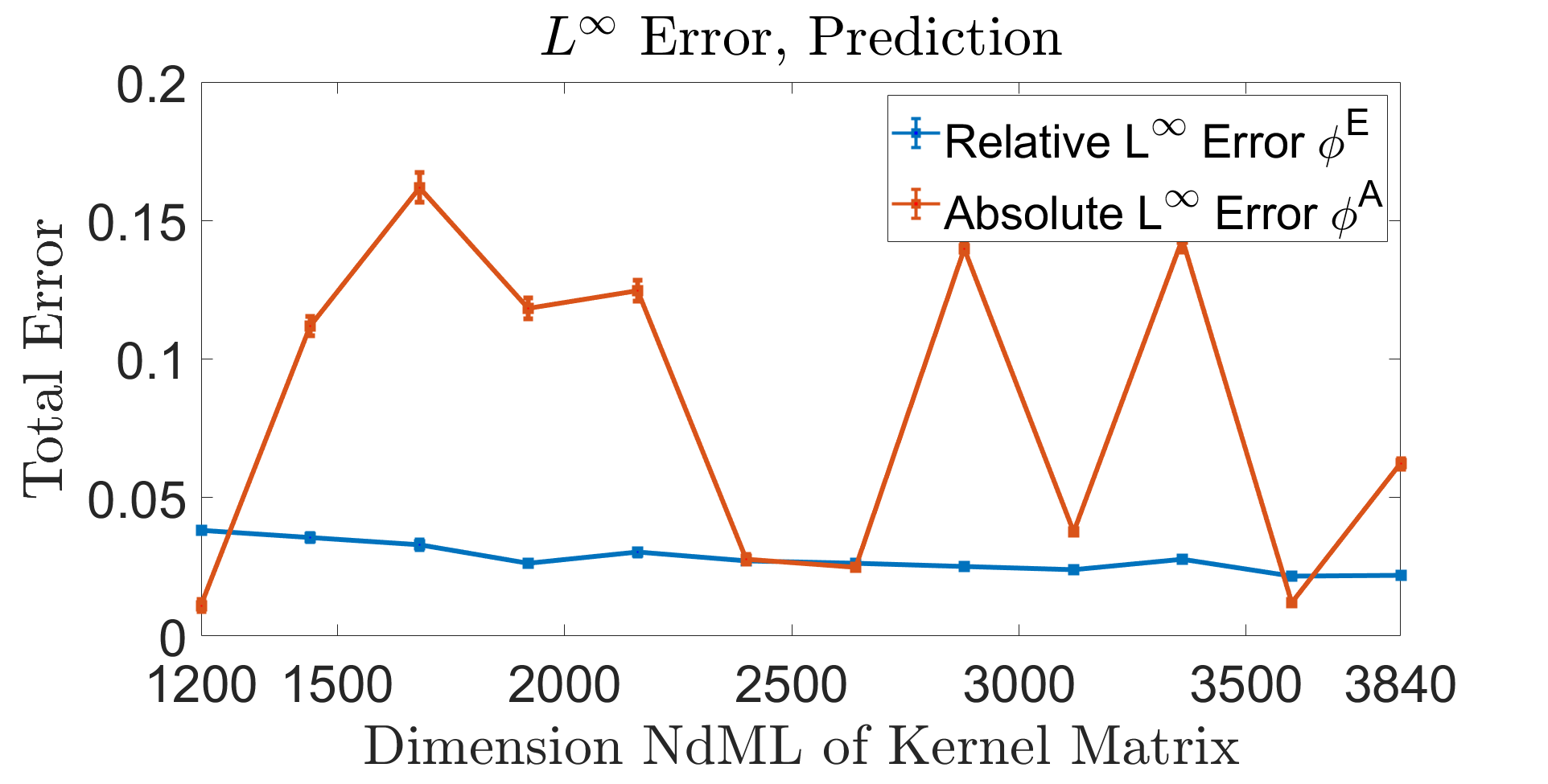}
\includegraphics[width=0.45\linewidth]{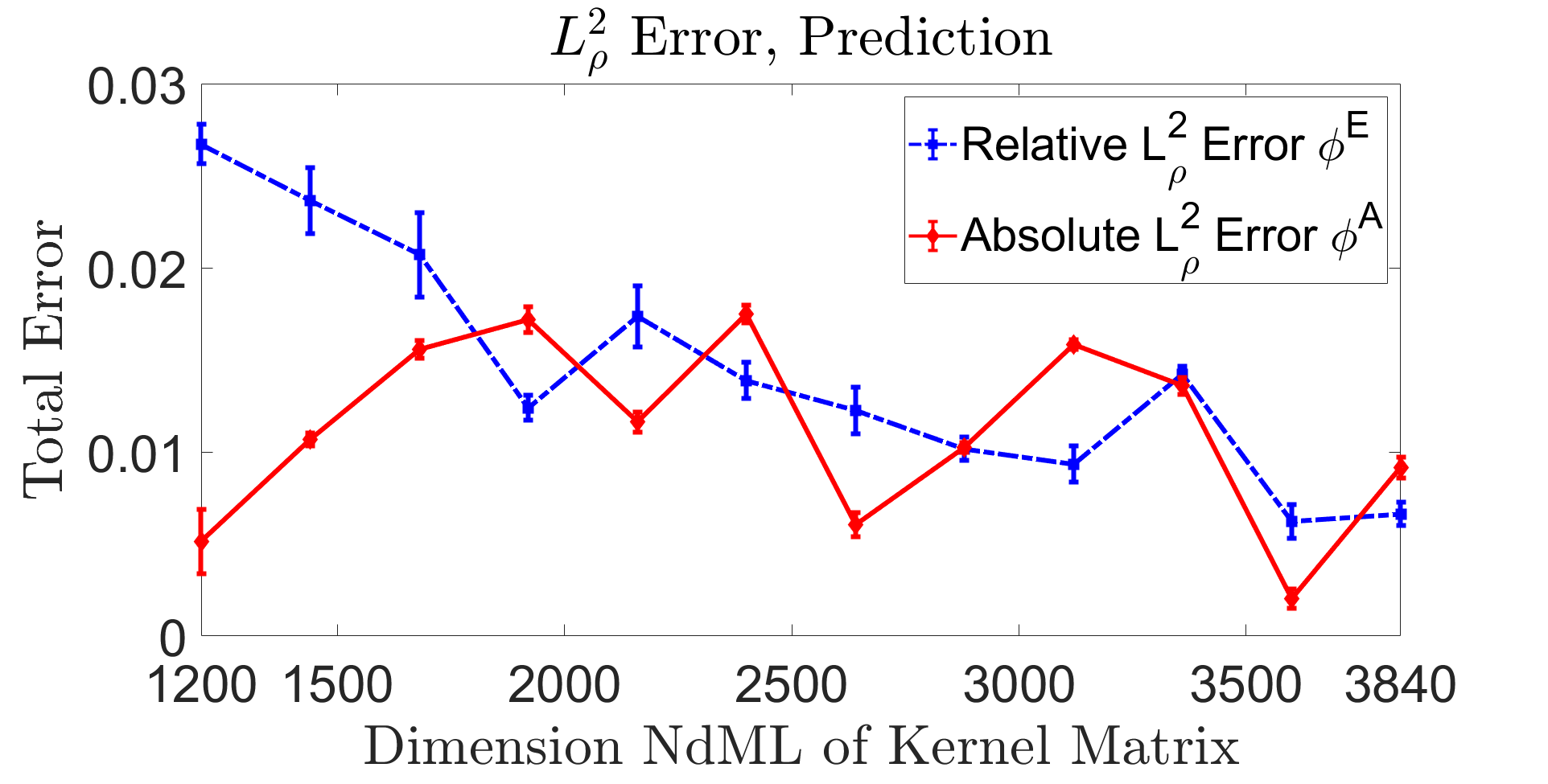}

\caption{Accelerated kernel prediction error for the experiments above. Left is the $L^{\infty}$ error, relative for $\phi^E$ and absolute for $\phi^A$ as $\phi^A = 0$ is the ground truth. Right is the $L_{\rho}^2$ error, relative for $\phi^E$ and absolute for $\phi^A$.}
\label{fig:acceleration_phi}

\includegraphics[width=0.45\linewidth]{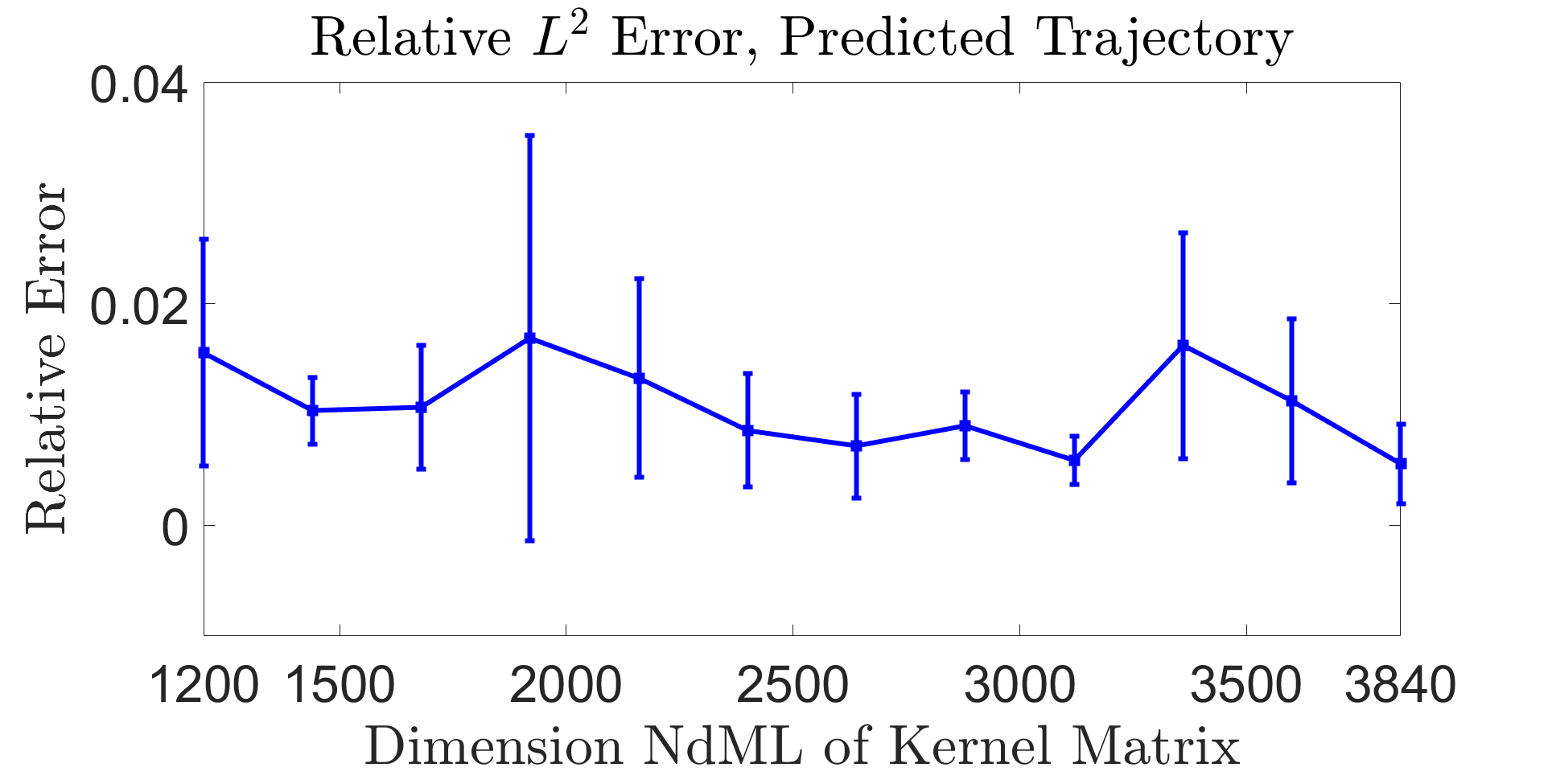}
\includegraphics[width=0.45\linewidth]{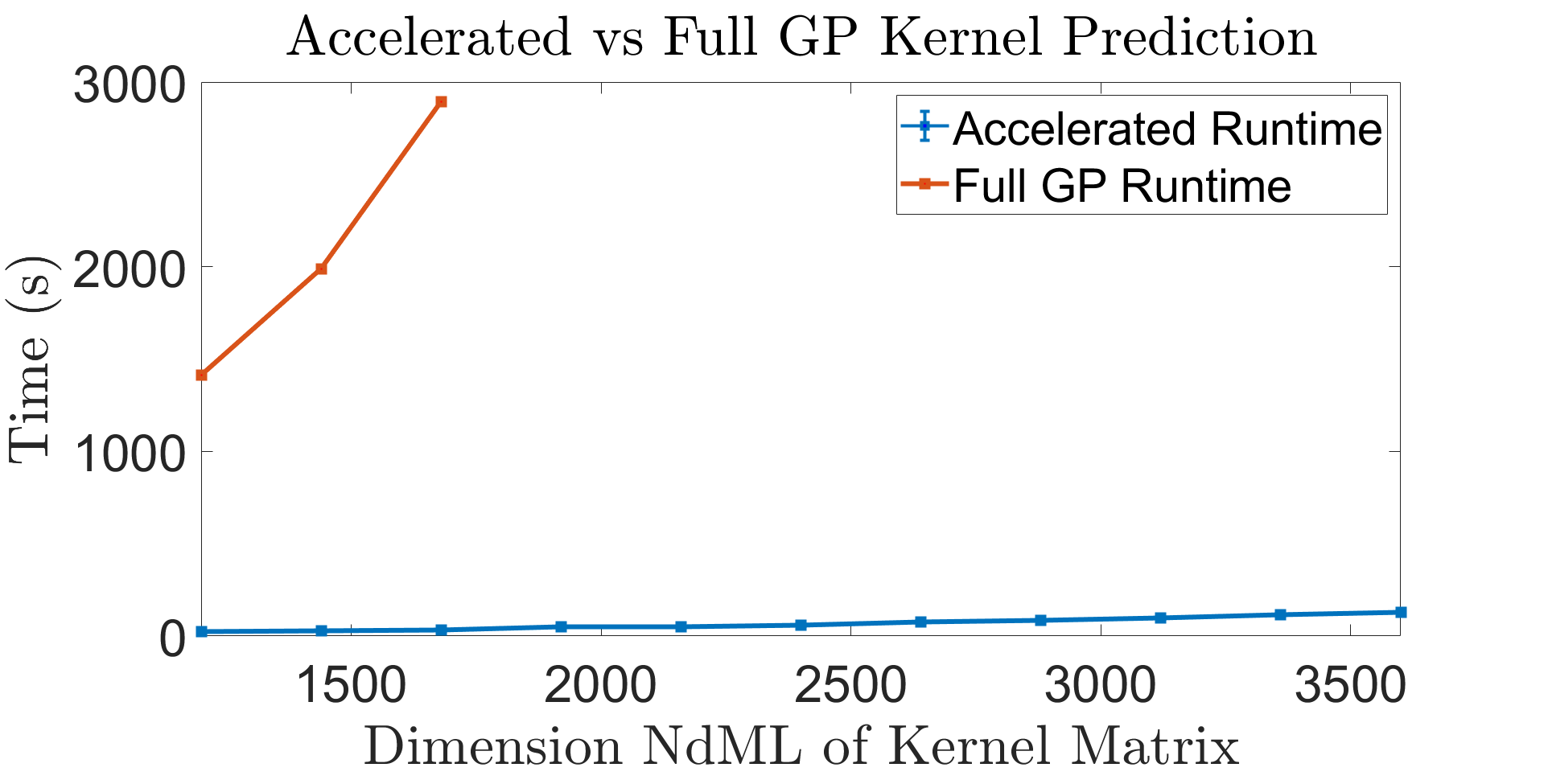}

\caption{Relative trajectory prediction error on testing data, FM system ($\{N,M,L\} = \{20,M,6\}$) with varying $M$. This plot uses test error on the full interval of $[0,10]$. Prediction error quickly goes to zero in testing. Shown also is a runtime comparison with full GP.}
\label{fig:acceleration_traj}

\end{figure}

\begin{table}[!htb]
\caption{System parameters in Fish Milling above}
\label{tab:ex_ODS_info} 
\centering
\small{
\small{\begin{tabular}{ cccccc}
\toprule
 $d$ & $N$ & $[0; T; T_f]$ & $\alpha = (\gamma, \beta)$ & $\mu_0^\bx$ & $\mu_0^\bv$\\
\midrule
2 & 20 & $[0,5,10]$ & $(1.5,0.5)$ & $\mathrm{Unif}([-1,1]^2)$ & $\mathrm{Unif}([0,0]^2)$\\
\bottomrule
\end{tabular}}  
}
\end{table}

Figure \ref{fig:acceleration_mle} shows that our hyperparameter learning method is able to accurately recover the hyperparameters $\sigma, \gamma, \beta$ with greatly improved runtime compared to the full GP method. Once these hyperparameters are learned, our acceleration can also be utilized for the prediction of the kernel, which also has a low observed error, see Figure \ref{fig:acceleration_phi}. Most errors of the kernel prediction occur away from the support of observed data in the FM system and do not affect the trajectory prediction of our system. This is quantified in the very low relative $L^2$ error of the predicted trajectories of the FM system using our predicted kernel as shown in Figure \ref{fig:acceleration_traj}.

These results provide clear evidence of successful acceleration options while maintaining highly acceptable accuracy. While the running time of accelerated MLE can still be expensive for prohibitively large data, the accelerated method scales much better than the fully explicit method and opens up exciting possibilities in modeling large datasets. We note that prediction also scales quite well and relies only upon PCG and preconditioner choice, allowing the usage of efficient cross-validation techniques for hyperparameter choice in certain classes of problems. 

Our central findings are the following:

\begin{itemize}
\item We have discovered that accurate hyperparameter recovery can be achieved using a small set of observational data, and using more training data does not necessarily improve the accuracy. This is due to the lack of consistency in the training of MLE, which is a well-known result in Gaussian process regression. We recommend that one should split a small subset for hyperparameter tuning and then use the full dataset for kernel learning. We have seen empirical success with this method.

\item In 10 trials with small $M$, we often observed one or two trials with relatively large recovery errors in hyperparameters. We removed these outliers from our data before plotting above. We attribute this to the instability of the Lanczos algorithm or the non-convexity of the optimization problem, as in these cases, we observed that the minimization of MLE stopped very early. Nonetheless, we would like to point out that even in these cases, we obtained very satisfying performance in kernel learning and trajectory prediction.

\item  There are additional opportunities for acceleration in kernel learning that depend on the specific problem and infrastructure available. For instance, in the case of $\nu = \frac{1}{2}$, we may exploit sparsity in a decomposition of the kernel matrix $K_{\rhsfo_{\bintkernel}}(\bbY,\bbY;\theta)$, as developed in \cite{sparsecgpaper}, while maintaining desired exactness. We leave the extension of this method to all half-integer $\nu$ values for future work. Furthermore, there are avenues for accelerating GP learning using modern hardware. With access to GPUs, one can parallelize the explicit construction of kernel matrices and the Lanczos algorithm calculations. These steps are embarrassingly parallel and allow for demanding much greater accuracy. 
\end{itemize}

\section{Final remarks and future work}

In this work, we present an approach based on Gaussian processes to perform the model selection of particle/agent-based models from scarce and noisy data. We propose efficient acceleration techniques to improve the scalability.  
The methodology is extendable to cover heterogeneous systems with multiple types of agents and external potentials. It is also possible to extend the learning approach to the mean-field limits of the particle models. Another line of future work is to apply the quantitative framework developed in this paper to design a data acquisition plan (active learning). The goal is to optimize the kernel learning  using the least amount of trajectory data by looking at their marginal pairwise distance distributions. We leave it as future work.

\section*{Acknowledgments}
Charles Kulick  was partially supported by   NSF DMS-2111303.  S.T. was partially supported by    Hellman Family Faculty Fellowship, and the NSF DMS-2111303.
S.T. would like to thank Hengrui Luo and Didong Li for their helpful discussions.

\section{Appendix}

\bigskip

\begin{appendices}

\section{Learning approach for model selection}\label{learningapproach}
Our learning approach is a generalization of the methodology proposed in \cite{learning2022}; to be self-contained, we state the detailed formulation here.

\begin{lemma}
\label{lemma:prior}

Let $\bintkernel = (\intkernele,\intkernela)$ be two Gaussian processes with mean zero and covariance function $K_\thetae, K_\thetaa : [0, R] \times [0, R] \to \mathbb{R}$ respectively, i.e., $\intkernel^{\mathrm{type}} \sim\mathcal{GP}(0,K_{\theta^{\mathrm{type}}}(r,r'))$, type = $E$ or $A$, and $\mbf{m}\bZ(t) =\force_{\mbf{\alpha}}(\bY(t)) + \rhsfo_\bintkernel(\bY(t))$ as defined in  \eqref{eq:2ndOrder_compact}. Then for any $t,t' \in [0,T]$, we have that,
 \begin{equation}
 \begin{bmatrix}
 \mbf{m}\bZ(t)\\ \mbf{m}\bZ(t')
 \end{bmatrix}
 \sim \mathcal{N} \left(
 \begin{bmatrix}
 \force_{\mbf{\alpha}}(\bY(t))\\ \force_{\mbf{\alpha}}(\bY(t'))
 \end{bmatrix}
 , K_{\rhsfo_{\bintkernel}}(\bY(t),\bY(t'))\right),
\label{eqZdist2}
\end{equation}
where $K_{\rhsfo_{\intkernel}}(\bX(t),\bX(t')))$ is the covariance matrix $
 \cov(\rhsfo_{\intkernel}(\bX(t)),\rhsfo_{\intkernel}(\bX(t'))) $
with $(i,j)$th block
\begin{eqnarray}
 \cov([\rhsfo_{\bintkernel}(\bY)]_i,[\rhsfo_{\bintkernel}(\bY')]_j)=
  \frac{1}{N^2}\sum_{k\neq i,k'\neq j}\big( K_\thetae(r^\bx_{ik},r^{\bx'}_{jk'})\mbf{r}^\bx_{ik}{\mbf{r}^{\bx'}_{jk'}}^T + K_\thetaa(r^\bx_{ik},r^{\bx'}_{jk'})\mbf{r}^\bv_{ik}{\mbf{r}^{\bv'}_{jk'}}^T \big)\notag,\\
\end{eqnarray} {see Table \ref{tab:2ndOrder_vecdef} for the definitions}.
\end{lemma}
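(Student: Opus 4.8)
The plan is to exploit the fact that, once $\bY(t)$ and $\bY(t')$ are treated as fixed input states, the interaction field $\rhsfo_{\bintkernel}$ depends \emph{linearly} on the sample paths $(\intkernele,\intkernela)$; Gaussianity then transfers automatically from the GP priors, and only the mean and the covariance blocks remain to be identified. I would emphasize at the outset that $\bY(t),\bY(t')$ act as deterministic evaluation points, so that the pairwise distances and difference vectors below are nonrandom.

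First I would rewrite the $i$-th $\real^d$-block of the interaction field, using the difference notation of \cref{tab:2ndOrder_vecdef}, as
\[
[\rhsfo_{\bintkernel}(\bY)]_i = \frac{1}{N}\sum_{k\neq i}\Big( \intkernele(r^\bx_{ik})\,\mbf{r}^\bx_{ik} + \intkernela(r^\bx_{ik})\,\mbf{r}^\bv_{ik}\Big),
\]
where the $k=i$ summand drops out since $\mbf{r}^\bx_{ii}=\zero$. Because the states are fixed, every distance $r^\bx_{ik}$ and every difference vector $\mbf{r}^\bx_{ik},\mbf{r}^\bv_{ik}$ is deterministic, so each block is a finite linear combination of the random scalars $\{\intkernele(r^\bx_{ik})\}_{k\neq i}$ and $\{\intkernela(r^\bx_{ik})\}_{k\neq i}$ with deterministic $\real^d$-valued coefficients $\frac{1}{N}\mbf{r}^\bx_{ik}$ and $\frac{1}{N}\mbf{r}^\bv_{ik}$.

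Next I would establish joint Gaussianity of the stacked vector $\begin{bmatrix}\rhsfo_{\bintkernel}(\bY(t))\\ \rhsfo_{\bintkernel}(\bY(t'))\end{bmatrix}$. By the defining property of the GP priors recalled in the preliminaries, any finite set of evaluations of $\intkernele$ is jointly Gaussian with mean zero, and likewise for $\intkernela$; since the two processes are independent, the pooled collection of all evaluations occurring at both $t$ and $t'$ forms a single mean-zero Gaussian vector. The stacked field is the image of this Gaussian vector under the fixed linear map assembled from the $\frac{1}{N}\mbf{r}$ coefficients, hence is Gaussian, and adding the deterministic term $\force_{\balpha}(\bY(\cdot))$ shifts only the mean. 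This already yields the distributional form \eqref{eqZdist2} with mean $\begin{bmatrix}\force_{\balpha}(\bY(t))\\ \force_{\balpha}(\bY(t'))\end{bmatrix}$.

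Finally I would compute the covariance. Since $\E[\rhsfo_{\bintkernel}(\bY)]=\zero$, the $(i,j)$ block equals $\E\big[[\rhsfo_{\bintkernel}(\bY)]_i\,{[\rhsfo_{\bintkernel}(\bY')]_j}^{\top}\big]$. Expanding the two inner sums yields four families of products ($EE$, $EA$, $AE$, $AA$); the mixed $EA$ and $AE$ terms carry the factor $\E[\intkernele(\cdot)\intkernela(\cdot)]=\E[\intkernele(\cdot)]\,\E[\intkernela(\cdot)]=0$ by independence and therefore vanish. For the surviving $EE$ and $AA$ terms I would insert $\E[\intkernele(r)\intkernele(r')]=K_{\thetae}(r,r')$ and $\E[\intkernela(r)\intkernela(r')]=K_{\thetaa}(r,r')$ and track the rank-one outer products $\mbf{r}^\bx_{ik}\,{\mbf{r}^{\bx'}_{jk'}}^{\top}$ and $\mbf{r}^\bv_{ik}\,{\mbf{r}^{\bv'}_{jk'}}^{\top}$, which reproduces the stated block formula. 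The step requiring the most care is this last one: the bookkeeping must respect the $d\times d$ rank-one block structure, since each scalar covariance multiplies a matrix rather than a scalar, and the vanishing of the $E$--$A$ cross covariances is precisely where the independence of the two priors is used.
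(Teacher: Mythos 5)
Your proposal is correct and follows essentially the same route as the paper's proof: both arguments rest on the linearity of $\rhsfo_{\bintkernel}$ in the sample paths of the two independent mean-zero GP priors, so that joint Gaussianity of the stacked field follows from the finite-dimensional GP property under a fixed linear map, the deterministic $\force_{\balpha}$ term only shifts the mean, and the block covariance formula follows by expanding the outer products and using independence to eliminate the $E$--$A$ cross terms. Your write-up is slightly more explicit than the paper's about why the mixed terms vanish, but there is no substantive difference in approach.
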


\begin{proof}
For  $\intkernel \sim \mathcal{GP}(0,K_\theta(r,r'))$, and any $r,r' \in [0,R]$, we have that,
 \begin{eqnarray}
 \mathbb{E}[\intkernel(r)]&=&0,\\
 \cov[\intkernel(r),\intkernel(r')]&=&K_\theta(r,r').
\end{eqnarray}
Therefore, for any collection of states $\{r_i\}_{i=1}^n \subset [0,R]$, and $\{a_i\}_{i=1}^n, \{b_i\}_{i=1}^n \subset \mathbb{R}$, the linear operator on function values $\mathcal{L}(\{\intkernel(r_i)\}_{i=1}^n) : = (a_i \intkernel(r_i)+b_i)_{i=1}^n$ satisfies
\begin{equation}
    \mathcal{L}(\{\intkernel(r_i)\}_{i=1}^n) \sim \mathcal{N}(\mathrm{vec}(\{b_i\}_{i=1}^n), \Sigma_{\mathcal{L}(\intkernel)}),
\label{eq:phi_cov}
\end{equation}
where $\mathcal{N}$ denotes the Gaussian distribution, $\mathrm{vec}(\{b_i\}_{i=1}^n) \in \mathbb{R}^n$ is the vectorization of $\{b_i\}_{i=1}^n$, and the covariance matrix   $\Sigma_{\mathcal{L}(\intkernel)} = \{a_ia_jK_\theta(r_i,r_j)\}_{i,j=1}^{n} \in \mathbb{R}^{n\times n}$. 

Therefore, since $\intkernele$, $\intkernela$ are independent, and $\rhsfo_{\bintkernel}(\bY(t))$ is linear in $\bintkernel$, for any $t$, $t'$, we have that
 \begin{equation}
 \begin{bmatrix}
 \rhsfo_{\bintkernel}(\bY(t))\\
 \rhsfo_{\bintkernel}(\bY(t'))
 \end{bmatrix}
 \sim \mathcal{N} (\bm{0}, K_{\rhsfo_{\bintkernel}}(\bY(t),\bY(t'))),
\end{equation}
where $K_{\rhsfo_{\bintkernel}}(\bY(t),\bY(t')))$ is the covariance matrix  
\begin{eqnarray}
 \cov(\rhsfo_{\bintkernel}(\bY(t)),\rhsfo_{\bintkernel}(\bY(t')))=\big(\cov([\rhsfo_{\bintkernel}(\bY(t))]_i,[\rhsfo_{\bintkernel}(\bY(t')]_j)) \big)_{i,j=1}^{N,N},
\label{eqSigma2}
\end{eqnarray}
with $(i,j)$th block
\begin{eqnarray}
 \cov([\rhsfo_{\bintkernel}(\bY)]_i,[\rhsfo_{\bintkernel}(\bY')]_j)= \frac{1}{N^2}\sum_{k\neq i,k'\neq j}\big( K_\thetae(r^\bx_{ik},r^{\bx'}_{jk'})\mbf{r}^\bx_{ik}{\mbf{r}^{\bx'}_{jk'}}^T + K_\thetaa(r^\bx_{ik},r^{\bx'}_{jk'})\mbf{r}^\bv_{ik}{\mbf{r}^{\bv'}_{jk'}}^T \big)\notag,
\end{eqnarray}
Thus, by \eqref{eq:2ndOrder_compact}, the observation $\bZ$ in the model follows the Gaussian distribution
 \begin{equation}
 \begin{bmatrix}
 \mbf{m}\bZ(t)\\ \mbf{m}\bZ(t')
 \end{bmatrix}
 \sim \mathcal{N} (
 \begin{bmatrix}
 \force_{\mbf{\alpha}}(\bY(t))\\ \force_{\mbf{\alpha}}(\bY(t'))
 \end{bmatrix}
 , K_{\rhsfo_{\bintkernel}}(\bY(t),\bY(t'))).
\end{equation}

\end{proof}

Then suppose that the training data consists of \begin{align}\label{empiricaldata}
\{\bbY_M, \bbZ_{\sigma^2, M}\}=\{\bbX_{M},  \bbV_{M}, \bbZ_{\sigma^2,M}  \}
\end{align} with 
 
 \begin{align*}
\bbX_{M}&= \mathrm{Vec}\big(\{\bX^{(m,l)}\}_{m,l=1}^{M,L}\big) \in \mathbb{R}^{dNML},\\
\bbV_{M}&= \mathrm{Vec}\big(\{\bV^{(m,l)}\}_{m,l=1}^{M,L}\big) = \mathrm{Vec}\big(\{\dot\bX^{(m,l)}\}_{m,l=1}^{M,L}\big) \in \mathbb{R}^{dNML},\\
\bbZ_{\sigma^2,M}&=  \mathrm{Vec}\big(\{\bZ^{(m,l)}_{\sigma^2}\}_{m,l=1}^{M,L}\big) = \mathrm{Vec}\big(\{\ddot\bX^{(m,l)}+\sigma^2\mbf{\epsilon}^{(m,l)}\}_{m,l=1}^{M,L}\big) \in \mathbb{R}^{dNML}
\end{align*} 
where we observe the dynamics at $0=t_1< t_2<\cdots<t_L=T$; $m$ indexes trajectories corresponding to different initial conditions at $t_1=0$; $\bX^{(m,1)} \stackrel{i.i.d}{\sim} \mu_0^\bx$, $\bV^{(m,1)} \stackrel{i.i.d}{\sim} \mu_0^\bv$, $(\mu_0^\bx, \mu_0^\bv)$ are two independent probability measure on $\mathbb{R}^{dN}$; the noise term $\mbf{\epsilon}^{(m,l)}  \stackrel{i.i.d}{\sim} \mathcal{N}(\bm{0}, I_{dN})$; we assume that  $\mu_0 = (\mu_0^\bx, \mu_0^\bv)$ is independent of the  distribution of noise.

Applying Lemma \ref{lemma:prior}, we  now derive the negative log marginal likelihood for training parameters $\balpha$, $\mbf{\theta}$, and $\sigma$, with given observational data as specified above.

\begin{proposition}
\label{prop: liklihood}
Denote $\bY^{(m,l)}=\bY^{(m)}(t_l)$  and $\bZ^{(m,l)}_{\sigma^2}=\bZ^{(m)}(t_l)+\epsilon^{(m,l)}$ with i.i.d noise $\epsilon^{(m,l)} \sim \mathcal{N}(0, \sigma^2 I_{dN\times dN})$. Suppose we are given the training data set $(\bbY_M,\bbZ_{\sigma^2,M}): = $\\ 
$\{(\bY^{(m,l)},\bZ^{(m,l)}_{\sigma^2})\}_{m,l=1}^{M,L}$ for $M,L \in \mathbb{N}$, such that
\begin{equation}
    \bZ^{(m,l)}_{\sigma^2}  = \force_{\mbf{\alpha}}(\bY^{(m,l)}) + \rhsfo_{\bintkernel}(\bY^{(m,l)}) + \epsilon^{(m,l)},
\end{equation}
with $\force_{\mbf{\alpha}}$, $\rhsfo_{\bintkernel}$ defined in Table \ref{tab:2ndOrder_vecdef}. Then the {negative log} marginal likelihood of $\bbZ_{\sigma^2,M}$ given $\bbY_M$ and parameters $\balpha$, $\theta$, $\sigma$ satisfies
\begin{align}
    & -\log p(\mbf{m}\bbZ_{\sigma^2,M}|\bbY_M,\mbf{\alpha},\mbf{\theta},\sigma^2) \\
    &= \frac{1}{2} (\mbf{m}\bbZ_{\sigma^2,M} - \force_{\mbf{\alpha}}(\bbY_M))^T(K_{\rhsfo_{\bintkernel}}(\bbY_M,\bbY_M;\mbf{\theta}) + \sigma^2 I_{dNML})^{-1}(\mbf{m}\bbZ_{\sigma^2,M} - \force_{\mbf{\alpha}}(\bbY_M))\notag\\ 
    & \qquad +\frac{1}{2}\log|K_{\rhsfo_{\bintkernel}}(\bbY_M,\bbY_M;\mbf{\theta})+\sigma^2 I_{dNML}| + \frac{dNML}{2} \log 2\pi.
\label{apd eq:likelihood}
\end{align} 
where $I_{dNML}$ is the identity matrix of consistent size. 
\end{proposition}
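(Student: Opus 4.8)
The plan is to reduce the statement to the standard formula for the density of a multivariate Gaussian, after first establishing that the noisy observation vector $\mbf{m}\bbZ_{\sigma^2,M}$ is itself jointly Gaussian with the asserted mean and covariance.

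First I would extend \cref{lemma:prior} from its two-time-point form to the entire training set. The argument there shows that $\rhsfo_{\bintkernel}$ evaluated at any finite collection of states is a linear image of the underlying independent Gaussian processes $\intkernele,\intkernela$, and is therefore jointly Gaussian. Applying this simultaneously to all states $\{\bY^{(m,l)}\}_{m,l=1}^{M,L}$ stacked into $\bbY_M$ yields $\rhsfo_{\bintkernel}(\bbY_M) \sim \mathcal{N}(\bm{0}, K_{\rhsfo_{\bintkernel}}(\bbY_M,\bbY_M;\mbf{\theta}))$, where the $dNML\times dNML$ covariance is assembled block-by-block from the formula in \cref{lemma:prior}.

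Second I would incorporate the non-collective force and the observation noise. Since $\force_{\balpha}(\bbY_M)$ is deterministic given the data, it merely shifts the mean. The stacked noise $\mathrm{Vec}(\{\epsilon^{(m,l)}\}) \sim \mathcal{N}(\bm{0}, \sigma^2 I_{dNML})$ is independent of the Gaussian processes by hypothesis, so its covariance adds to that of $\rhsfo_{\bintkernel}(\bbY_M)$. This gives $\mbf{m}\bbZ_{\sigma^2,M} \sim \mathcal{N}\big(\force_{\balpha}(\bbY_M),\, K_{\rhsfo_{\bintkernel}}(\bbY_M,\bbY_M;\mbf{\theta}) + \sigma^2 I_{dNML}\big)$. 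Substituting $n = dNML$, $z = \mbf{m}\bbZ_{\sigma^2,M}$, $\mu = \force_{\balpha}(\bbY_M)$ and $\Sigma = K_{\rhsfo_{\bintkernel}}(\bbY_M,\bbY_M;\mbf{\theta}) + \sigma^2 I_{dNML}$ into the negative log density $\tfrac{1}{2}(z-\mu)^T\Sigma^{-1}(z-\mu) + \tfrac{1}{2}\log|\Sigma| + \tfrac{n}{2}\log 2\pi$ of an $n$-dimensional Gaussian yields exactly \eqref{apd eq:likelihood}.

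The step needing the most care---rather than a genuine obstacle---is the bookkeeping in the first step: checking that the pairwise block covariance of \cref{lemma:prior} assembles consistently into the full matrix $K_{\rhsfo_{\bintkernel}}(\bbY_M,\bbY_M;\mbf{\theta})$ once the trajectory and time indices are ordered according to the vectorization defining $\bbY_M$ and $\bbZ_{\sigma^2,M}$, and that the i.i.d.\ structure of the noise across $(m,l)$ produces precisely the $\sigma^2 I_{dNML}$ term. The remaining computation is the direct Gaussian likelihood evaluation and involves no subtlety.
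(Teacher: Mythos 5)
Your proposal is correct and follows the same route as the paper's own proof: both invoke \cref{lemma:prior} (extended to the full stacked training set) to establish that $\mbf{m}\bbZ_{\sigma^2,M}$ is jointly Gaussian with mean $\force_{\balpha}(\bbY_M)$ and covariance $K_{\rhsfo_{\bintkernel}}(\bbY_M,\bbY_M;\mbf{\theta}) + \sigma^2 I_{dNML}$, using the independence of the i.i.d.\ noise, and then read off the negative log marginal likelihood from the standard multivariate Gaussian density. The paper's proof is just a terser version of the same argument, so no further comparison is needed.
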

 
\begin{proof} Using Lemma \ref{lemma:prior}, since $\epsilon^{(m,l)}$ is i.i.d Gaussian noise and is independent of the initial distributions, we have that
\begin{equation}
    \mbf{m}\bbZ_{\sigma^2,M} \sim \mathcal{N}(\force_{\mbf{\alpha}}(\bbY_M), K_{\rhsfo_{\bintkernel}}(\bbY_M,\bbY_M;\mbf{\theta}) + \sigma^2 I_{dNML}),
\end{equation}
where the mean vector $\force_{\mbf{\alpha}}(\bbY_M) = \mathrm{Vec}((\force_{\mbf{\alpha}}(\bY^{(m,l)}))_{m,l=1}^{M,L})\in \mathbb{R}^{dNML}$, and the covariance matrix $K_\intkernel(\bbY_M,\bbY_M;\theta) = \big(\cov(\rhsfo_{\bintkernel}(\bY^{(i,j)}),\rhsfo_{\bintkernel}(\bY^{(i',j')})) \big)_{i,i',j,j'=1}^{M,M,L,L} \in \mathbb{R}^{dNML \times dNML}$ can be computed by using \eqref{eqSigma2}. According to the properties of the Gaussian distribution, given $\bbY_M$ and parameters $\balpha$, $\theta$, $\sigma$, we have the negative log marginal likelihood function as shown in \eqref{apd eq:likelihood}.
\end{proof}
 
As mentioned in the main text, we can apply the gradient-based method \cite{liu1989limited}, to minimize the negative log marginal likelihood and solve for the hyperparameters $(\mbf{\alpha}, \mbf{\theta}, \sigma)$.  
\begin{proposition} 
\label{prop: derivs}
Let  $\bgamma = (K_{\rhsfo_{\bintkernel}}(\bbY_M,\bbY_M;\mbf{\theta})+ \sigma^2I)^{-1} (\mbf{m}\bbZ_{\sigma^2,M} - \force_{\mbf{\alpha}}(\bbY_M))$. The  partial derivatives of the marginal likelihood w.r.t. the parameters $\mbf{\alpha}$, $\mbf{\theta}$, and $\sigma$ can be computed as follows:
\begin{align}
\frac{\partial}{\partial \mbf{\alpha}_i} \log p(\mbf{m}\bbZ_{\sigma^2,M}|\bbY_M,\mbf{\alpha},\mbf{\theta},\sigma^2) &= \bgamma^T \frac{\partial \force_{\mbf{\alpha}}(\bbY_M)}{\partial \mbf{\alpha}_i}.
\label{eqalpha}\\
\frac{\partial}{\partial \mbf{\theta}_j} \log p(\mbf{m}\bbZ_{\sigma^2,M}|\bbY_M,\mbf{\alpha},\mbf{\theta},\sigma^2) &= \frac{1}{2} \mathrm{Tr}\left( (\bgamma \bgamma^T - (K_{\rhsfo_{\bintkernel}}(\bbY_M,\bbY_M;\mbf{\theta}) + \sigma^2I)^{-1}) \frac{\partial K_{\rhsfo_{\bintkernel}}(\bbY_M,\bbY_M;\mbf{\theta})}{\partial \mbf{\theta}_j}\right).
\label{eqtheta}\\
\frac{\partial}{\partial \sigma} \log p(\mbf{m}\bbZ_{\sigma^2,M}|\bbY_M,\mbf{\alpha},\mbf{\theta},\sigma^2) &=  \mathrm{Tr}\left( (\bgamma \bgamma^T - (K_{\rhsfo_{\bintkernel}}(\bbY_M,\bbY_M;\mbf{\theta}) + \sigma^2I)^{-1}) \right)\sigma.
\label{eqsigma}
\end{align}

\end{proposition}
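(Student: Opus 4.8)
The plan is to treat this as a direct exercise in matrix differential calculus applied to the negative log marginal likelihood established in Proposition \ref{prop: liklihood}. Abbreviating the (symmetric, positive-definite, hence invertible) covariance as $A := K_{\rhsfo_{\bintkernel}}(\bbY_M,\bbY_M;\mbf{\theta}) + \sigma^2 I$ and the centered residual as $\mbf{r} := \mbf{m}\bbZ_{\sigma^2,M} - \force_{\mbf{\alpha}}(\bbY_M)$, the log marginal likelihood reads $\log p = -\tfrac12 \mbf{r}^T A^{-1}\mbf{r} - \tfrac12\log|A| - \tfrac{dNML}{2}\log 2\pi$, and by definition $\bgamma = A^{-1}\mbf{r}$. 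I would first record three standard identities and then invoke them mechanically for each parameter: (i) $\partial_x A^{-1} = -A^{-1}(\partial_x A)A^{-1}$; (ii) $\partial_x \log|A| = \mathrm{Tr}(A^{-1}\partial_x A)$; and (iii) the trace identity $\mbf{v}^T M \mbf{v} = \mathrm{Tr}(M\mbf{v}\mbf{v}^T)$. The only structural fact needed throughout is which of the three summands depends on which hyperparameter.

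For the derivative with respect to $\mbf{\alpha}_i$, the key observation is that the log-determinant and constant terms are independent of $\mbf{\alpha}$, so only the quadratic form contributes, and it does so solely through $\partial_{\mbf{\alpha}_i}\mbf{r} = -\partial_{\mbf{\alpha}_i}\force_{\mbf{\alpha}}(\bbY_M)$. Differentiating and using symmetry of $A^{-1}$ gives $\partial_{\mbf{\alpha}_i}\bigl(-\tfrac12\mbf{r}^T A^{-1}\mbf{r}\bigr) = -\mbf{r}^T A^{-1}\,\partial_{\mbf{\alpha}_i}\mbf{r} = \bgamma^T \partial_{\mbf{\alpha}_i}\force_{\mbf{\alpha}}(\bbY_M)$, which is precisely \eqref{eqalpha}.

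For $\mbf{\theta}_j$ the residual is constant and only $A$ varies, with $\partial_{\mbf{\theta}_j} A = \partial_{\mbf{\theta}_j} K_{\rhsfo_{\bintkernel}}$. Applying (i) to the quadratic term yields $\tfrac12\mbf{r}^T A^{-1}(\partial_{\mbf{\theta}_j}K_{\rhsfo_{\bintkernel}})A^{-1}\mbf{r} = \tfrac12\bgamma^T(\partial_{\mbf{\theta}_j}K_{\rhsfo_{\bintkernel}})\bgamma$, which by (iii) equals $\tfrac12\mathrm{Tr}(\bgamma\bgamma^T\partial_{\mbf{\theta}_j}K_{\rhsfo_{\bintkernel}})$; identity (ii) applied to the log-determinant contributes $-\tfrac12\mathrm{Tr}(A^{-1}\partial_{\mbf{\theta}_j}K_{\rhsfo_{\bintkernel}})$, and summing the two produces \eqref{eqtheta}. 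The $\sigma$ case is identical in structure, the only new ingredient being the chain rule $\partial_\sigma A = 2\sigma I$; substituting $2\sigma I$ for $\partial_{\mbf{\theta}_j}K_{\rhsfo_{\bintkernel}}$ above, the factor $2$ cancels the $\tfrac12$ in the quadratic term while the log-determinant term gives $-\sigma\,\mathrm{Tr}(A^{-1})$, leaving $\sigma\,\mathrm{Tr}(\bgamma\bgamma^T - A^{-1})$, i.e. \eqref{eqsigma}.

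I do not anticipate a genuine obstacle, since every step is the standard GP-regression derivative computation (cf.\ Chapter 5 of \cite{williams2006gaussian}); the only point demanding care is bookkeeping — correctly propagating the chain rule through $\mbf{r}$ for $\mbf{\alpha}$ and through $\sigma^2$ for $\sigma$, and not dropping factors of $\tfrac12$ or $2\sigma$. Notably, the content specific to this paper, namely the block structure of $K_{\rhsfo_{\bintkernel}}$ from Lemma \ref{lemma:prior}, never needs to be unpacked: all derivatives are expressed abstractly through $\partial_{\mbf{\theta}_j}K_{\rhsfo_{\bintkernel}}$, so the result follows purely from the Gaussian form of the marginal likelihood.
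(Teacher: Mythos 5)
Your computation is correct and is exactly the standard Gaussian-process gradient derivation; the paper in fact states Proposition~\ref{prop: derivs} without proof (implicitly deferring to Chapter~5 of \cite{williams2006gaussian}), so your write-up supplies precisely the omitted argument. All three cases check out, including the two sign-sensitive points (the chain rule through $\mbf{r}$ giving the positive sign in \eqref{eqalpha}, and the factor $2\sigma$ cancelling the $\tfrac12$ in \eqref{eqsigma}).
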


With the updated prior $\intkernel$ from $\mbf{\theta}$, and the parameters $\balpha$, $\sigma$, we show the detailed derivation of our estimators for the prediction $\intkernel(r^*)$ at $r^\ast \in [0,R]$.

\begin{theorem}
Suppose we are given the training data set $(\bbY_M,\bbZ_{\sigma^2,M}): = \{(\bY^{(m,l)},\bZ^{(m,l)}_{\sigma^2})\}_{m,l=1}^{M,L}$ defined in Proposition \ref{prop: liklihood}, and the hyperparameters $(\balpha, \mbf{\theta}, \sigma)$ are known. Then for any $r^\ast \in [0,R]$, type = $E$ or $A$, $\intkernel^{\mathrm{type}}(r^\ast)$ satisfies
\begin{equation}
    p(\intkernel^{\mathrm{type}}(r^\ast)|\bbY_M,\bbZ_{\sigma^2,M}) \sim \mathcal{N}(\bar{\intkernel}^{\mathrm{type}},var(\bar{\intkernel}^{\mathrm{type}})),
\end{equation}
where
\begin{align}
    \bar{\intkernel}^{\mathrm{type}} &= K_{\intkernel^{\mathrm{type}},\rhsfo_\bintkernel}(r^\ast,\bbY_M)(K_{\rhsfo_{\bintkernel}}(\bbY_M,\bbY_M) + \sigma^2I_{dNML})^{-1}(\mbf{m}\bbZ_{\sigma^2,M} - \force_{\mbf{\alpha}}(\bbY_M)),
\label{apd eq:estimated phi}    \\
    var(\bar\intkernel^{\mathrm{type}}) &= K_{\theta^{\mathrm{type}}}(r^\ast,r^\ast) - K_{\intkernel^{\mathrm{type}},\rhsfo_\bintkernel}(r^\ast,\bbY_M)(K_{\rhsfo_{\bintkernel}}(\bbY_M,\bbY_M) + \sigma^2I_{dNML})^{-1}K_{\rhsfo_\bintkernel,\intkernel^{\mathrm{type}}}(\bbY_M,r^\ast).
    \label{apd eq:estimated var phi}
\end{align}
and $K_{\rhsfo_\bintkernel,\intkernel^{\mathrm{type}}}(\bbY_M, r^*) = K_{\intkernel^{\mathrm{type}},\rhsfo_\bintkernel}(r^*,\bbY_M)^T$ denotes the covariance matrix between $\rhsfo_{\bintkernel}(\bbY_M)$ and $\intkernel^{\mathrm{type}}(r^*)$. 
\end{theorem}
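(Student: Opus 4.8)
The plan is to realize the pair $(\mbf{m}\bbZ_{\sigma^2,M},\,\intkernel^{\mathrm{type}}(r^\ast))$ as a jointly Gaussian vector and then invoke the standard formula for conditioning a multivariate Gaussian. First I would record the three ingredients that force joint Gaussianity: the priors $\intkernele \sim \mathcal{GP}(0,K_{\theta^E})$ and $\intkernela \sim \mathcal{GP}(0,K_{\theta^A})$ are independent; the interaction field $\rhsfo_{\bintkernel} = \rhsfo_{\intkernele} + \rhsfo_{\intkernela}$ is a \emph{linear} functional of $\bintkernel=(\intkernele,\intkernela)$, being a finite linear combination of kernel evaluations whose coefficients are the pairwise displacement vectors recorded in Table~\ref{tab:2ndOrder_vecdef}; and the target $\intkernel^{\mathrm{type}}(r^\ast)$ is itself a linear evaluation of one of the two GPs. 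Consequently any finite collection of the quantities $\{\rhsfo_{\bintkernel}(\bY^{(m,l)})\}_{m,l}$ together with $\intkernel^{\mathrm{type}}(r^\ast)$ is jointly Gaussian, by exactly the linear-functional argument already used to establish Lemma~\ref{lemma:prior}.

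Next I would assemble the joint mean and covariance. Both priors have mean zero, so $\rhsfo_{\bintkernel}(\bbY_M)$ and $\intkernel^{\mathrm{type}}(r^\ast)$ have mean zero; adding the independent noise $\epsilon^{(m,l)}\sim\mathcal{N}(0,\sigma^2 I_{dN})$ and the deterministic shift $\force_{\mbf{\alpha}}(\bbY_M)$ gives $\mbf{m}\bbZ_{\sigma^2,M}$ mean $\force_{\mbf{\alpha}}(\bbY_M)$. The top-left covariance block is $K_{\rhsfo_{\bintkernel}}(\bbY_M,\bbY_M)+\sigma^2 I_{dNML}$ by Lemma~\ref{lemma:prior} together with the independence of the noise; the bottom-right block is simply $K_{\theta^{\mathrm{type}}}(r^\ast,r^\ast)$. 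The cross block $K_{\intkernel^{\mathrm{type}},\rhsfo_\bintkernel}(r^\ast,\bbY_M)$ requires computing $\cov(\intkernel^{\mathrm{type}}(r^\ast),\rhsfo_{\bintkernel}(\bY^{(m,l)}))$, and here the independence of $\intkernele$ and $\intkernela$ is essential so that only the summand of matching type survives, with the displacement vectors ($\mbf{r}^\bx$ for $E$, $\mbf{r}^\bv$ for $A$) entering as the linear coefficients. This produces the joint law
\begin{equation*}
\begin{bmatrix} \mbf{m}\bbZ_{\sigma^2,M} - \force_{\mbf{\alpha}}(\bbY_M) \\ \intkernel^{\mathrm{type}}(r^\ast) \end{bmatrix} \sim \mathcal{N}\left(0,\; \begin{bmatrix} K_{\rhsfo_{\bintkernel}}(\bbY_M,\bbY_M)+\sigma^2 I_{dNML} & K_{\rhsfo_\bintkernel,\intkernel^{\mathrm{type}}}(\bbY_M,r^\ast) \\ K_{\intkernel^{\mathrm{type}},\rhsfo_\bintkernel}(r^\ast,\bbY_M) & K_{\theta^{\mathrm{type}}}(r^\ast,r^\ast) \end{bmatrix}\right).
\end{equation*}

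Finally I would condition on the observed value of $\mbf{m}\bbZ_{\sigma^2,M}$. Applying the standard Gaussian conditioning identity (Lemma~\ref{lemma: conditioning Gaussian}) with $\Sigma_{aa}=K_{\rhsfo_{\bintkernel}}(\bbY_M,\bbY_M)+\sigma^2 I_{dNML}$, $\Sigma_{ba}=K_{\intkernel^{\mathrm{type}},\rhsfo_\bintkernel}(r^\ast,\bbY_M)$ and $\Sigma_{bb}=K_{\theta^{\mathrm{type}}}(r^\ast,r^\ast)$ yields conditional mean $\Sigma_{ba}\Sigma_{aa}^{-1}(\mbf{m}\bbZ_{\sigma^2,M}-\force_{\mbf{\alpha}}(\bbY_M))$ and conditional variance $\Sigma_{bb}-\Sigma_{ba}\Sigma_{aa}^{-1}\Sigma_{ab}$, which are exactly \eqref{apd eq:estimated phi} and \eqref{apd eq:estimated var phi}. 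The only genuinely model-specific step, as opposed to textbook bookkeeping, is the correct treatment of the cross-covariance block, where independence of the two priors must be used to decouple the energy and alignment contributions and to identify the matching kernel $K_{\theta^{\mathrm{type}}}$; everything else is the classical posterior formula for noisy Gaussian regression, so I expect no real obstacle beyond careful index tracking in that block.
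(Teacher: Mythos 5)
Your proposal is correct and follows essentially the same route as the paper's own proof: establish joint Gaussianity of $(\mbf{m}\bbZ_{\sigma^2,M},\intkernel^{\mathrm{type}}(r^\ast))$ via the linearity of $\rhsfo_{\bintkernel}$ in the independent GP priors, assemble the block covariance (with $\sigma^2 I$ added for the noise and the cross block computed so that only the matching-type summand survives), and apply the standard Gaussian conditioning identity of Lemma~\ref{lemma: conditioning Gaussian}. No gaps.
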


\begin{proof}
Since $\rhsfo_{\bintkernel}(\bbY_M)$ is defined componentwisely as in \eqref{eq:2ndOrder_compact}, for any $r^\ast \in [0,R]$, we have that
  \begin{equation}
    \begin{bmatrix}
    \rhsfo_{\bintkernel}(\bbY_M)\\
    \intkernel^{\mathrm{type}}(r^\ast)
    \end{bmatrix}
    \sim \mathcal{N} \left( 0,
    \begin{bmatrix}
    K_{\rhsfo_{\bintkernel}}(\bbY_M, \bbY_M) & K_{\rhsfo_\bintkernel,\intkernel^{\mathrm{type}}}(\bbY_M, r^\ast)\\
    K_{\intkernel^{\mathrm{type}},\rhsfo_\bintkernel}(r^\ast, \bbY_M) & K_{\theta^{\mathrm{type}}}(r^\ast,r^\ast)
    \end{bmatrix}
    \right),
\end{equation} 
where $K_{\rhsfo_\bintkernel}(\bbY_M, \bbY_M)$ is the covariance matrix between $\rhsfo_{\bintkernel}(\bbY_M)$ and $\rhsfo_{\bintkernel}(\bbY_M)$ as we defined in Proposition \ref{prop: liklihood}, and $K_{\rhsfo_\bintkernel,\intkernel^{\mathrm{type}}}(\bbY_M, r^*) = K_{\intkernel^{\mathrm{type}},\rhsfo_\bintkernel}(r^*,\bbY_M)^T$ is the covariance matrix between $\rhsfo_{\bintkernel}(\bbY_M)$ and $\intkernel^{\mathrm{type}}(r^*)$, i.e., $K_{\rhsfo_\bintkernel,\intkernel^{\mathrm{type}}}(\bbY_M, r^*) = (\cov(\rhsfo_{\bintkernel}(\bY^{(m,l)}),\intkernel^{\mathrm{type}}(r^\ast)))_{m,l=1}^{M,L}$ and the i-th component of $\cov(\rhsfo_{\bintkernel}(\bY^{(m,l)}),\intkernel^{\mathrm{type}}(r^\ast))$ is computed by
\begin{align}
    \cov([\rhsfo_{\bintkernel}(\bY^{(m,l)})]_i, \intkernele(r^\ast)) &= \frac{1}{N} \sum_{k \neq i} K_\thetae(r_{ik}^{\bX^{(m,l)}}, r^\ast) \mbf{r}_{ij}^{\bX^{(m,l)}},\\
    \cov([\rhsfo_{\bintkernel}(\bY^{(m,l)})]_i, \intkernela(r^\ast)) &= \frac{1}{N} \sum_{k \neq i} K_\thetaa(r_{ik}^{\bX^{(m,l)}}, r^\ast) \mbf{r}_{ij}^{\bV^{(m,l)}}.
\end{align}
Note that $\mbf{m}\bZ^{(m,l)}_{\sigma^2}  = \force_{\mbf{\alpha}}(\bY^{(m,l)}) + \rhsfo_{\intkernel}(\bX^{(m,l)}) + \epsilon^{(m,l)}$ with i.i.d noise $\epsilon^{(m,l)} \sim \mathcal{N}(0, \sigma^2 I_{dN})$ for all $(m,l)$, so we have
  \begin{equation}
    \begin{bmatrix}
    \mbf{m}\bbZ_{\sigma^2,M} - F_{\mbf{\alpha}}(\bbY_M)\\
    \intkernel^{\mathrm{type}}(r^\ast)
    \end{bmatrix}
    \sim \mathcal{N} \left( 0,
    \begin{bmatrix}
    K_{\rhsfo_{\intkernel}}(\bbY_M, \bbY_M) + \sigma^2 I_{dNML} & K_{\rhsfo_\intkernel,\intkernel^{\mathrm{type}}}(\bbY_M, r^\ast)\\
    K_{\intkernel^{\mathrm{type}},\rhsfo_\bintkernel}(r^\ast, \bbY_M) & K_{\theta^{\mathrm{type}}}(r^\ast,r^\ast)
    \end{bmatrix}
    \right),
\end{equation} 
Therefore, based on the properties of the joint Gaussian distribution (see 
Lemma \ref{lemma: conditioning Gaussian}), conditioning on $(\bbY_M, \bbZ_{\sigma^2,M})$, we have that
\begin{equation}
    p(\intkernel^{\mathrm{type}}(r^\ast)|\bbY_M,\bbZ_{\sigma^2,M},r^\ast) \sim \mathcal{N}(\bar{\intkernel}^{\mathrm{type}},var(\bar\intkernel^{\mathrm{type}})),
\end{equation}
where $\bar{\intkernel}^{\mathrm{type}}$ and $var(\bar\intkernel^{\mathrm{type}})$ are defined as in \eqref{apd eq:estimated phi} and \eqref{apd eq:estimated var phi}.
\end{proof}

\section{Psuedocode for Acceleration}\label{secA1}

In this section, we discuss in detail the acceleration of the computations used in our GP framework. We first review the bottleneck in the computation: our likelihood function evaluation is very slow, as it involves inverting the kernel matrix $(K_{\rhsfo_{\bintkernel}}(\bbY,\bbY;\theta) + \sigma^2I)^{-1}$ and computing the log determinant $\log \det( K_{\rhsfo_{\bintkernel}}(\bbY,\bbY;\theta)+\sigma^2I)$. We also require evaluation of the gradient for exact optimization, which further requires evaluation of the trace $\text{Tr}((K_{\rhsfo_{\bintkernel}}(\bbY,\bbY;\theta) + \sigma^2I)^{-1} \frac{\partial K_{\rhsfo_{\bintkernel}}(\bbY,\bbY;\theta)}{\partial \theta_i})$ for each parameter $\theta_i$ as shown in \cref{prop: derivs}.

Our primary goal is to avoid explicit inversion of the kernel matrix entirely by utilizing the Preconditioned Conjugate Gradient (PCG) algorithm, see \cref{Algorithm:PCG}. PCG is an iterative method that can solve systems $Ax = b$ for $x$ without explicitly inverting $A$ through clever choices of update at each step. This algorithm is central for scalability when solving large-scale linear systems with positive definite matrices in the numerical linear algebra literature. Note that the standard CG method is unlikely to work, as our kernel matrix is likely to be very ill-conditioned. An efficient preconditioner will be necessary to avoid extremely slow convergence. As mentioned in the main paper, we recommend the Randomized Gaussian Nystrom preconditioner for improving performance.

\begin{algorithm}[!htb]
\algorithmicrequire\ $\color{black} K_{\rhsfo_{\bintkernel}}(\bbY,\bbY;\theta) + \sigma^2I \color{black}$ (matrix-vector multiplication of kernel), $\color{black} P \color{black}$ (preconditioner), $\bm{b}$ (target vector), $\bm{x_0}$ (initial guess), $\mathrm{errorTol}$ (error tolerance), $t$ (iterations)
\begin{algorithmic} [1]

\STATE $r_0 := \bm{b} - \color{black} (K_{\rhsfo_{\bintkernel}}(\bbY,\bbY;\theta) + \sigma^2I)\color{black} \bm{x_0}$

\STATE $z_0, d_0  := \color{black}P\color{black}r_0$

\STATE while $\norm{r_n} > \mathrm{errorTol}$ and $n < t$

\STATE \hspace{1.5em} $v_n := \color{black}(K_{\rhsfo_{\bintkernel}}(\bbY,\bbY;\theta) + \sigma^2I)\color{black} d_{n-1}$

\STATE \hspace{1.5em} $\alpha_n := \frac{r_{n-1}^T  z_{n-1}}{d_{n-1}^T v_n}$

\STATE \hspace{1.5em} $x_n := x_{n-1} + \alpha_n d_{n-1}$

\STATE \hspace{1.5em} $r_n := r_{n-1} - \alpha_n v_n$

\STATE \hspace{1.5em} $z_n := \color{black} P \color{black} r_n$

\STATE \hspace{1.5em} $\beta_n := \frac{z_n^T r_n}{z_{n-1}^T r_{n-1}}$

\STATE \hspace{1.5em} $d_n := z_n + \beta_n d_{n-1}$

\end{algorithmic}

\algorithmicensure\ $x_n$ (solution to $\color{black}(K_{\rhsfo_{\bintkernel}}(\bbY,\bbY;\theta) + \sigma^2I)\color{black}\bm{x} = \bm{b}$),

\hspace{1.5cm} $\{(\alpha_i, \beta_i) \text{ for all } i \leq n\}$ (exclusively for constructing Lanczos weights)
\caption{{\bf Preconditioned CG for solving $(K_{\rhsfo_{\bintkernel}}(\bbY,\bbY;\theta) + \sigma^2I)\bm{x} = \bm{b}$} \label{Algorithm:PCG}}
\end{algorithm}

Now we can solve the problem of slow likelihood function evaluations. Instead of inversion, our proper preconditioner will allow us to apply PCG and reduce the computational complexity from cubic for inversion to quadratic, see \cref{Algorithm:PCG}. Note that PCG is only limited by the runtime of matrix-vector multiplication, and in the presence of sparsity or other structural features that allow for linear time matrix-vector multiplication, the complexity of PCG will also reduce to linear time. This can be accomplished in the $\nu = \frac 1 2$ case using \cite{gu22}. %

Then we consider the log determinant evaluation. Using stochastic Lanczos quadrature, we can instead compute an estimator for $\text{Tr(log(}\color{black}(K_{\rhsfo_{\bintkernel}}(\bbY,\bbY;\theta) + \sigma^2I)\color{black}))$, see \cref{Algorithm:trace}. This algorithm requires quadrature weights, but these can be efficiently recovered by running the PCG algorithm and arranging $\alpha, \beta$ in a tridiagonal matrix, as seen in \cite{gpy2021}. Then we apply stochastic trace estimation, as developed in \cite{wenger22}. These methods also extend to gradient calculations.

\begin{algorithm}[!htb]
\algorithmicrequire\ $\color{black}(K_{\rhsfo_{\bintkernel}}(\bbY,\bbY;\theta) + \sigma^2I)\color{black}$ (matrix-vector multiplication for kernel matrix), $\color{black} P \color{black}$ (preconditioner), $n$ (number of test vectors), $m$ (number of Lanczos coefficients)
\begin{algorithmic} [1]

\STATE for $i$ from $1$ to $n$

\STATE \hspace{1.5em} $v_n \sim \text{Rademacher}$ \hfill\COMMENT{draw from Rademacher distribution}

\STATE \hspace{1.5em} $T := PCG(\color{black}(K_{\rhsfo_{\bintkernel}}(\bbY,\bbY;\theta) + \sigma^2I)\color{black}, \color{black}P \color{black}, v_n, \ell = m)$ \hfill\COMMENT{get Lanczos coefficients}

\STATE \hspace{1.5em} $[W, \lambda] := eig(T)$

\STATE \hspace{1.5em} for $j$ from $1$ to $m$

\STATE \hspace{3.0em} $\gamma_i := \gamma_i + W_{1,j}^2 \log(\lambda_j)$

\STATE $tr_{est} := \log \det(\color{black}P\color{black}) + \frac{NdML}{n} \sum_{i=1}^n \gamma_i$

\end{algorithmic}
\algorithmicensure\ $tr_{est}$ (estimated trace of $\log(\color{black}(K_{\rhsfo_{\bintkernel}}(\bbY,\bbY;\theta) + \sigma^2I)\color{black})$)
\caption{{\bf Stochastic Trace Estimation with Lanczos Quadrature} \label{Algorithm:trace}}
\end{algorithm}

When choosing a preconditioner, we must have a method for fast and accurate computation of matrix-vector multiplication by $\color{black}P^{-1}\color{black}$ and evaluation of $\log \det (P)$, as these are necessary operations in the above algorithms.

One widely applicable class of preconditioners for positive semi-definite matrices is the low-rank Nystrom approximation. The central idea is to create a low-rank approximation $P$ of a matrix $A$ of interest, with the expectation that $P^{-1}A$ will have a condition number close to $1$. One common implementation is to subsample $r$ columns of the matrix and use these to construct an approximation for the missing entries with rank at most $r$.

The randomized Gaussian Nystrom preconditioner builds on this idea. Written in a general form, we have $P = A \Omega (\Omega^T A \Omega)^{\dagger} \Omega^T A^T$ for a chosen matrix $\Omega \in \mathbb{R}^{NdML \times r}$. Column subsampling is a special case where columns of the matrix $\Omega$ have a single non-zero entry of the unit $1$. However, $\Omega$ can also be populated with randomized Gaussian entries. This idea, developed in \cite{randomnyst}, has resulted in better empirical performance and enjoys theoretical support. For an implementation see \cref{Algorithm:Nystrom}.

\begin{algorithm}[!htb]
\algorithmicrequire\ $\color{black}K_{\rhsfo_{\bintkernel}}(\bbY,\bbY;\theta) \color{black}$ (matrix-vector multiplication for kernel matrix), $\sigma^2$ (noise hyperparameter), $r$ (rank of preconditioner)
\begin{algorithmic} [1]

\STATE $\Omega \sim$ Standard Gaussian $\in \mathbb{R}^{NdML \times r}$

\STATE $R = \text{QR}(\Omega)$ \hfill using economy QR

\STATE $Y = \color{black}K_{\rhsfo_{\bintkernel}}(\bbY,\bbY;\theta) \color{black} R$

\STATE $\nu = \text{eps}(||Y||_{F})$

\STATE $Y_{\nu} = Y + \nu R$

\STATE $C = \text{chol}(R^T Y_{\nu})$

\STATE $B = Y_{\nu} / C$
\STATE $U, \Sigma = \text{svd}(B)$ \hfill using economy svd

\STATE $\Lambda = \text{max}(0, \Sigma^2 - \nu I)$

\STATE $P^{-1} = (\Lambda(-1) + \sigma^2) U (\Lambda + \sigma^2I)^{-1} U^T + I - UU^T$

\STATE $\log \det(P) = \text{sum(sum(}\log(\frac{1}{\Lambda + \sigma^2})))$

\end{algorithmic}
\algorithmicensure\ $P^{-1}$, $\log \det (P)$ (needed preconditioner quantities)
\caption{{\bf Randomized Gaussian Nystrom Preconditioner} \label{Algorithm:Nystrom}}
\end{algorithm}

\section{Proof of the Coercivity condition \JF{}{in \cref{subsec:wellposedness}}} \label{secB2}

\begin{theorem}\label{2ndordersingle:coercivity2}
Consider  $\rho_{\bY}=\begin{bmatrix}\rho_{\bX}\\ \rho_{\bV} \end{bmatrix}$, where $\rho_{\bY}$  is the product of $N$ independent and identical measures with compact support on $\mathbb{R}^d$ and $\rho_{\bV}$ is defined in the same way and is independent of $\rho_{\bX}$.  Then we have 

\begin{align}\label{coercivityidex2}
\|\rhsfo_{\bintkernelvar}\|^2_{L^2(\rho_{\bY})}\geq \frac{N-1}{N^2}\|\intkernelvare\|^2_{L^2( \tilde\rho_{r}^{E})} + \frac{N-1}{N^2}\|\intkernelvara\|^2_{L^2( \tilde\rho_{r}^{A})}
\end{align}
 \end{theorem}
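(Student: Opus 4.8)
The plan is to expand the left-hand side directly and exploit the product/independence structure of $\rho_{\bY}=\rho_{\bX}\times\rho_{\bV}$ so that exactly two diagonal contributions reproduce the right-hand side, while all remaining cross terms are shown to be non-negative and discarded. Writing the $i$-th block of the force field as $[\rhsfo_{\bintkernelvar}(\bY)]_i=\frac1N\sum_{j\neq i}g_{ij}$ with $g_{ij}=g_{ij}^E+g_{ij}^A$, $g_{ij}^E=\intkernelvare(r_{ij}^{\bx})\mbf{r}_{ij}^{\bx}$ and $g_{ij}^A=\intkernelvara(r_{ij}^{\bx})\mbf{r}_{ij}^{\bv}$, the norm definition \eqref{normed} (with $n=N$) gives
$$\|\rhsfo_{\bintkernelvar}\|^2_{L^2(\rho_{\bY})}=\frac{1}{N^3}\int\sum_{i=1}^{N}\Big\|\sum_{j\neq i}(g_{ij}^E+g_{ij}^A)\Big\|^2\,d\rho_{\bY}.$$
First I would kill the $E$--$A$ cross term: since $g_{ij}^E$ is velocity-independent while $g_{ik}^A$ is linear in $\mbf{r}_{ik}^{\bv}=\bv_k-\bv_i$, integrating over $\bV$ first gives $\E[\mbf{r}_{ik}^{\bv}]=\E[\bv_k]-\E[\bv_i]=0$ because the velocities are identically distributed. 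Hence the cross term vanishes and the problem splits cleanly into a pure $E$ part integrated against $\rho_{\bX}$ and a pure $A$ part integrated against $\rho_{\bY}$.

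For the $E$ part I would expand $\|\sum_{j\neq i}g_{ij}^E\|^2$ into the diagonal ($j=k$) and off-diagonal ($j\neq k$) pieces. The diagonal contributes $\frac{1}{N^3}\int\sum_{i\neq j}|\intkernelvare(r_{ij}^{\bx})|^2(r_{ij}^{\bx})^2\,d\rho_{\bX}$, which by definition \eqref{def1} of $\tilde\rho_r^E$ is exactly $\frac{N-1}{N^2}\|\intkernelvare\|^2_{L^2(\tilde\rho_r^E)}$. For the off-diagonal term with $i,j,k$ distinct, conditioning on $\bx_i$ and using that $\bx_j,\bx_k$ are i.i.d.\ with common law $\mu$ factorizes the integral into $\int_{\mathbb{R}^d}\|h(\bx_i)\|^2\,d\mu(\bx_i)\ge 0$, where $h(\bx_i)=\int \intkernelvare(\|\by-\bx_i\|)(\by-\bx_i)\,d\mu(\by)$. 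Dropping this non-negative off-diagonal yields the $E$ half of the bound.

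The $A$ part is handled the same way, and this is where the main subtlety lies. The diagonal again matches $\frac{N-1}{N^2}\|\intkernelvara\|^2_{L^2(\tilde\rho_r^A)}$ via definition \eqref{def2} (now using $\|\mbf{r}_{ij}^{\bv}\|^2=(r_{ij}^{\bv})^2$). For the off-diagonal with $i,j,k$ distinct I would integrate out the velocities first: because the three velocities are i.i.d., $\int\langle\bv_j-\bv_i,\bv_k-\bv_i\rangle\,d\rho_{\bV}=\E\|\bv\|^2-\|\E\bv\|^2=\operatorname{tr}\cov(\bv)\ge 0$, a single non-negative constant. The residual $\bX$-integral then factorizes as $\operatorname{tr}\cov(\bv)\int_{\mathbb{R}^d}q(\bx_i)^2\,d\mu(\bx_i)\ge 0$ with $q(\bx_i)=\int\intkernelvara(\|\by-\bx_i\|)\,d\mu(\by)$, so the $A$ off-diagonal is again non-negative and may be discarded. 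Combining the two halves gives the claimed inequality. The hardest part is the careful bookkeeping of these off-diagonal quadratic cross terms and verifying their non-negativity through the conditional product structure (a square for $E$, a variance factor times a square for $A$); by contrast, the vanishing of the $E$--$A$ cross term and the identification of the diagonals with $\tilde\rho_r^E,\tilde\rho_r^A$ are routine, modulo a Fubini justification that is guaranteed by the continuity (hence boundedness on $[0,R]$) of the RKHS kernels and the compact support of the measures.
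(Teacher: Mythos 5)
Your proof is correct, and it uses the same overall decomposition as the paper: the diagonal ($j=k$) terms reproduce $\frac{N-1}{N^2}\|\intkernelvare\|^2_{L^2(\tilde\rho_r^E)}+\frac{N-1}{N^2}\|\intkernelvara\|^2_{L^2(\tilde\rho_r^A)}$ via \eqref{def1}--\eqref{def2}, the $E$--$A$ cross terms vanish because $\E[\bv_k-\bv_i]=0$ after integrating out the independent velocity marginal, and the off-diagonal $E$--$E$ and $A$--$A$ terms are non-negative and discarded. Where you genuinely diverge is in the key non-negativity step. The paper isolates this as \cref{lemma1B} and proves it by a positive-definiteness argument: the joint density $p(u,v)$ of $(X-Y,X-Z)$ is a positive definite function, $\langle u,v\rangle$ and $g(u)g(v)$ are positive definite, and the product of positive definite functions integrates to a non-negative quantity. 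You instead condition on the common agent $\bx_i$ and use the tower property to write each off-diagonal term as an expectation of a square, $\E\bigl[\|h(\bx_i)\|^2\bigr]$ with $h(\bx_i)=\int\intkernelvare(\|\by-\bx_i\|)(\by-\bx_i)\,d\mu(\by)$ for the $E$-part, and as $\bigl(\E\|\bv\|^2-\|\E\bv\|^2\bigr)\,\E\bigl[q(\bx_i)^2\bigr]$ for the $A$-part. This is in effect an elementary re-proof of both inequalities in \cref{lemma1B}: it is shorter, avoids invoking the closure of positive definite functions under products (for which the paper cites an external reference), and makes the Fubini/integrability hypotheses easier to check since everything reduces to squares of conditional means. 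What the paper's formulation buys in exchange is a reusable standalone lemma stated for arbitrary measurable $g$ and arbitrary i.i.d.\ triples, which is convenient if the coercivity argument is to be extended to other interaction structures. Both routes yield the identical constant $\frac{N-1}{N^2}$, and your bookkeeping of the $\frac{1}{N^3}$ prefactor against the $N(N-1)$ diagonal count and the normalization in \eqref{def1}--\eqref{def2} is accurate.
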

\begin{proof} 

Following the definition of measure $\rho_{\bY}$ and the norm in \eqref{normed}, we have %
\begin{align}
 \|\rhsfo_{\bintkernelvar}\|^2_{L^2(\rho_{\bY})}  &= \frac{1}{N}\sum_{i=1}^N\left\| \sum_{i'=1}^N \frac{1}{N} \Big[\intkernelvar^E (|\bx_{i'} - \bx_i|)(\bx_{i'} - \bx_i) + \intkernelvar^A( |{\bx_{i'} - \bx_i}|)(\bv_{i'} - \bv_i)\Big] \right\|^2_{L^2(\rho_{\bY})} \nonumber \\
 &=\frac{1}{N^3}\sum_{i=1}^{N} \left( \left(\sum_{j=k=1}^N + \sum_{j\neq k=1}^N\right) C_{i,j,k}^{E}+C_{i,j,k}^{A}+D_{i,j,k}\right)    \nonumber \\
&=\frac{N-1}{N^2}( \|\intkernelvar^{E} \|_{L^2(\tilde\rho_r^{E})}^2+ \|\intkernelvar^{A}\|_{L^2(\tilde\rho_r^{A})}^2) + \mathcal{R}
\end{align}
where
\begin{align*}
C_{i,j,k}^{E}&=   \langle \intkernelvar^{E}(\|\bx_j-\bx_i\|)(\bx_j-\bx_i), \intkernelvar^E(\|\bx_k-\bx_i\|)(\bx_k-\bx_i)\big\rangle_{L^2(\rho_{\bY})}, \\
C_{i,j,k}^{A}&=  \langle \intkernelvar^{A}(\|\bx_j-\bx_i\|)(\bv_j-\bv_i), \intkernelvar^A(\|\bx_k-\bx_i\|)(\bv_k-\bv_i)\big\rangle_{L^2(\rho_{\bY})},\\
D_{i,j,k}&=   \langle \intkernelvar^{E}(\|\bx_j-\bx_i)\|(\bx_j-\bx_i), \intkernelvar^A(\|\bx_k-\bx_i\|)(\bv_k-\bv_i)\big\rangle_{L^2(\rho_{\bY})}\\&\quad\quad+ \langle \intkernelvar^{A}(\|\bx_{j}-\bx_{i}\|)(\bv_j-\bv_i), \intkernelvar^E(\|\bx_k-\bx_i\|)(\bx_k-\bx_i)\big\rangle_{L^2(\rho_{\bY})}=0,\\
 \mathcal{R} &= \frac{1}{N^3} \sum_{i=1}^N\sum_{j\neq k, j\neq i, k\neq i} (C_{ijk}^{A}+C_{ijk}^{E}).
 \end{align*}
 
By the property of $\rho_{\bY}$, when $i,j,k$ are distinct, we have 
 \begin{align*}
 C_{ijk}^{E}&=\mathbb{E}\big[ \intkernelvar^{E}(\|X_1-X_2\|) \intkernelvar^{E}(\|X_1-X_3\|) \left\langle X_2-X_1,  X_3-X_1 \right\rangle\big]\\
C_{ijk}^{A}&=\mathbb{E}\big[ \intkernelvar^{A}(\|X_1-X_2\|) \intkernelvar^{A}(\|X_1-X_3\|)\big]\mathbb{E}\big[\left\langle V_2-V_1,  V_3-V_1 \right\rangle\big],
\end{align*} for all $(i,j,k)$, where $X_i$s and $V_i$s are identical copies of the position and velocity variables $\bx_i,\bv_i$s.  From the \cref{lemma1B} below,
\begin{align*} 
C_{ijk}^{E}\geq 0, C_{ijk}^{A}\geq 0
\end{align*}
 and we used the fact
$$\mathbb{E}\big[\left\langle V_2-V_1,  V_3-V_1 \right\rangle\big]=\mathbb{E}(\|V_1\|^2)-\|\mathbb{E}(V_1)\|^2\geq 0$$  
Therefore,  
\begin{align*}
 \|\rhsfo_{\bintkernelvar}\|^2_{L^2(\rho_{\bY})} \geq   \frac{N-1}{N^2}( \|\intkernelvar^{E} \|_{L^2(\tilde\rho_r^{E})}^2+ \|\intkernelvar^{A}\|_{L^2(\tilde\rho_r^{A})}^2)
\end{align*} 

 \end{proof}

 The proof of \cref{2ndordersingle:coercivity2} uses the following lemma. 

\begin{lemma}\label{lemma1B}If $X,Y,Z$ are i.i.d random vectors, then for any measurable function $g$ on $\mathbb{R}^d$, we have that 
\[\mathbb{E}[g(X-Y)g(X-Z)\inp{X-Y,X-Z}]\geq 0,\]
\[\mathbb{E}[g(X-Y)g(X-Z)] \geq 0,\]
provided the expectation exists.
\end{lemma}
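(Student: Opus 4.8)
The plan is to prove both inequalities by conditioning on $X$ and exploiting that, since $X,Y,Z$ are mutually independent, the pair $(Y,Z)$ remains a pair of independent copies of the common law once we freeze $X=x$. The guiding observation is that, after conditioning, each integrand factors into the inner product (resp.\ product) of two \emph{identical} conditional expectations, and therefore becomes a squared norm (resp.\ a square), which is manifestly nonnegative. Taking the outer expectation over $X$ then preserves nonnegativity.

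First I would dispatch the second inequality, which is the simpler of the two. Writing $\mathbb{E}[g(X-Y)g(X-Z)] = \mathbb{E}_X\big[\mathbb{E}[g(X-Y)g(X-Z)\mid X]\big]$ and using the conditional independence of $Y$ and $Z$ given $X$, the inner conditional expectation at $X=x$ factors as $\mathbb{E}_Y[g(x-Y)]\cdot\mathbb{E}_Z[g(x-Z)]$, which equals $\big(\mathbb{E}_Y[g(x-Y)]\big)^2$ because $Y$ and $Z$ share the same distribution. Integrating the nonnegative quantity $\big(\mathbb{E}_Y[g(X-Y)]\big)^2$ against the law of $X$ yields the claim.

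For the first inequality, I would introduce the vector-valued map $h(x,w):=g(x-w)(x-w)\in\mathbb{R}^d$, so that $g(X-Y)g(X-Z)\inp{X-Y,X-Z}=\inp{h(X,Y),h(X,Z)}$, where $\langle\cdot,\cdot\rangle$ denotes the Euclidean inner product. Conditioning on $X=x$ and using bilinearity together with the conditional independence of $Y,Z$, the conditional expectation factors componentwise into $\inp{\mathbb{E}_Y[h(x,Y)],\mathbb{E}_Z[h(x,Z)]}=\norm{\mathbb{E}_Y[h(x,Y)]}^2\geq 0$, again because $Y$ and $Z$ are identically distributed. Integrating over the law of $X$ gives $\mathbb{E}\big[\norm{\mathbb{E}_Y[h(X,Y)]}^2\big]\geq 0$, as desired.

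The only real obstacle is the rigorous justification of the factorization of the conditional expectation into a product, which amounts to an application of Fubini/Tonelli combined with conditional independence. This is precisely where the stated proviso ``provided the expectation exists'' is used: integrability of $g(X-Y)g(X-Z)\inp{X-Y,X-Z}$ (resp.\ of $g(X-Y)g(X-Z)$) guarantees that the iterated integrals coincide and that $\mathbb{E}_Y[h(x,Y)]$ (resp.\ $\mathbb{E}_Y[g(x-Y)]$) is finite for $\rho_X$-almost every $x$, so the algebraic manipulations are legitimate. Once this integrability bookkeeping is settled, the remaining steps are purely formal and require no further computation.
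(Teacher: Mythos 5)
Your proof is correct, but it takes a genuinely different route from the paper's. The paper assumes (WLOG) that $X$ has a density $p$, writes down the joint density $p(u,v)=\int p(x)p(x-u)p(x-v)\,dx$ of $(U,V)=(X-Y,X-Z)$, verifies that $p(u,v)$ is a positive definite kernel, invokes that $\langle u,v\rangle$ and $g(u)g(v)$ are positive definite (citing an external reference for the latter), and concludes via the Schur product theorem that the integrand is a positive definite function whose integral is nonnegative. You instead condition on $X$, use the independence of $Y$ and $Z$ to factor the conditional expectation, and exhibit the result explicitly as $\mathbb{E}_X\bigl[\|\mathbb{E}_Y[h(X,Y)]\|^2\bigr]$ with $h(x,w)=g(x-w)(x-w)$ (and the analogous scalar square for the second inequality). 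The two arguments compute the same quantity---unwinding the paper's positive-definiteness step via the substitution $u=x-y$ recovers exactly your inner square $\|\int g(x-y)(x-y)\,d\mu(y)\|^2$---but yours is more elementary and slightly more general: it needs no density assumption, no kernel positive-definiteness machinery or Schur product, and no limiting argument to pass from discrete positive definiteness to nonnegativity of a double integral. The one point to be careful about is the integrability bookkeeping you flag at the end: absolute integrability of $g(X-Y)g(X-Z)\langle X-Y,X-Z\rangle$ does not by itself force $\mathbb{E}_Y[\|h(x,Y)\|]<\infty$ for a.e.\ $x$ (cancellations in the inner product could mask non-integrability of $h$), so strictly one should assume the slightly stronger condition $\mathbb{E}\bigl[\|h(X,Y)\|\,\|h(X,Z)\|\bigr]<\infty$ to license the Fubini factorization; the paper's proof is no more careful on this point, so this is a shared, minor caveat rather than a gap specific to your argument.
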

\begin{proof} Without loss of generality, suppose the probability density function of $X$ is $p(x)$. (The discrete distribution case follows from the same argument). Let $(U,V)=(X-Y,X-Z)$. By the independence of $X,Y,Z$, the pdf of $(U,V)$ is 

\[p(u,v)=\int p(x)p(x-u)p(x-v)dx.\]
Since
\[\sum_{i=1}^N\sum_{j=1}^N c_i\bar{c}_jp(u_i,u_j)=\int p(x)|\sum_{i=1}^N c_ip(x-u_i)|^2dx\ge0,\]
which means $p(u,v)$ is positive definite (p.d.) As $\inp{u,v}$ is p.d and $g(u)g(v)$ is p.d. \cite{li2021identifiability},  we get $g(u)g(v)\inp{u,v}p(u,v)$ is p.d.. Note that
\begin{align}
\mathbb{E}[g(X-Y)g(X-Z)\inp{X-Y,X-Z}]=\int_{\mathbb{R}^{2d}} g(u)g(v)\inp{u,v} p(u,v) du dv,
\end{align}
if the function $g(u)g(v)\inp{u,v}p(u,v)$ is measurable and integrable. Then the inequality holds by p.d. property. Similarly, one can prove the second inequality. 
\end{proof}

\section{Proof of Representer Theorem} \label{secA2}

We prove \JF{}{the Representer Theorem (\cref{representerthm} in main text \cref{subsec:representerthm})} by using an operator-theoretic approach.  

\begin{proposition}\label{eoperator}  Given the empirical noisy trajectory data $(\bbY_M,\bbZ_{\sigma^2,M})=\{\bbX_M,\bbV_M,\bbZ_{\sigma^2,M}\}$. We define the sampling operator
$A_{M}: \mHe\times\mHa \rightarrow \mathbb{R}^{dNML}$ by
\begin{align}\label{finiterank}
A_{M}\bintkernelvar=\rhsfo_{\bintkernelvar}(\bbX_M)&:=\mathrm{Vec}(\{\rhsfo_{\bintkernelvar}(\bY^{(m,l)})\}_{m,l=1}^{M,L}) =\mathrm{Vec}(\{\rhsfo_{\intkernelvare}(\bY^{(m,l)}) + \rhsfo_{\intkernelvara}(\bY^{(m,l)})\}_{m,l=1}^{M,L}),
\end{align} where $\mathbb{R}^{dNML}$ is equipped with the inner product defined in \eqref{winnerp}. 

\begin{itemize}
 \item [1.]The adjoint operator  $A_{M}^*$ is a finite rank operator. For any noise vector $\mathbb{W}$ in  $\mathbb{R}^{dNML}$, let $\mathbb{W}_{m,l,i} \in \mathbb{R}^d$ denote the $i$-th component of $(m,l)$th block of $\mathbb{W}$ as the same way in $\mathbb{Y}_{M}$, then we have 
\begin{align}
A^*_{M}\mathbb{W}=&\bigg( \frac{1}{LM}\sum_{l,m=1}^{L,M}\sum_{i=1,i'\neq i}^{N}\frac{1}{N^2}K_{r_{ii'}^{\bX^{(m,l)}}}^E \langle \mbf{r}_{ii'}^{\bX^{(m,l)}}, \mathbb{W}_{m,l,i}\rangle,\notag\\
& \qquad \qquad \frac{1}{LM}\sum_{l,m=1}^{L,M}\sum_{i=1,i'\neq i}^{N}\frac{1}{N^2}K_{r_{ii'}^{\bX^{(m,l)}}}^A \langle \mbf{r}_{ii'}^{\bV^{(m,l)}}, \mathbb{W}_{m,l,i}\rangle\bigg).
\end{align}
For any function $\bintkernelvar \in \mHe\times\mHa$, we have that 
\begin{align}
B_{M}\bintkernelvar:=A^*_{M}A_{M}\bintkernelvar=&\bigg(\frac{1}{LM}\sum_{l,m=1}^{L,M}\sum_{i=1,i', i'' \neq i}^{N}\frac{1}{N^3}K_{r_{ii'}^{\bX^{(m,l)}}}^E (\langle \intkernelvare ,K_{r_{ii''}^{\bX^{(m,l)}}}^E \rangle_{\mHe} \langle \mbf{r}_{ii'}^{\bX^{(m,l)}},\mbf{r}_{ii''}^{\bX^{(m,l)}}\rangle \\
& \hspace{1in} + \langle \intkernelvara ,K_{r_{ii''}^{\bX^{(m,l)}}}^A \rangle_{\mHa} \langle \mbf{r}_{ii'}^{\bX^{(m,l)}},\mbf{r}_{ii''}^{\bV^{(m,l)}}\rangle),\notag\\
&\frac{1}{LM}\sum_{l,m=1}^{L,M}\sum_{i=1,i', i'' \neq i}^{N}\frac{1}{N^3}K_{r_{ii'}^{\bX^{(m,l)}}}^A (\langle \intkernelvara ,K_{r_{ii''}^{\bX^{(m,l)}}}^A \rangle_{\mHa} \langle \mbf{r}_{ii'}^{\bV^{(m,l)}},\mbf{r}_{ii''}^{\bV^{(m,l)}}\rangle \\
& \hspace{1in} + \langle \intkernelvare ,K_{r_{ii''}^{\bX^{(m,l)}}}^E \rangle_{\mHe} \langle \mbf{r}_{ii'}^{\bV^{(m,l)}},\mbf{r}_{ii''}^{\bX^{(m,l)}}\rangle) \bigg).
\end{align}
\item[2.]  If $\mbf{\lambda}=(\lambda^E,\lambda^A)>0$, a unique minimizer  $\phi_{\mHe\times\mHa}^{\mbf{\lambda},M}$ that solves 
$$
\argmin{\bintkernelvar \in \mHe\times\mHa}\mE^{\mbf{\lambda},M}(\bintkernelvar)
:=\|A_M\bintkernelvar -\bbZ_{\sigma^2,M}\|^2+ \|\sqrt{\mbf{\lambda}}\cdot\bintkernelvar\|_{\mHe\times\mHa}^2$$
exists and is given by 
\begin{align}\label{em}
\phi_{\mHe\times\mHa}^{\mbf{\lambda},M}=(B_M+\mbf{\lambda})^{-1}A_{M}^{*}\bbZ_{\sigma^2,M}.
\end{align} where we interpret the map $\mbf{\lambda}(\bintkernel)$ by $\mbf{\lambda} \cdot \bintkernel=(\lambda^E\intkernele, \lambda^A{\intkernela})$.
\end{itemize}
\end{proposition}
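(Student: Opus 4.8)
The plan is to handle both parts with the reproducing property of the RKHSs together with elementary Hilbert-space convex optimization. For Part 1, I would compute $A_M^*$ straight from its defining identity $\langle A_M\bintkernelvar,\mathbb{W}\rangle=\langle \bintkernelvar,A_M^*\mathbb{W}\rangle_{\mHe\times\mHa}$, taken to hold for every $\mathbb{W}\in\mathbb{R}^{dNML}$. Expanding the left-hand side using the normalized inner product \eqref{winnerp} and the componentwise definition of $\rhsfo_{\bintkernelvar}$ in \eqref{finiterank} yields a finite sum in which each scalar $\intkernelvare(r_{ii'}^{\bX^{(m,l)}})$ and $\intkernelvara(r_{ii'}^{\bX^{(m,l)}})$ is multiplied by an inner product of $\mathbb{W}_{m,l,i}$ with the corresponding difference vector $\mbf{r}_{ii'}^{\bX^{(m,l)}}$ or $\mbf{r}_{ii'}^{\bV^{(m,l)}}$. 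Replacing each kernel value by its reproducing representation, e.g. $\intkernelvare(r)=\langle\intkernelvare,K_r^E\rangle_{\mHe}$, pulls the data-dependent coefficients outside the RKHS inner product and exhibits $A_M^*\mathbb{W}$ as precisely the stated combination of the finitely many kernel sections $K^E_{r_{ii'}^{\bX^{(m,l)}}}$ and $K^A_{r_{ii'}^{\bX^{(m,l)}}}$; finite-rankness is then immediate, since $\mathrm{Im}(A_M^*)$ lies in their span.

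The formula for $B_M=A_M^*A_M$ follows by substituting $\mathbb{W}=A_M\bintkernelvar$ into the adjoint formula and applying the reproducing property a \emph{second} time, now to the inner kernel evaluations in $A_M\bintkernelvar$; this is what produces the triple index sum over $i,i',i''$, the factor $\tfrac{1}{N^3}$, and the mixed $E$--$A$ couplings through $\langle\mbf{r}_{ii'}^{\bX^{(m,l)}},\mbf{r}_{ii''}^{\bV^{(m,l)}}\rangle$. For Part 2, I would establish existence and uniqueness abstractly before producing the closed form. The functional $\mE^{\mbf{\lambda},M}$ splits into the convex quadratic data term $\|A_M\bintkernelvar-\bbZ_{\sigma^2,M}\|^2$ and the regularizer $\lambda^E\|\intkernelvare\|_{\mHe}^2+\lambda^A\|\intkernelvara\|_{\mHa}^2$. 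Since $\lambda^E,\lambda^A>0$, the regularizer is strictly convex and coercive, giving $\mE^{\mbf{\lambda},M}(\bintkernelvar)\geq \min(\lambda^E,\lambda^A)\,\|\bintkernelvar\|_{\mHe\times\mHa}^2$, so $\mE^{\mbf{\lambda},M}$ is a strictly convex, coercive, continuous functional on a Hilbert space and therefore attains its value at a unique minimizer.

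To identify the minimizer I would compute the Gateaux derivative of $\mE^{\mbf{\lambda},M}$ in an arbitrary direction $\bm\psi$, obtaining $2\langle A_M^*(A_M\bintkernelvar-\bbZ_{\sigma^2,M})+\mbf{\lambda}\cdot\bintkernelvar,\bm\psi\rangle_{\mHe\times\mHa}$, and set it to zero for all $\bm\psi$; this produces the normal equation $(B_M+\mbf{\lambda})\bintkernelvar=A_M^*\bbZ_{\sigma^2,M}$. It then remains to invert $B_M+\mbf{\lambda}$: the operator $B_M=A_M^*A_M$ is self-adjoint and positive semidefinite (since $\langle B_M\bintkernelvar,\bintkernelvar\rangle=\|A_M\bintkernelvar\|^2\geq 0$), while $\mbf{\lambda}\cdot$ acts as multiplication by $\lambda^E$ and $\lambda^A$ on the two factors, so $B_M+\mbf{\lambda}\succeq \min(\lambda^E,\lambda^A)\,I\succ 0$ and is boundedly invertible. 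Applying the inverse gives $\phi_{\mHe\times\mHa}^{\mbf{\lambda},M}=(B_M+\mbf{\lambda})^{-1}A_M^*\bbZ_{\sigma^2,M}$, which is \eqref{em}.

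I expect the main obstacle to be bookkeeping rather than conceptual: correctly propagating the normalization constants from \eqref{winnerp} through the two successive uses of the reproducing property so that the $\tfrac{1}{N^2}$ in $A_M^*$ and the $\tfrac{1}{N^3}$ in $B_M$ emerge exactly as stated, and keeping the $E$ and $A$ blocks organized given that $B_M$ couples them through the mixed inner products. The optimization half is standard Tikhonov theory and should present no genuine difficulty.
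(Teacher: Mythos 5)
Your proposal is correct and follows essentially the same route as the paper, which itself only sketches the argument: Part 1 via the defining adjoint identity $\langle A_M\bintkernelvar,\mathbb{W}\rangle=\langle\bintkernelvar,A_M^*\mathbb{W}\rangle_{\mHe\times\mHa}$ combined with the reproducing property, and Part 2 by solving the normal equation. Your version simply supplies the bookkeeping and the standard strict-convexity/coercivity justification that the paper leaves implicit.
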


\begin{proof} The part 1 of \cref{eoperator} can be derived by using the identity $\langle A_M \bintkernelvar, \mbf{w}\rangle=\langle  \bintkernelvar, A_M^*\mbf{w} \rangle_{\mHe\times\mHa}$.  Part 2 of \cref{eoperator} is straightforward by  solving the normal equation. 
\end{proof}

Now we derive a basis representation formula for the empirical minimizer of  \eqref{em}

\begin{theorem}\label{representerthm2}
If $\mbf{\lambda}>0$, then the minimizer of the regularized empirical risk functional $\mE^{\mbf{\lambda},M}(\cdot)$ has the form
\begin{equation}
  \phi_{\mHe\times\mHa}^{\lambda,M}= (\sum_{r^x \in r_{\bbX_M}} \hat c_{r^x} K_{r^x}^E, \sum_{(r^x,r^v) \in (r_{\bbX_M} \times r_{\bbV_M})} \hat c_{r^v} K_{r^x}^A),
\end{equation}
where $r_{\bbX_M} \in \mathbb{R}^{MLN^2}$ is the set contains all the pair distances in $\bbX_{M}$, i.e. 
\begin{equation}
r_{\bbX_M} = \begin{bmatrix}r_{11}^{(1,1)},\dots,r_{1N}^{(1,1)},\dots,r_{N1}^{(1,1)},\dots,r_{NN}^{(1,1)}, \dots, r_{11}^{(M,L)},\dots,r_{1N}^{(M,L)},\dots, r_{N1}^{(M,L)},\dots,r_{NN}^{(M,L)}\end{bmatrix}^T,
\end{equation}
and  $r_{\bbX_M}\times r_{\bbV_M} \in \mathbb{R}^{MLN^2\times MLN^2}$ is the set contains all the pair distances in $\bbX_{M}$ and their associated pair distances in $\bbV_{M}$.\\
Moreover, we have 
\begin{eqnarray}\label{solution1}
    \hat c_{r^x}= \frac{1}{N}\mbf{r}_{\bbX_M}^T \cdot  (K_{\rhsfo_\bintkernel}(\bbY_M,\bbY_M) + \lambda^E N ML I)^{-1}\bbZ_{\sigma^2,M},\notag\\
    \hat c_{r^v}= \frac{1}{N}\mbf{r}_{\bbV_M}^T \cdot  (K_{\rhsfo_\bintkernel}(\bbY_M,\bbY_M) + \lambda^A N ML I)^{-1}\bbZ_{\sigma^2,M},
\end{eqnarray}
where the block-diagonal matrix $\mbf{r}_{\bbX_M} = \mathrm{diag}(\mbf{r}_{\bX^{(m,l)}}) \in \mathbb{R}^{MLdN \times MLN^2}$ and $\mbf{r}_{\bX^{(m,l)}} \in \mathbb{R}^{dN\times N^2}$ defined by
\begin{equation}
\mbf{r}_{\bX^{(m,l)}} = 
    \begin{bmatrix}
     \mbf{r}_{11}^{(m,l)}, \dots, \mbf{r}_{1N}^{(m,l)} & \mbf{0} & \cdots & \mbf{0}\\
     \mbf{0} & \mbf{r}_{21}^{(m,l)}, \dots, \mbf{r}_{2N}^{(m,l)} & \cdots & \mbf{0}\\
     \vdots & \vdots & \ddots & \vdots\\
    \mbf{ 0} & \mbf{0} & \cdots & \mbf{r}_{N1}^{(m,l)}, \dots, \mbf{r}_{NN}^{(m,l)}
    \end{bmatrix}
    \ ,
\end{equation}
and same for $\mbf{r}_{\bbV_M}$.
\end{theorem}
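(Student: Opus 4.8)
The plan is to build directly on the operator-theoretic machinery of \cref{eoperator}, so that the Representer theorem becomes a matter of reading off the range of the adjoint and then identifying coefficients. First I would invoke part 2 of \cref{eoperator}, which already furnishes the unique minimizer in closed form, $\phi_{\mHe\times\mHa}^{\mbf{\lambda},M}=(B_M+\mbf{\lambda})^{-1}A_M^{*}\bbZ_{\sigma^2,M}$. The qualitative content of the theorem --- that the two components of the minimizer lie in the finite-dimensional spans $\mathrm{span}\{K^E_{r^x}: r^x\in r_{\bbX_M}\}$ and $\mathrm{span}\{K^A_{r^x}: r^x\in r_{\bbX_M}\}$ --- then follows immediately from the explicit formula for $A_M^{*}$ in part 1: every vector of the form $A_M^{*}\mathbb{W}$ is, componentwise, a finite linear combination of the functions $K^E_{r^{\bX^{(m,l)}}_{ii'}}$ (respectively $K^A_{r^{\bX^{(m,l)}}_{ii'}}$) indexed exactly by the pairwise distances collected in $r_{\bbX_M}$. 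Since the minimizer is $(B_M+\mbf\lambda)^{-1}$ applied to such a vector, I would rewrite the normal equation $(B_M+\mbf\lambda)\phi=A_M^{*}\bbZ_{\sigma^2,M}$ componentwise to expose this structure.

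Writing the residual $\mathbb{W}:=\bbZ_{\sigma^2,M}-A_M\phi$ and expanding $B_M=A_M^{*}A_M$, the $E$- and $A$-rows of the normal equation collapse to $\lambda^E\bar\phi^E=(A_M^{*}\mathbb{W})^E$ and $\lambda^A\bar\phi^A=(A_M^{*}\mathbb{W})^A$ for the single residual $\mathbb{W}$. Substituting the part-1 formula for $A_M^{*}$ then yields $\bar\phi^E=\sum_{r^x\in r_{\bbX_M}}\hat c_{r^x}K^E_{r^x}$ with $\hat c_{r^x}=\frac{1}{\lambda^E LMN^2}\langle \mbf{r}^{\bX^{(m,l)}}_{ii'},\mathbb{W}_{m,l,i}\rangle$, and analogously for $\bar\phi^A$ with the velocity-difference vectors $\mbf{r}^{\bV^{(m,l)}}_{ii'}$. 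In block-matrix notation these read $\hat{\mathbf c}_{r^x}=\frac{1}{\lambda^E LMN^2}\,\mbf{r}_{\bbX_M}^{T}\mathbb{W}$ and $\hat{\mathbf c}_{r^v}=\frac{1}{\lambda^A LMN^2}\,\mbf{r}_{\bbV_M}^{T}\mathbb{W}$, using the block-diagonal matrices $\mbf{r}_{\bbX_M},\mbf{r}_{\bbV_M}$ defined in the statement. This already delivers the basis representation; what remains is to eliminate $\mathbb{W}$ in favor of the data-space kernel matrix $K_{\rhsfo_{\bintkernel}}$.

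For the final step I would establish the key identity that, in the $\tfrac{1}{MLN}$-weighted inner product of \eqref{winnerp}, each composition $A_M^{\mathrm{type}}(A_M^{\mathrm{type}})^{*}$ acts on $\mathbb{R}^{dNML}$ as $\tfrac{1}{MLN}K^{\mathrm{type}}_{\rhsfo}$, where $K^E_{\rhsfo}+K^A_{\rhsfo}=K_{\rhsfo_{\bintkernel}}$ is the covariance matrix of \cref{lemma:prior} (the cross terms vanish because $\intkernele,\intkernela$ are independent); a short reproducing-property computation pins down the constant $\tfrac{1}{MLN}$ exactly. Substituting $\phi=\tfrac{1}{\mbf\lambda}A_M^{*}\mathbb{W}$ into $\mathbb{W}=\bbZ_{\sigma^2,M}-A_M\phi$ produces a finite linear system for $\mathbb{W}$; solving it and applying the push-through identity $A_M^{*}(A_MA_M^{*}+cI)^{-1}=(A_M^{*}A_M+cI)^{-1}A_M^{*}$ per component rewrites the relevant projection of $\mathbb{W}$ through $(K_{\rhsfo_{\bintkernel}}+\lambda^{\mathrm{type}}NML\,I)^{-1}\bbZ_{\sigma^2,M}$, and the surviving normalization constants convert the prefactor $\tfrac{1}{\lambda^{\mathrm{type}}LMN^2}$ into the stated $\tfrac{1}{N}$, reproducing \eqref{solution1}. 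The main obstacle I anticipate is precisely this last bookkeeping: because the two RKHS components are regularized by different constants $\lambda^E,\lambda^A$ while being coupled through the single residual $\mathbb{W}$, the scalar push-through must be applied carefully to each component, and one must track the interacting factors of $N$ arising from the $\tfrac{1}{N}$ in $\rhsfo_{\bintkernel}$, the $\tfrac{1}{N^2}$ in the covariance, and the $\tfrac{1}{MLN}$ in the ambient inner product in order to land exactly on $\lambda^E NML$ and $\lambda^A NML$.
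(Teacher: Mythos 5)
Your derivation of the qualitative part of the theorem takes a genuinely different, and more elementary, route than the paper's. The paper shows that $B_M$ leaves the finite-dimensional subspace $\mathcal{H}_{K,M}^E\times\mathcal{H}_{K,M}^A$ spanned by the kernel sections invariant and then invokes the spectral theory of compact self-adjoint operators to conclude that $(B_M+\mbf{\lambda})^{-1}$ preserves that subspace, so the closed form $(B_M+\mbf{\lambda})^{-1}A_M^*\bbZ_{\sigma^2,M}$ already lies in the span. You instead rearrange the normal equation into $\lambda^E\bar\varphi^E=(A_M^E)^*\mathbb{W}$, $\lambda^A\bar\varphi^A=(A_M^A)^*\mathbb{W}$ with the residual $\mathbb{W}=\bbZ_{\sigma^2,M}-A_M\bar{\bintkernelvar}$, and read the representation directly off the range of the adjoint; this avoids spectral theory and yields the coefficients as explicit functionals of $\mathbb{W}$ for free. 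For the quantitative part the two computations are dual to one another: the paper writes the normal equation as a coupled $MLN^2$-dimensional system in the coefficients and verifies the stated $\hat{\mathbf{c}}$ by substitution using the identity $\mbf{r}_{\bbX_M}K^E(r_{\bbX_M},r_{\bbX_M})\mbf{r}_{\bbX_M}^T+\mbf{r}_{\bbV_M}K^A(r_{\bbX_M},r_{\bbX_M})\mbf{r}_{\bbV_M}^T=N^2K_{\rhsfo_\bintkernel}(\bbY_M,\bbY_M)$, while you solve the $dNML$-dimensional equation for $\mathbb{W}$ and push through; your identification of $A_M^{\mathrm{type}}(A_M^{\mathrm{type}})^*$ with $\tfrac{1}{MLN}$ times the corresponding block of the covariance matrix, in the weighted inner product \eqref{winnerp}, is exactly the dual form of that identity and the constants do work out as you describe.

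The one step that does not close as written is the final elimination when $\lambda^E\neq\lambda^A$. Your residual equation is
\begin{equation*}
\Bigl(I+\tfrac{1}{\lambda^E MLN}K^E_{\rhsfo}+\tfrac{1}{\lambda^A MLN}K^A_{\rhsfo}\Bigr)\mathbb{W}=\bbZ_{\sigma^2,M},
\end{equation*}
where $K^E_{\rhsfo}+K^A_{\rhsfo}=K_{\rhsfo_\bintkernel}(\bbY_M,\bbY_M)$, and when the two regularization constants differ this operator is not of the form $\tfrac{1}{c}\bigl(K_{\rhsfo_\bintkernel}(\bbY_M,\bbY_M)+cNML\,I\bigr)$ for any single scalar $c$, so the push-through identity cannot deliver two different resolvents $(K_{\rhsfo_\bintkernel}(\bbY_M,\bbY_M)+\lambda^{\mathrm{type}}NML\,I)^{-1}$ applied to the same $\bbZ_{\sigma^2,M}$ from the single residual $\mathbb{W}$. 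This is a genuine structural coupling, not bookkeeping, and you should not assert that the constants "convert" into \eqref{solution1} without restricting to $\lambda^E=\lambda^A$. To be fair, the paper's own substitution of \eqref{solution1} into its coefficient-space system leaves a residual term proportional to $\mbf{r}_{\bbX_M}^T\mbf{r}_{\bbV_M}K^A\mbf{r}_{\bbV_M}^T\bigl((K_{\rhsfo_\bintkernel}+\lambda^A NML I)^{-1}-(K_{\rhsfo_\bintkernel}+\lambda^E NML I)^{-1}\bigr)\bbZ_{\sigma^2,M}$ that vanishes only under the same condition, so your derivation, carried out honestly, exposes a real subtlety in the stated formula rather than a defect specific to your approach.
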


\begin{proof} Let  $\mathcal{H}_{K^E,M}$ be the subspace of $\mHe$ spanned by the set of functions $\{K_{r}^E: r\in r_{\bbX_M}\}$, and similarly for $\mathcal{H}_{K^A,M}$. By Proposition \cref{eoperator}, we know that $B_M(\mathcal{H}_{K,M}^E \times \mathcal{H}_{K,M}^A) \subset \mathcal{H}_{K,M}^E \times \mathcal{H}_{K,M}^A$. Since $B_M$ is self-adjoint and compact, by the spectral theory of self-adjoint compact operator (see \cite{blank2008hilbert}),  $\mathcal{H}_{K,M}^E \times \mathcal{H}_{K,M}^A$ is also an invariant subspace for the operator $(B_M+\lambda I)^{-1}$.  Then by \eqref{em}, there exists vectors
$\hat c_{r^x}$, $\hat c_{r^v}$ such that 
\begin{equation}\label{brep}
    \phiH^{\lambda,M} = (\sum_{r^x \in r_{\bbX_M}} \hat c_{r^x} K_{r^x}^E, \sum_{(r^x,r^v) \in (r_{\bbX_M}\times r_{\bbX_M})} \hat c_{r^v} K_{r^x}^A).
\end{equation}

Then, multiplying $(B_M+\mbf{\lambda})$ on both sides of \eqref{em} and plugging in \eqref{brep}, we can obtain 
\begin{equation}\label{mtxid}
\begin{cases}
\big(\mbf{r}_{\bbX_M}^T\mbf{r}_{\bbX_M}K^E(r_{\bbX_M}, {r_{\bbX_M}})+\lambda^E N^3ML I \big)\hat c_{r^x} + \mbf{r}_{\bbX_M}^T\mbf{r}_{\bbV_M}K^A(r_{\bbX_M}, {r_{\bbX_M}})\hat c_{r^v}&=N\mbf{r}_{\bbX_M}^T \bbZ_{\sigma^2,M}\notag\\
\big(\mbf{r}_{\bbV_M}^T\mbf{r}_{\bbV_M}K^A(r_{\bbX_M}, {r_{\bbX_M}})+\lambda^A N^3ML I \big)\hat c_{r^v} + \mbf{r}_{\bbV_M}^T\mbf{r}_{\bbX_M}K^E(r_{\bbX_M}, {r_{\bbX_M}})\hat c_{r^x}&=N\mbf{r}_{\bbV_M}^T \bbZ_{\sigma^2,M}
\end{cases}
\end{equation}
using the matrix representation of $(B_M+\mbf{\lambda} )$ with respect to the spanning sets $\{K_{r}^E: r\in r_{\bbX_M}\}$ and $\{K_{r}^A: r\in r_{\bbX_M}\}$.

Recall that we have $K^E(r_{\bbX_M}, {r_{\bbX_M}})=(K^E(r_{ij}, r_{i'j'}))_{r_{ij},r_{i'j'} \in r_{\bbX_M}}$, $K^A(r_{\bbX_M}, {r_{\bbX_M}})=$\\$(K^A(r_{ij}, r_{i'j'}))_{r_{ij},r_{i'j'} \in r_{\bbX_M}}$ and $K_{\rhsfo_\bintkernel}(\bbY_M,\bbY_M)=\mathrm{Cov}(\rhsfo_{\bintkernel}(\bbY_M), 
\rhsfo_{\bintkernel}(\bbY_M))$, so using the identity 
\begin{align}\label{id}
\mbf{r}_{\bbX_M}K^E(r_{\bbX_M}, {r_{\bbX_M}}) \mbf{r}_{\bbX_M}^T + \mbf{r}_{\bbV_M}K^A(r_{\bbX_M}, {r_{\bbX_M}}) \mbf{r}_{\bbV_M}^T=N^2K_{\rhsfo_\bintkernel}(\bbY_M,\bbY_M)
\end{align}
and the fact that the matrices $\big(\mbf{r}_{\bbX_M}^T\mbf{r}_{\bbX_M}K^E(r_{\bbX_M}, {r_{\bbX_M}})+\lambda^E N^3ML I \big)$, $\big(\mbf{r}_{\bbV_M}^T\mbf{r}_{\bbV_M}K^A(r_{\bbX_M}, {r_{\bbX_M}})+\lambda^A N^3ML I \big)$ are invertible, one can verify that 
\begin{eqnarray}
\begin{cases}
    \hat c_{r^x}&= \frac{1}{N}\mbf{r}_{\bbX_M}^T \cdot  (K_{\rhsfo_\bintkernel}(\bbY_M,\bbY_M) + \lambda^E N ML I)^{-1}\bbZ_{\sigma^2,M},\\
    \hat c_{r^v}&= \frac{1}{N}\mbf{r}_{\bbV_M}^T \cdot  (K_{\rhsfo_\bintkernel}(\bbY_M,\bbY_M) + \lambda^A N ML I)^{-1}\bbZ_{\sigma^2,M},
\end{cases}
\end{eqnarray}
is the solution. 

\end{proof}

Now we are ready to finish the proof of the Representer theorem. 

\begin{proof} 
Let $\tilde K^E=\frac{\sigma^2 K^E}{MNL\lambda^E}$, $\tilde K^A=\frac{\sigma^2 K^A}{MNL\lambda^A}$.

  Since $\intkernele \sim \mathcal{GP}(0,\tilde K^E)$, $\intkernela \sim \mathcal{GP}(0,\tilde K^A)$, the posterior mean in $\eqref{eq:estimated phi} $ will then become
\begin{align*}
\bar{\intkernel}_M^E(r^{\ast})&= \tilde K_{\intkernele,\rhsfo_{\bintkernel}}(r^\ast,\bbX_M)(\tilde K_{\rhsfo_\bintkernel}(\bbY_M,\bbY_M) + \sigma^2I)^{-1}\bbZ_{\sigma^2,M}\\
&=\frac{1}{N}\tilde K_{r_{\bbX_M}^T}^E(r^{\ast})\mbf{r}_{\bbX_M}^T(\tilde K_{\rhsfo_\bintkernel}(\bbY_M,\bbY_M) + \sigma^2I)^{-1}\bbZ_{\sigma^2,M}\\
&=\frac{1}{N} K_{r_{\bbX_M}^T}^E(r^{\ast})\mbf{r}_{\bbX_M}^T( K_{\rhsfo_\bintkernel}(\bbY_M,\bbY_M) + NML\lambda^E I)^{-1}\bbZ_{\sigma^2,M}\\
&= K_{\intkernele,\rhsfo_{\bintkernel}}(r^\ast,\bbX_M)( K_{\rhsfo_\bintkernel}(\bbY_M,\bbY_M) + NML\lambda^E I)^{-1}\bbZ_{\sigma^2,M}\\
&= \sum_{r \in r_{\bbX_M}} \hat c_r K_{r}^E, 
\end{align*} 
where $\hat c_r$ is defined in \eqref{solution1} and  we used  the identity $ K_{\intkernele,\rhsfo_{\bintkernel}}(r^\ast,\bbX_M)=\frac{1}{N} K_{r_{\bbX_M}^T}^E(r^{\ast})\mbf{r}_{\bbX_M}^T$ (also for $\tilde K$) in the proof. Similarly, we can get the posterior mean for $\bar{\intkernel}_M^A(r^{\ast})$ .
\end{proof}

\begin{lemma}\label{lemma: conditioning Gaussian}
Let $\bx$ and $\by$ be jointly Gaussian random vectors
 \begin{equation}
 \begin{bmatrix}
 \bx\\ \by
 \end{bmatrix}
 \sim \mathcal{N} (
 \begin{bmatrix}
 \mu_{\bx}\\ \mu_{\by}
 \end{bmatrix}
 , 
 \begin{bmatrix}
 A & C\\
 C^T & B
 \end{bmatrix}
 ),
\end{equation}
then the marginal distribution of $\bx$ and the conditional distribution of $\bx$ given $\by$ are
\begin{equation}
    \bx \sim \mathcal{N}(\mu_{\bx},A), \quad \textrm{and } \bx|\by \sim \mathcal{N}(\mu_{\bx} + CB^{-1}(\by - \mu_{\by}), A - CB^{-1}C^T).
\end{equation}
\end{lemma}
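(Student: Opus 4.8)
The plan is to prove both assertions by exploiting two elementary facts about Gaussians: affine images of a jointly Gaussian vector are again jointly Gaussian, and for jointly Gaussian vectors zero cross-covariance is equivalent to independence. The marginal claim $\bx \sim \mathcal{N}(\mu_{\bx}, A)$ is then immediate, since $\bx$ is the image of $(\bx, \by)$ under the coordinate projection, hence Gaussian with mean $\mu_{\bx}$ and covariance equal to the corresponding diagonal block $A$.

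For the conditional law the key idea is to decorrelate $\bx$ from $\by$. I would introduce the auxiliary variable $\bz := \bx - CB^{-1}(\by - \mu_{\by})$. Being an affine function of $(\bx, \by)$, the pair $(\bz, \by)$ is jointly Gaussian, and a short computation gives $\E[\bz] = \mu_{\bx}$. Using $\cov(\bx, \by) = C$, $\cov(\by, \by) = B$, and the symmetry of $B^{-1}$,
\begin{equation}
\cov(\bz, \by) = C - CB^{-1}B = 0,
\end{equation}
so $\bz$ and $\by$ are uncorrelated and therefore, being jointly Gaussian, independent. A similarly routine bilinear expansion yields $\cov(\bz, \bz) = A - CB^{-1}C^T$, whence $\bz \sim \mathcal{N}(\mu_{\bx}, A - CB^{-1}C^T)$.

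Conditioning now becomes transparent. Writing $\bx = \bz + CB^{-1}(\by - \mu_{\by})$ and conditioning on $\by$, the second summand is a fixed vector, while by the established independence the conditional law of $\bz$ given $\by$ coincides with its marginal law $\mathcal{N}(\mu_{\bx}, A - CB^{-1}C^T)$. Translating by the constant $CB^{-1}(\by - \mu_{\by})$ shifts only the mean, giving
\begin{equation}
\bx \mid \by \sim \mathcal{N}\bigl(\mu_{\bx} + CB^{-1}(\by - \mu_{\by}),\, A - CB^{-1}C^T\bigr),
\end{equation}
as claimed. I anticipate no genuine obstacle: the only point requiring mild care is the covariance bookkeeping—correctly tracking transposes and invoking the symmetry of $B$, hence of $B^{-1}$, when expanding $\cov(\bz, \bz)$—which is mechanical rather than conceptual. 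An alternative route through the Schur-complement factorization of the joint density is available but entails heavier block-matrix algebra, so the decorrelation argument is preferable for brevity.
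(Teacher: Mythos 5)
Your argument is correct. Note, however, that the paper does not actually prove this lemma: its ``proof'' is a one-line citation to Rasmussen--Williams, Appendix A, where the result is obtained via the partitioned-inverse/Schur-complement manipulation of the joint density (the ``heavier block-matrix algebra'' route you mention declining). Your decorrelation argument --- setting $\bz = \bx - CB^{-1}(\by - \mu_{\by})$, checking $\cov(\bz,\by)=0$, invoking the equivalence of uncorrelatedness and independence for jointly Gaussian vectors, and then reading off the conditional law by translation --- is a standard and fully rigorous alternative, and it has the advantage of being self-contained and density-free (it never needs $A$ or the joint covariance to be nonsingular, only $B$). The covariance bookkeeping you flag is indeed the only delicate point, and your stated value $\cov(\bz,\bz) = A - CB^{-1}C^{T}$ is correct: the bilinear expansion gives $A - CB^{-1}C^{T} - CB^{-1}C^{T} + CB^{-1}BB^{-1}C^{T} = A - CB^{-1}C^{T}$. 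The only implicit assumption worth stating explicitly is invertibility of $B$, which the lemma's conclusion already presupposes.
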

\begin{proof}
See, e.g. \cite{williams2006gaussian}, Appendix A.
\end{proof}

\end{appendices}

\bibliographystyle{unsrt}
\bibliography{references}

\end{document}